\newtheorem{lemma}{Lemma}
\newtheorem{remark}{Remark}
\DeclareMathOperator*{\argmax}{arg\,max}
\begin{document}

\title{Spectral regularization for \\ adversarially-robust representation learning}

\author[1]{Sheng Yang\thanks{\href{mailto:shengyang@g.harvard.edu}{shengyang@g.harvard.edu}}}
\author[1,2,3]{Jacob A. Zavatone-Veth\thanks{\href{mailto:jzavatoneveth@g.harvard.edu}{jzavatoneveth@g.harvard.edu}}}
\author[1,2,4]{Cengiz Pehlevan\thanks{\href{mailto:cpehlevan@seas.harvard.edu}{cpehlevan@seas.harvard.edu}}}

\affil[1]{John A. Paulson School of Engineering and Applied Sciences,  Harvard University, Cambridge, MA}
\affil[2]{Center for Brain Science,  Harvard University, Cambridge, MA}
\affil[3]{Department of Physics,  Harvard University, Cambridge, MA}
\affil[4]{Kempner Institute for the Study of Natural and Artificial Intelligence, Harvard University, Cambridge, MA}

\date{\today}

\maketitle

\begin{abstract}
    The vulnerability of neural network classifiers to adversarial attacks is a major obstacle to their deployment in safety-critical applications. Regularization of network parameters during training can be used to improve adversarial robustness and generalization performance. Usually, the network is regularized end-to-end, with parameters at all layers affected by regularization. However, in settings where learning representations is key, such as self-supervised learning (SSL), layers after the feature representation will be discarded when performing inference. For these models, regularizing up to the feature space is more suitable. To this end, we propose a new spectral regularizer for representation learning that encourages black-box adversarial robustness in downstream classification tasks. In supervised classification settings, we show empirically that this method is more effective in boosting test accuracy and robustness than previously-proposed methods that regularize all layers of the network. We then show that this method improves the adversarial robustness of classifiers using representations learned with self-supervised training or transferred from another classification task. In all, our work begins to unveil how representational structure affects adversarial robustness. 
\end{abstract}

\section{Introduction}

Neural networks are vulnerable to adversarial attacks \cite{biggio2013evasion,szegedy2013intriguing}. For classification tasks, an originally correctly-classified sample may be recognized incorrectly after adding a perturbation so small as to be imperceptible to the human eye \cite{goodfellow2014explaining,szegedy2013intriguing}. Even without access to the model parameters and only access to inputs and outputs, attackers can still fool the network \citep{liu2018adv,li2022review,kurakin2018adversarial}. Identifying effective training algorithms that guard against these black-box attacks has therefore garnered widespread attention in the last decade \citep{athalye2018obfuscated,madry2018towards,athalye2018obfuscated,gao2019convergence,xie2017mitigating,liu2018adv,cohen2019certified}. Developing training mechanisms that provably provide black-box adversarial robustness is thus crucial in the development of practically deployable machine learning. 

However, recent years have seen growing popularity of representation learning paradigms to which many previous adversarial defenses are not immediately applicable. For instance, in transfer learning and self-supervised learning, the final linear readout layer of a network is retrained when performing downstream inference tasks. Only layers up to the feature representations are kept from the pretraining stage \cite{zbontar2021barlow,chen2020simclr,radford2021clip,hernandez2021scaling}. As standard adversarial training methods for classification networks typically require access to the last layer linear heads \cite{hein2017formal,yoshida2017spectral}, they cannot be directly applied in representation learning settings.

In this paper, we seek an adversarial training methodology that can be applied to these representation-focused training paradigms, in which the representation is pre-trained without knowledge of downstream tasks. Our primary contributions are organized as follows: We begin by introducing relevant previous works in \Cref{sec:related}. Then, in \Cref{sec:method}, we derive a theoretical guarantee on black-box adversarial robustness based on feature representations. This bound inspires our proposed regularizer \Cref{eq:eig-ub}, which penalizes the top singular value of each hidden layer's weights (i.e. excluding the last layer's linear head). In \Cref{sec:experiments}, we empirically demonstrate the effectiveness of this regularizer. We first show that it enlarges adversarial distances even more than regularizers incorporating the last layer for end-to-end classification training. Then, we show that it improves both adversarial robustness and test accuracy in both self-supervised and transfer learning across a range of simulated and moderate-scale image classification tasks. Our results provide evidence that having robust representations is crucial to adversarial robustness.

\section{Related Works} \label{sec:related}

This section provides a brief overview of the adversarial robustness literature, with a focus on black-box defenses. Two major approaches to improve black-box adversarial robustness have been proposed: training with adversarial examples and regularization \cite{li2022review,bai2021recent}. To train with adversarial samples, in each update to the network parameters, the training set is augmented with adversarial examples. Since we have access to input gradients during model training, one can use white-box attacks to find these examples \cite{goodfellow2014explaining,kurakin2018adversarial,madry2018towards,athalye2018obfuscated,gao2019convergence,xie2017mitigating,liu2018adv,cohen2019certified}. By forcing the model to become robust to these perturbations during training, it becomes less susceptible to future adversarial attacks. 

However, training with adversarial examples is computationally expensive, and it does not guarantee that the final classifier will be adversarially robust because of its strong dependence on the training dataset \cite{hein2017formal}. A less data-dependent and more computation friendly method is to design regularizers that encourage robustness. By adding specially designed regularization terms, the model can escape bad, non-robust local minima during optimization \citep{liu2020bad}. For linear regression, logistic regression, and decision trees with known uncertainty set structure, an exact equivalence between robustness and regularization has been established \cite{bertsimas2019machine,bertsimas2022robust}. In more advanced applications, one can derive regularizers that promote raising lower bounds on the adversarial distance \cite{hein2017formal,bhagoji2019lower,dohmatob2020classifier}. Our analysis attempts to generalize this adversarial robustness notion further for newer classification architectures.

In either case, searching for an adversarial sample with minimal adversarial distance in a black-box fashion is highly nontrivial. This makes black-box robustness evaluation rather difficult in practice, meaning that it is hard to evaluate defenses conclusively. Many query-based heuristic methods have been proposed \cite{brendel2017decision,chen2020hopskipjumpattack,li2020qeba,liu2019geometry,yan2020policy,ma2021finding,reza2023cgba}, which rely on iterative search for the closest point on the decision boundary of a trained model to a given sample. In our analysis we employ one such method, the tangent attack (TA) \cite{ma2021finding}, to evaluate model adversarial robustness.

\section{Spectral Regularization for Adversarial Robustness}\label{sec:method}

In this section, we build our spectral regularization on representations step-by-step. We first introduce the concept of adversarial distance and adversarial robustness; then we derive a lower bound on adversarial distance based on feature representations inspired by previous work of \citet{hein2017formal}; lastly, we derive the proposed regularizer.

\subsection{Preliminaries}
We first introduce the notion of black-box adversarial robustness. Consider the problem of assorting $n$-dimensional data into $K$ given classes. For a classification network $F(x; \Theta): \mathbb{R}^n \times \mathbb{R}^{|\Theta|} \rightarrow \mathbb{R}^K$ with trainable parameters $\Theta$, the class prediction is given by $\hat{y} = \argmax_{k \in [K]} \; F_k(x; \Theta)$, which is correct if it agrees with the true class label $y$. In the following discussions we drop the $\Theta$ notation when there is no ambiguity. Here we assume the output logits are distinct. 

\vspace{1em}
\noindent
\textbf{Adversarial Perturbation.} Consider a perturbation in the input space $\delta_x \in \mathbb{R}^n$ to a correctly-classified sample $x$. We say $\delta_x$ is an adversarial perturbation to $x$ if it swaps the predicted class label, i.e., there is a class index $k \neq y$ such that $ F_k(x + \delta_x) > F_y(x + \delta_x)$. 

\vspace{1em}
\noindent
\textbf{Adversarial Distance.} The adversarial distance $\Delta_x$ is defined as the minimal size of an adversarial perturbation for the datum $x$: $\Delta_x = \min_{\{\delta_x \in \mathbb{R}^{n}: \argmax_{k \in [K]}\; F_k(x + \delta_x) \ne y\}} \Vert \delta_x\Vert_2$. In this paper we focus on $l_2$ norm but the analysis can be generalized to other norms as well.

\vspace{1em}
\noindent
\textbf{Adversarial Robustness.} A network is adversarially robust to input perturbation with respect to sample $x$ if $\Delta_x$ is large, and is globally adversarial robust if $\Delta_x$ is large for all correctly classified $x$. 

\vspace{1em}
Note that we do not consider perturbing an incorrectly predicted sample. 
With this objective, past works have observed a tradeoff between adversarial robustness and classification accuracy \citep{goodfellow2014explaining,szegedy2013intriguing,zhang2019theoretically,raghunathan2020understanding}. We demonstrate this tradeoff through a simple binary classification task in \Cref{fig:acc_adv_tradeoff}. The third scenario is the ideal case, where we reach a globally adversarially robust state. This example motivates us to search for a solution that has large $\Delta_x$ for all correctly predicted samples $x$. Given a desired accuracy level $\rho \in [0, 1]$, define the set of parameters $S_{\rho, D, F}$ for the network $F$ that classifies a dataset $D = \{(x_i, y_i)\}_{i=1}^m, (x_i, y_i) \in \mathbb{R}^n \times [K]$ with accuracy at least $\rho$. Then, we are interested in finding the point within $S_{\rho,D,F}$ that maximizes the minimum adversarial distance: 
\begin{equation}
    \Theta^{(\text{robust})} = \underset{\Theta \in S_{\rho, D, F}}{\argmax} \min_{i \in [m]} \Delta_{x_i} .
    \label{eq:maxmin}
\end{equation}

In practice, without access to the test set, we hope that having larger adversarial distances in the training set translates to larger distances in the test set, which we validate empirically. The rest of this section thus gives a lower bound on $\Delta_x$ using spectral analysis, and derives a regularizer that encourages a neural network to look for parameters that have large such lower bounds. 

\begin{figure}[t]
    \centering
    \includegraphics[width=0.9\linewidth]{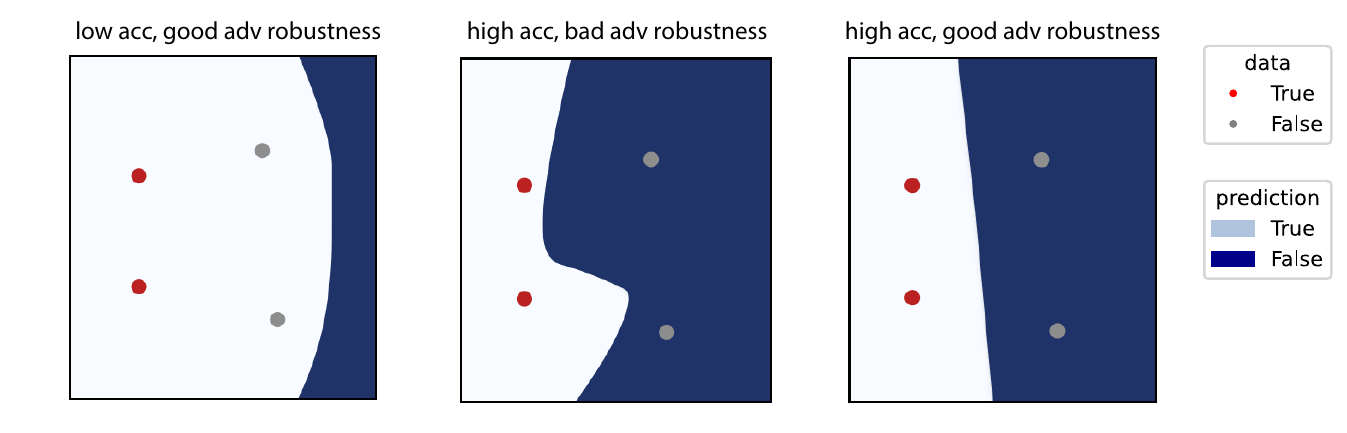}
    \setlength{\belowcaptionskip}{-1em}
    \caption{Tradeoff between prediction accuracy and adversarial robustness in a simple binary classification task. The first scenario has 50\% accuracy, but the adversarial distances for the correct samples are large. The second case correctly classifiers every sample, but does not have large adversarial distances for each. The last scenario is the ideal: a large-margin solution.}
    \label{fig:acc_adv_tradeoff}
\end{figure}

\subsection{A lower bound on adversarial distance}

Given a classification network $F(x): \mathbb{R}^n \rightarrow \mathbb{R}^K$ of $L$ layers, we can decompose it into a feature map $\Phi(x): \mathbb{R}^n \rightarrow \mathbb{R}^d$, where $d$ is the feature dimension, and a linear decision head $f(z): \mathbb{R}^d \rightarrow \mathbb{R}^K$: 
\begin{equation}
    F = \texttt{SoftMax} \circ f \circ \Phi
    \label{eq:decomp}
\end{equation}

With this decomposition, we present our central lemma, which is an extension of the main result of \citet{hein2017formal}. The objective of our analysis is to isolate the influence of the readout weights, as we focus on settings where the representation is learned in a pretraining phase and the readouts are trained separately to perform downstream tasks. 

\begin{lemma}\label{lemma:bound}
    Suppose a neural network classifier $F: \mathbb{R}^n \rightarrow \mathbb{R}^K$ is continuously differentiable and can be decomposed as \Cref{eq:decomp}, with $f(z) = W^{(L)}z$ the last layer linear transformation without bias, and $W^{(L)}_c$ the $c$-th row of $W^{(L)}$, treated as a row vector. Suppose $x \in \mathbb{R}^n$ is a sample input belonging to class $c$ and $\delta_x \in \mathbb{R}^n$ an adversarial perturbation to $x$ that results in an incorrect prediction of class $k$. Fixing a perturbation radius `budget' $R>0$ such that $\Vert\delta_x \Vert_2 \leqslant R$, we have 
    \begin{equation}
         \Delta_x  \geqslant \theta_x \Vert \Phi(x) \Vert_2 \cdot \frac{1}{\max_{y \in B_2(x, R)} \Vert \nabla \Phi(y)\Vert_2}\label{eq:bound_rewrite}
    \end{equation}
    where $\theta_{x} = \frac{(W^{(L)}_c - W^{(L)}_k ) \Phi(x)}{\Vert W^{(L)}_c - W^{(L)}_k\Vert_2\Vert \Phi(x)\Vert_2}$ is the cosine of the angle between the feature representation of $x$ and the last layer "confidence weights" $W^{(L)}_c - W^{(L)}_k $ for class $c$ relative to class $k$. Here, $\nabla\Phi(\cdot)$ is the Jacobian of the feature map with respect to the input. 
\end{lemma}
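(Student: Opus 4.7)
The plan is to reduce the statement to a one-dimensional mean value inequality on the class-difference function, then split the resulting gradient norm into a readout factor and a feature-Jacobian factor via submultiplicativity. The readout factor will cancel against the numerator once we rewrite the inner product in terms of the cosine $\theta_x$, leaving exactly the claimed bound.

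Concretely, I would first use the decomposition $F=\texttt{SoftMax}\circ f\circ\Phi$ with $f(z)=W^{(L)}z$ to observe that the prediction depends only on the pre-softmax logits $W^{(L)}_k\Phi(\cdot)$. Since $x$ is classified as $c$ and $x+\delta_x$ as $k$, the scalar function
\begin{equation*}
    g(y) := \bigl(W^{(L)}_c - W^{(L)}_k\bigr)\,\Phi(y)
\end{equation*}
satisfies $g(x)>0$ and $g(x+\delta_x)\leqslant 0$. Applying the mean value theorem to $t\mapsto g(x+t\delta_x)$ on $[0,1]$, there exists $\xi$ on the segment from $x$ to $x+\delta_x$ with $g(x+\delta_x)-g(x) = \nabla g(\xi)^{\top}\delta_x$. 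Rearranging and applying Cauchy–Schwarz gives
\begin{equation*}
    g(x) \;\leqslant\; g(x) - g(x+\delta_x) \;=\; -\nabla g(\xi)^{\top}\delta_x \;\leqslant\; \Vert\nabla g(\xi)\Vert_2 \,\Vert\delta_x\Vert_2.
\end{equation*}

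Next I would compute $\nabla g(\xi) = \nabla\Phi(\xi)^{\top}\bigl(W^{(L)}_c - W^{(L)}_k\bigr)^{\top}$ and bound, via the operator-norm submultiplicativity,
\begin{equation*}
    \Vert\nabla g(\xi)\Vert_2 \;\leqslant\; \Vert W^{(L)}_c - W^{(L)}_k\Vert_2 \,\Vert\nabla\Phi(\xi)\Vert_2 .
\end{equation*}
Inserting this and rewriting $g(x) = \theta_x\,\Vert W^{(L)}_c - W^{(L)}_k\Vert_2\,\Vert\Phi(x)\Vert_2$ by definition of $\theta_x$, the readout-norm factors cancel:
\begin{equation*}
    \Vert\delta_x\Vert_2 \;\geqslant\; \frac{\theta_x\,\Vert\Phi(x)\Vert_2}{\Vert\nabla\Phi(\xi)\Vert_2}.
\end{equation*}
Since $\xi$ lies on the segment from $x$ to $x+\delta_x$ and $\Vert\delta_x\Vert_2\leqslant R$, we have $\xi\in B_2(x,R)$, so I would finally replace $\Vert\nabla\Phi(\xi)\Vert_2$ by the uniform bound $\max_{y\in B_2(x,R)}\Vert\nabla\Phi(y)\Vert_2$. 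The resulting inequality holds for every adversarial $\delta_x$ within the $R$-budget, hence in particular for an approximate minimizer, giving \Cref{eq:bound_rewrite}.

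The main obstacle I anticipate is not analytic but bookkeeping: one must be careful that $W^{(L)}_c$ is a row vector and $\nabla\Phi$ is the $d\times n$ Jacobian so that $\nabla g$ is an $n$-vector, and that the operator norm used on $\nabla\Phi$ is the $\ell_2\to\ell_2$ norm consistent with the norm taken on the readout difference. A second minor subtlety is justifying the mean value theorem in vector form: since $g$ is a \emph{scalar}-valued composition of the continuously differentiable $\Phi$ with a linear map, the standard single-variable MVT applied to $t\mapsto g(x+t\delta_x)$ on $[0,1]$ suffices, which avoids the need for any multi-variable MVT pathology.
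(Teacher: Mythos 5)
Your proposal is correct and follows essentially the same route as the paper: reduce to the logit-difference scalar $h_c - h_k$, expand to first order along the segment, apply Cauchy--Schwarz and the chain rule to peel off the readout factor, and then take a maximum over $B_2(x,R)$. The only cosmetic difference is that you invoke the scalar mean value theorem to produce a single point $\xi$ on the segment, whereas the paper uses the integral (FTC) form of the first-order Taylor expansion and bounds the integrand uniformly; both yield the identical estimate, and the paper additionally carries out the argument for general $(p,q)$ via H\"older before specializing to $p=q=2$.
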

\begin{proof} See \Cref{proof:lemma1} for the proof of this lemma, extended to general $l_p$ norms. 
\end{proof}

In this lower bound, the key term of interest from a feature representation perspective is the norm of the feature map's gradient $\Vert\nabla \Phi(\cdot)\Vert_2$ in the denominator. As it depends on the readout weights, the angle $\theta_x$ cannot be controlled during the pretraining phase when features are learned. When a readout is subsequently trained to perform a downstream task, $\theta_x$ is determined by the algorithm chosen at that stage. We discuss this point, and comment on the dynamics of $\theta_x$ in the supervised setting where the readout and representation are learned simultaneously, in \Cref{sup:last-layer}. We also remark that we make the choice to regularize the gradient norm directly rather than explicitly also attempting to control the scale of the representation $\Vert \Phi(\cdot) \Vert^2$. This choice is motivated by the fact that a global re-scaling $\Phi(x) \mapsto c\ \Phi(x)$ could be compensated by downstream readout, while a spatially-anisotropic rescaling will also change the gradient norm. 

This lower bound suggests that we should penalize $\max_{y \in B_2(x, R)} \Vert\nabla \Phi(y)\Vert_2$ during representation learning. This contrasts with previous works on supervised learning that penalize the input-output Jacobian $\nabla F$ \cite{hein2017formal,yoshida2017spectral,hoffman2019robust}. However, to make regularization computationally efficient, we must make several relaxations of this objective. First, we consider the norm of the gradient at $x$ only, since searching over the whole of the ball is computationally expensive and not easily parallelizable. Noting that the 2-norm of a matrix is its maximum singular value, we have
\begin{equation}
    \Vert \nabla \Phi(x)\Vert_2^2 = \lambda_{\max} (\nabla \Phi(x) \nabla \Phi(x)^T) = \lambda_{\max} (\nabla \Phi(x)^T \nabla \Phi(x)) =: \lambda_{\text{max}}(g) ,\label{eq:lambda_max_g}
\end{equation}
where we have defined $g:= \nabla \Phi(x)^T \nabla \Phi(x)$. Here, $\lambda_{\text{max}}(\cdot)$ denotes the maximum eigenvalue of a matrix. A robust classification network thus must have small $\lambda_{\max}(g)$. 

\begin{remark} \label{rmk:metric}
    \textbf{(Metric Tensor)} If we consider $\Phi$ as the feature map and pull the Euclidean metric from feature space back to input space, the induced metric tensor is exactly $g$ as defined above. The behavior of the pull-back metrics induced by trained deep network feature maps has been studied by \citet{zavatone2023neural}, focusing on enlargement of the volume element $\sqrt{\det g}$ near decision boundaries. The conclusion in our work that increasing $\lambda_{\max} (g)$, the dominant term in $\sqrt{\det g}$, encourages adversarial robustness, adds theoretical context to the empirical observations of \cite{zavatone2023neural}. 
\end{remark}

In principle, $\lambda_{\max}(g)$ is differentiable so long as the spectral gap is nonzero, and so can be na\"ively added to the loss term as a regularizer. However, computing the resulting updates using automatic differentiation suffers from high runtime and memory consumption \citep{zavatone2023neural}. To alleviate such costs, we perform a series of relaxations. Assuming that the activation function has derivatives (almost everywhere) bounded by 1, we can bound $\lambda_\text{max}(g)$ by the product of the largest singular values squared of the hidden layer weights $W^{(l)}$, $\sigma_{\text{max}}^2(W^{(l)})$, where $l \in [L-1]$ for $L$ the the total number of layers. This trick was used before by \citet{yoshida2017spectral}. The resulting regularizer, which we henceforth refer to as \texttt{rep-spectral} because it is a \texttt{spectral} regularizer on \texttt{rep}resentations, is
\begin{equation} \textstyle
    L^{(\texttt{rep-spectral})}(\Theta) = \frac{\gamma}{2} \sum_{l=1}^{L - 1} \sigma_{\max}^2(W^{(l)}), \label{eq:eig-ub} . 
\end{equation}
where $\gamma \geqslant 0$ is the regularization strength. We present the full derivation of this regularizer in \Cref{sup:feat-reg}. One may use log on the singular values as directly regularizing the product of singular values, but this would incur additional computational costs and result in a data-dependent scaling factors in gradient computation that could be heuristically absorbed in $\gamma$. The difference with \citep{yoshida2017spectral} is that we drop the penalization of the last layer. We refer to their proposed regularizer as \texttt{ll-spectral} to emphasize that it penalizes the top singular value of the readout weights, in contrast to our method. Concretely, their regularizer is
\begin{equation} \textstyle
    L^{(\texttt{ll-spectral})}(\Theta) = \frac{\gamma}{2} \sum_{l=1}^{L} \sigma_{\max}^2(W^{(l)}); 
\end{equation}
we will compare the effects of these two regularizers on supervised pretraining in \Cref{sec:experiments}. 

To efficiently compute the parameter update based on this regularization, note that the gradient of the top singular value can be computed analytically under the assumption that it is unique as ${\partial \sigma^2_{\max}(W)}/{\partial W} = 2 \sigma_{\max} u_{\max} v_{\max}^T$, where $u_{\max}, v_{\max}$ are the left and right singular vectors for $\sigma_{\max}$. One can then employ power iterations with singular direction updates amortized across parameter updates \citep{yoshida2017spectral} instead of automatic differentiation to efficiently adjust the parameters (e.g. perform 1 power iteration every 10 parameter updates when learning rate is small, see \Cref{sup:power-iteration}).

\begin{remark} \textbf{(Extension to Convolutional Neural Networks)}
The assumption that $F$ consists of only fully-connected layers can be relaxed. For deep convolutional neural networks, we can compute the maximum singular values of matrix representations of the convolutional layers. Concretely, suppose we are given an input image $X \in \mathbb{R}^{c_{\text{in}} \times n \times n}$ and a multi-channel filter $\mathcal{K} \in \mathbb{R}^{c_{\text{out}} \times c_{\text{in}} \times k \times k}$. Then, there exists a matrix $\mathcal{\tilde{K}} \in \mathbb{R}^{c_{\text{out}}n_{\text{out}}^2 \times c_\text{in}n^2}$ such that a periodic 2D convolution can be linearized as $\operatorname{Vec}(\texttt{Conv2D}(X)) = \mathcal{\tilde{K}} \; \operatorname{Vec}(X)$, where $\operatorname{Vec}(.)$ is a row-major flattening of the input image $X$. We can then extract the top eigenvalue of $\tilde{K}$ for regularization. See details in \Cref{sup:linearize_cnn} and in \cite{sedghi2018singular,senderovich2022towards} for the construction of $\tilde{K}$ and algorithms to extract its top singular value both exactly and iteratively.
\end{remark}

\section{Experiments}\label{sec:experiments}

We now evaluate our proposed \texttt{rep-spectral} regularizer based on the accuracy of classification and the adversarial distance of selected samples at the end of training. For benchmarking purposes and to gain intuition for its effect on pretraining, we first compare it against the \texttt{ll-spectral} regularizer in fully supervised settings where there is no separate training. We then turn to the pretraining settings which motivate our work. In all cases, we regard a regularizer to be successful if we observe an increase in average adversarial distances without a significant drop in test accuracy. We will see that there are some cases where the test accuracy in fact improves with regularization.  

To evaluate black-box adversarial robustness, we use the Tangent Attack (TA) method \cite{ma2021finding}. As introduced in \Cref{sec:related}, the TA method seeks an approximation to $\delta_x$, an adversarial perturbation to a correctly classified sample $x$ that is as small as possible in $l_2$ norm. The TA algorithm uses 2D analytic geometry to iteratively locate the point nearest to $x$ that is assigned a different label by the network (i.e., we focus on non-targeted attacks). We provide a more detailed discussion of the TA method in \Cref{sup:ta}, and document training and evaluation details in \Cref{sup:exp_details}. Code to reproduce all experiments is available on GitHub.\footnote{\url{https://github.com/Pehlevan-Group/rep-spectral}}

\subsection{Shallow MLPs trained on a toy dataset}

\begin{wrapfigure}{R}{0.5\linewidth}
    \centering
    \includegraphics[width=\linewidth]{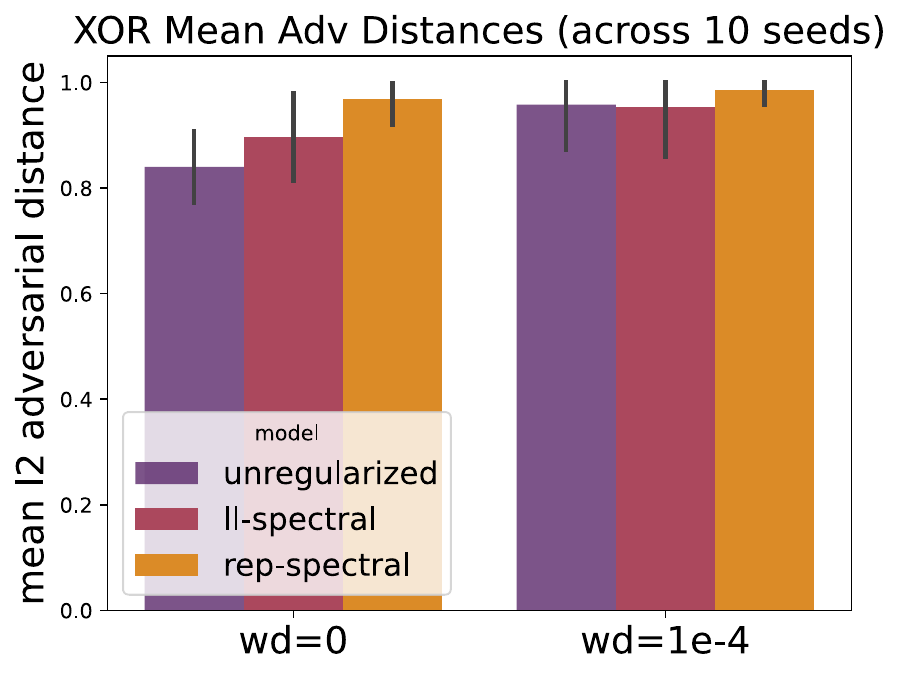}
    \caption{Average $\Delta_x$ found by TA across 10 different seeds for models trained on XOR with (\emph{right}) and without (\emph{left}) weight decay, and with the inclusion of our proposed \texttt{rep-spectral} regularizer, the \texttt{ll-spectral} regularizer that includes all layers, or no additional spectral regularizer. Error bars show $\pm 1$ standard deviation over seeds. }
    \label{fig:xor_clean_seed}
\end{wrapfigure}

To gain intuition for how our regularizer shapes representations, we first apply it to a single-hidden-layer MLP trained on a toy 2D XOR task. Though this task is of course unrealistic, it is potentially useful because we can directly visualize the input space. Given 4 data points at $[\pm 1, \pm 1] \in \mathbb{R}^2$, we use a network with 8 hidden units and GELU nonlinearity. Though our regularizer is motivated by the independent-pretraining, we find that even in this fully-supervised setting we obtain improved adversarial robustness, even compared to training using the \texttt{ll-spectral} regularizer that penalizes the last layer (\Cref{fig:xor_clean_seed}). More training details and results can be found in \Cref{sup:shallow}.

How does this robustness arise? As the input space for the XOR task is two-dimensional, we can directly visualize it. Examining the decision boundaries in \Cref{fig:xor_clean}, we see that our \texttt{rep-spectral} regularizer results in increased classification margin, while the \texttt{ll-spectral} regularizer does not substantially affect the decision boundary. This difference reflects the fact that \texttt{ll-spectral} mostly just penalizes the readout layer norm and fails to control the feature layer norm (\Cref{fig:weight_norm}). To gain a more detailed understanding of how the representations differ, we visualize the volume element corresponding to the metric induced by the feature map, as described in \Cref{rmk:metric}. In \Cref{fig:xor_volume}, we see that the \texttt{rep-spectral} regularizer noticeably increases the areas of small volume element (and thus low representational sensitivity) near the class centers relative to the unregularized and \texttt{ll-spectral} models.

\begin{figure}[t]
    \centering
    \begin{subfigure}{1.0\textwidth}
        \centering
        \caption{}
        \label{fig:xor_clean}
        \includegraphics[width=0.85\linewidth]{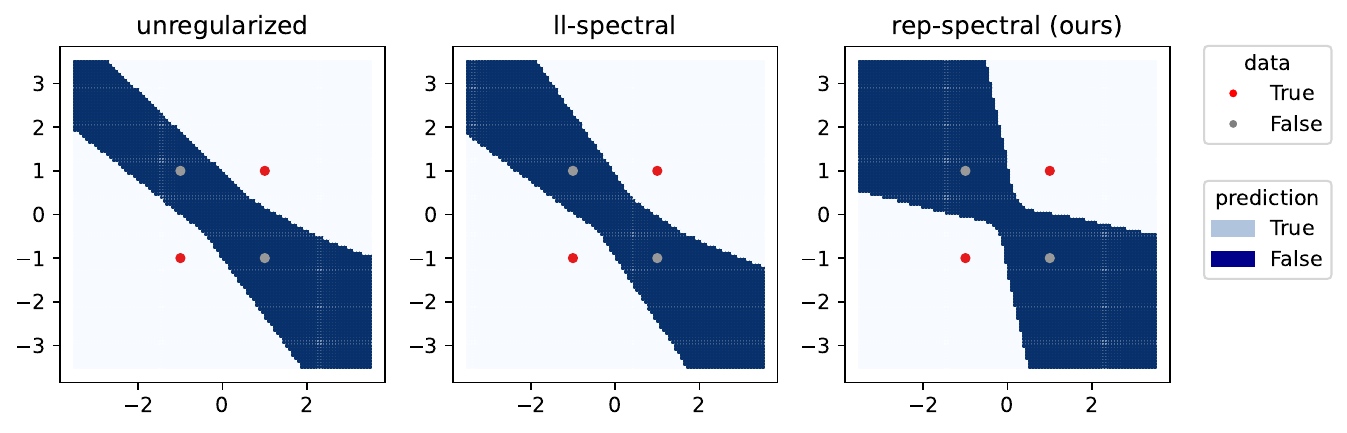}
    \end{subfigure}

     \begin{subfigure}{1.0\textwidth}
        \centering
        \caption{}
        \label{fig:xor_volume}
        \includegraphics[width=1\linewidth]{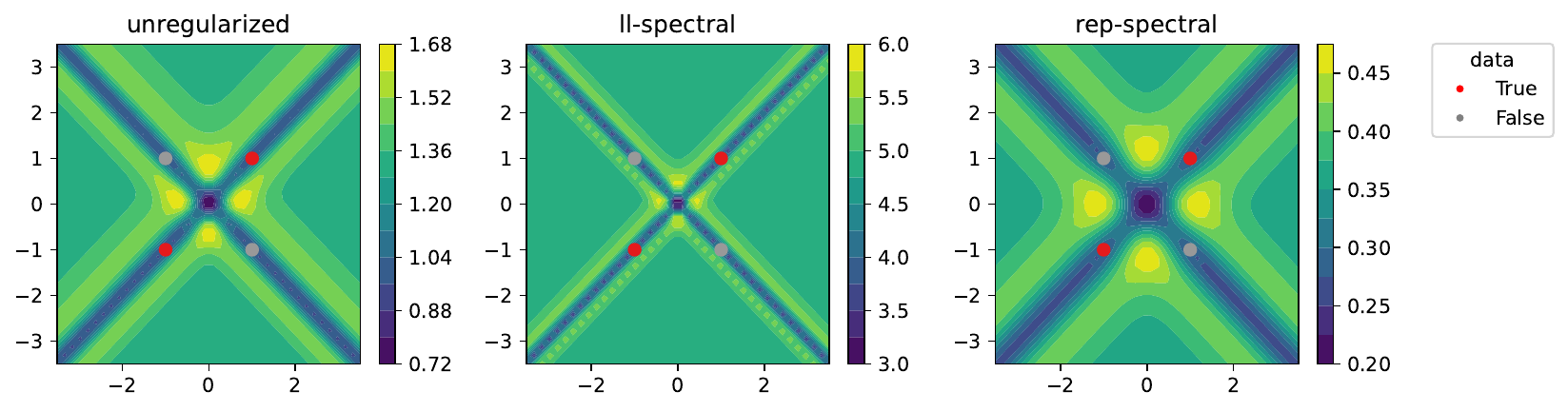}
    \end{subfigure}
    
    \caption{Effect of spectral regularization on the representations of MLPs trained on the toy XOR task. (a). Direct visualization of the decision boundaries of models trained using no regularization (\emph{left}), \texttt{ll-spectral} regularization (\emph{middle}), and our proposed \texttt{rep-spectral} regularization (\emph{right}). The four training points are shown, colored according to their class. (b). Visualization of the volume element (see \Cref{rmk:metric}), which measures the sensitivity of the representation to small variations in the input, for models trained with each of these three methods. For details, see \Cref{sup:shallow}.}
\end{figure}

\subsection{Shallow MLPs trained on MNIST images}

We next apply our regularizer to single-hidden-layer MLPs trained to classify MNIST images \cite{lecun2010mnist}, with flattened input of 784 dimensions and 2000 hidden units. We sample 1000 testing images and apply the TA algorithm to detect the minimum $l_2$ perturbation. In the fully supervised setting, though regularization slightly decreases test accuracy, we observe that excluding the last layer produces a smaller loss of accuracy and a larger increase in adversarial distance compared to regularizing all layers (\Cref{fig:mnist_adv}). We also visualize selected test images and their adversarial perturbations in \Cref{fig:mnist-perturbation} and observe an overall increase in scale of perturbation to make adversarial predictions compared to unregularized and spectral method. More training details and results are in \Cref{sup:shallow}.

To test whether our regularizer leads to more robust representations, we discard the linear head and perform classification using multilogistic regression on the fixed feature representations (see \Cref{sup:shallow} for details). Surprisingly, in this setting our regularizer does not hurt test accuracy compared to the unregularized model (\Cref{fig:mnist_adv_newhead}). Moreover, the resulting model is more adversarially robust than when supervised pretraining of the representation is performed without regularization or with \texttt{ll-spectral} regularization (\Cref{fig:mnist_adv_newhead}). This suggests that, on a simple image classification task, our method achieves its stated goal: to enable representation learning that gives good adversarial robustness when an unregularized readout is trained to perform downstream tasks.

\begin{figure}[t]
    \centering
    \begin{subfigure}{0.48\textwidth}
        \centering
        \includegraphics[height=180pt]{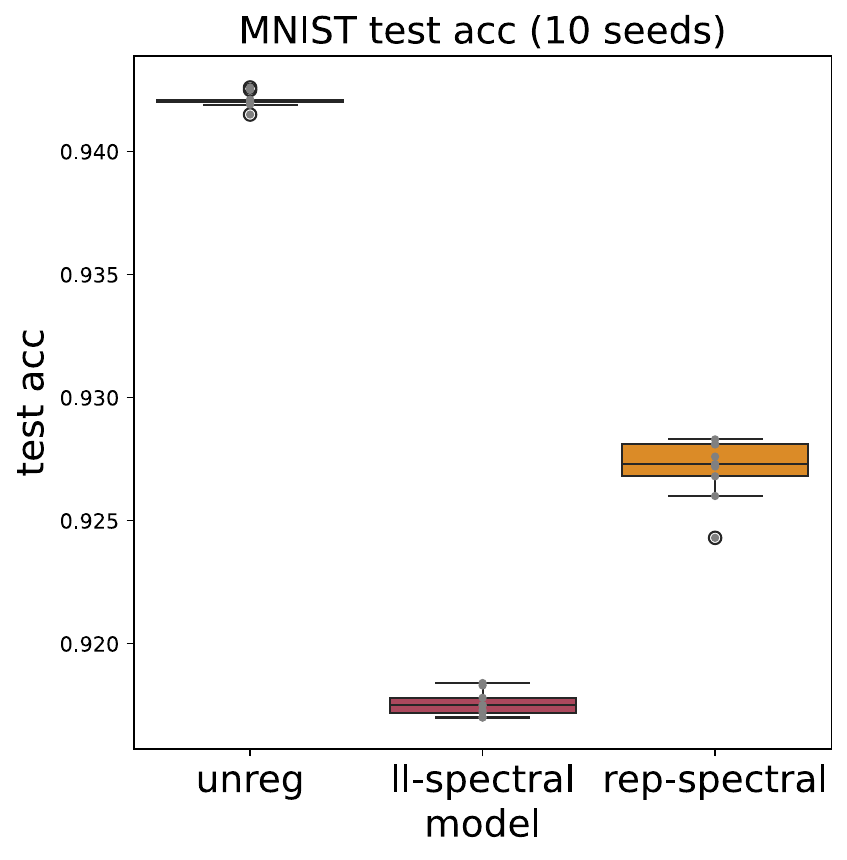}
    \end{subfigure} \hfill 
    \begin{subfigure}{0.48\textwidth}
        \centering
        \includegraphics[height=180pt]{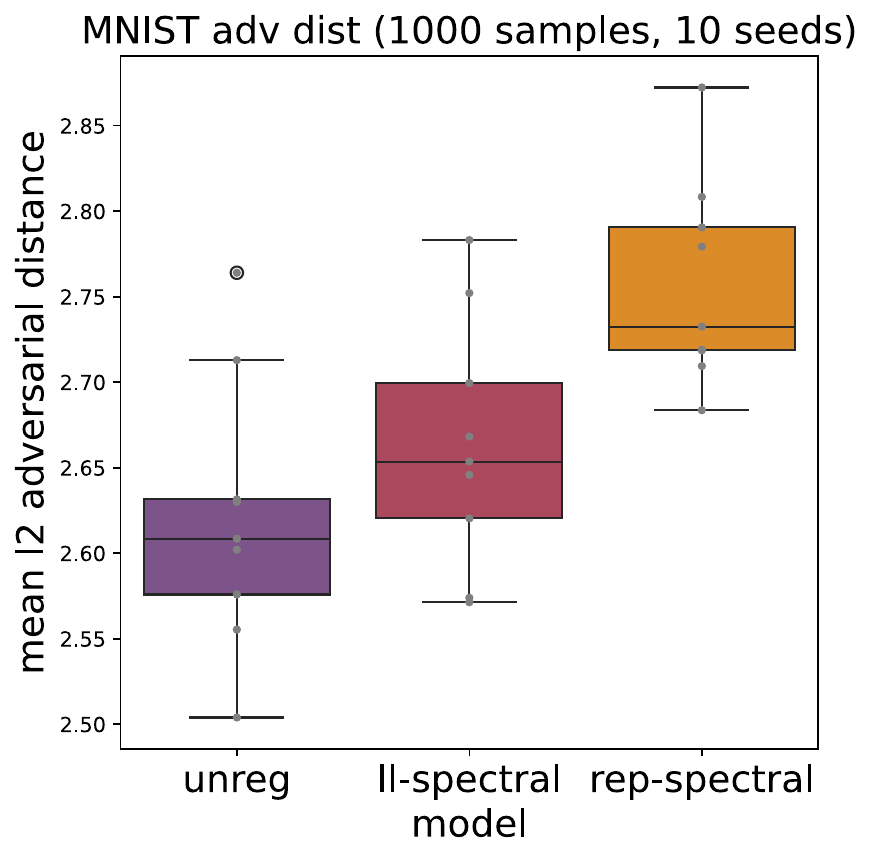}
    \end{subfigure}
    \caption{Spectral regularization during supervised training of a single-hidden-layer MLP on MNIST images improves robustness. For each regularization method, we show text accuracy (\emph{left}) and adversarial distance averaged across 1000 samples (\emph{right}) across 10 random seeds. Gray dots indicate the results for individual seeds, while boxplots show the mean and quartiles. }
    \label{fig:mnist_adv}
\end{figure}

\begin{figure}[t]
    \centering
    \begin{subfigure}{0.47\textwidth}
        \centering
        \includegraphics[height=180pt]{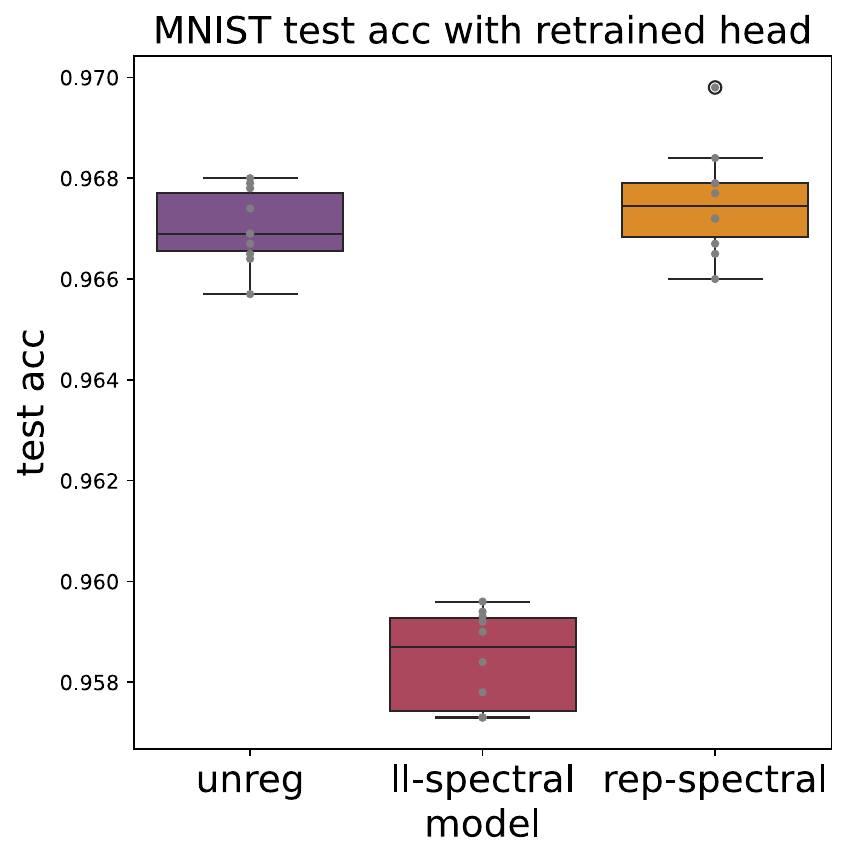}
    \end{subfigure} \hfill
    \begin{subfigure}{0.52\textwidth}
        \centering
        \includegraphics[height=180pt]{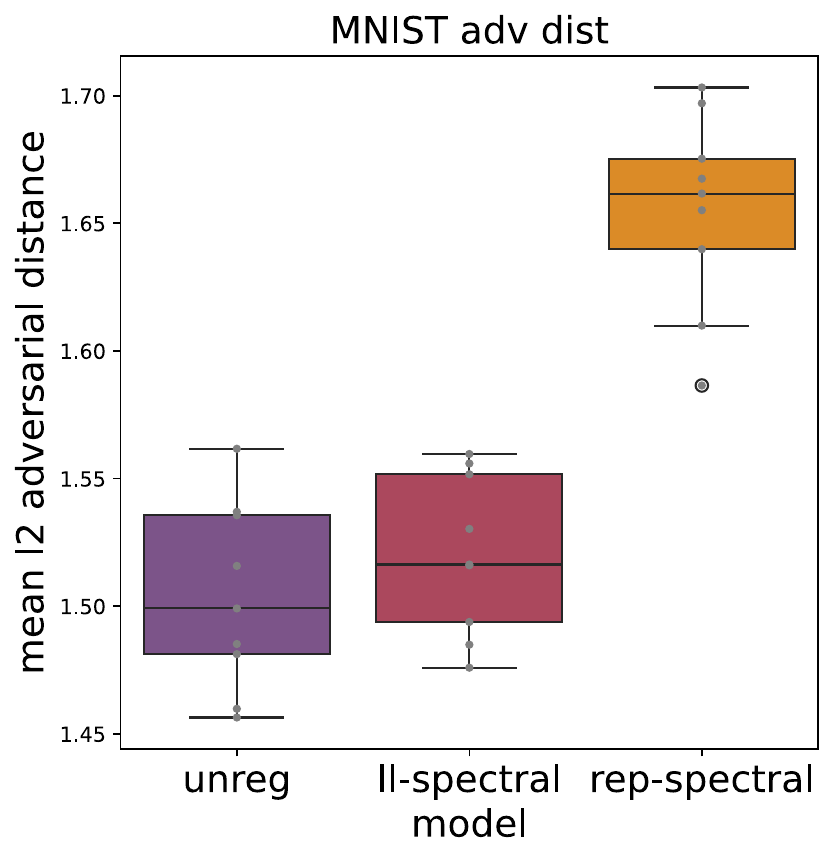}
    \end{subfigure}
    \caption{Re-training a readout from the hidden representation of an MLP pretrained on MNIST. For each regularization method, we show text accuracy (\emph{left}) and adversarial distance averaged across 1000 samples (\emph{right}) across 10 random seeds. Gray dots indicate the results for individual seeds, while boxplots show the mean and quartiles.}    
    \label{fig:mnist_adv_newhead}
\end{figure}

\subsection{Deep Convolutional Networks: ResNet}

As a final example in the supervised setting, we consider a deep convolutional architecture. We train ResNet18s \citep{he2016deep} to classify CIFAR-10 images \cite{krizhevsky2009cifar} using different adversarial regularization methods (see \Cref{sup:deep_conv} for training details). We plot the distribution of adversarial distances of 1000 randomly selected test samples in \Cref{fig:resnet18_kde} and test accuracy and average l2 adversarial distance in \Cref{fig:resnet18_cifar10}. As in \citet{yoshida2017spectral}'s original proposal of \texttt{ll-spectral} regularization that incorporates the last layer, regularization provides a noticeable boost in test accuracy and robustness. Our \texttt{rep-spectral} method performs quite comparably to \texttt{ll-spectral}, with similar gains in test accuracy and perhaps marginally larger improvement in robustness. Interestingly, we see from the distributions of adversarial distance in \Cref{fig:resnet18_kde} that the gain in robustness over the unregularized model is mostly by enlarging the distances of samples that are originally close to the decision boundaries, as reflected by the suppression of the bump in the left tail of the density. See \Cref{fig:resnet18_dist_prop} for quantifying the distribution shift across different random seeds. This phenomenon is qualitatively consistent with our observations from the toy XOR task in \Cref{fig:xor_volume}. 

\begin{figure}[t]
    \centering
    \begin{subfigure}{0.42\textwidth}
        \centering
        \caption{}
        \label{fig:resnet18_kde}
        \includegraphics[width=\linewidth]{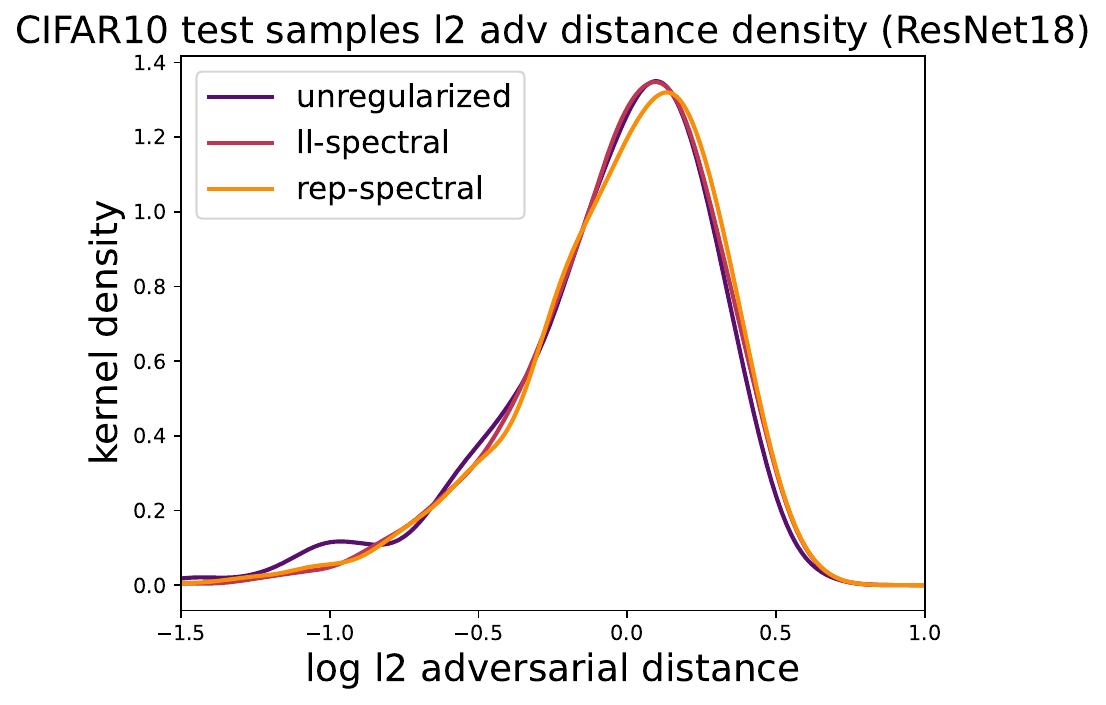}
    \end{subfigure} \begin{subfigure}{0.55\textwidth}
        \centering
        \caption{}
        \label{fig:resnet18_cifar10}
        \includegraphics[width=\linewidth]{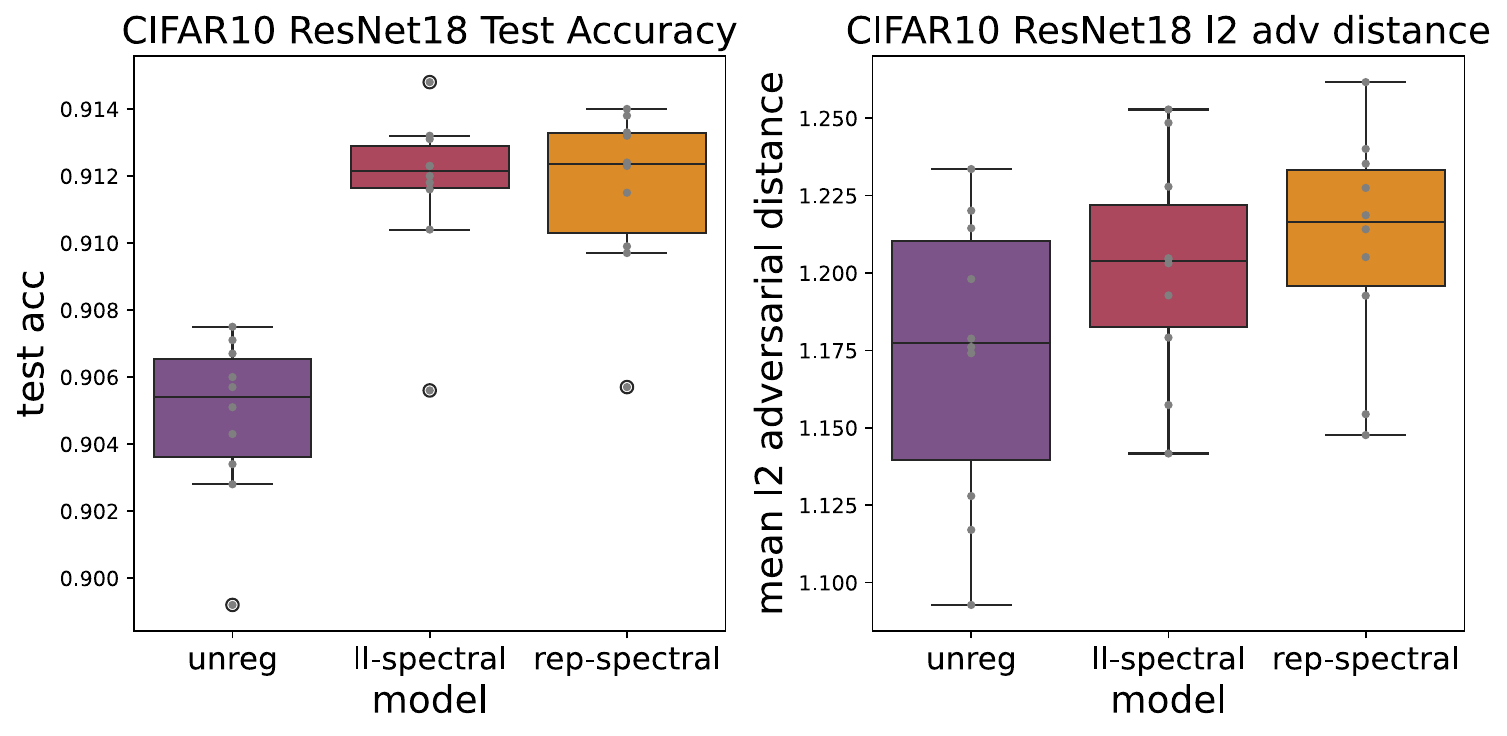}
    \end{subfigure}
    \caption{Spectral regularization during supervised training of ResNet18s to classify CIFAR-10 images improves accuracy and robustness. (a). Kernel density estimates of the distributions of adversarial distances across 1000 random samples for unregularized, \texttt{ll-spectral}, and \texttt{rep-spectral} models. Note that the distribution from \texttt{rep-spectral} shifts to the right of the distributions from unregularized and \texttt{ll-spectral} regularized models. (b). Test accuracy (\emph{left}) and mean adversarial distance (\emph{right}) for these models across 10 random seeds. Gray dots indicate the results for individual seeds, while boxplots show the mean and quartiles.}
\end{figure}

\subsection{Self-Supervised Learning: Barlow-Twins}

We now turn at last to self-supervised pretraining, which provided the motivation for our work. As noted before, self-supervised learning (SSL) focuses on obtaining meaningful representations for various downstream applications without explicit label information. As all components of the network after the feature layer will be discarded in these downstream tasks, we naturally only regularize up to the feature layer. Here, we consider using the popular contrastive SSL method BarlowTwins \cite{zbontar2021barlow} to learn representations of CIFAR-10, with a ResNet18 backbone. To evaluate accuracy and adversarial robustness on a downstream classification task, we train a multi-class logistic regression readout from the feature space. Full training details are provided in \Cref{sup:ssl}. As we saw in supervised settings, pretraining with our \texttt{rep-spectral} regularizer yields a slight improvement in test accuracy and adversarial robustness (\Cref{fig:ssl}). Though the gain in average adversarial robustness is not large (\Cref{fig:cifar10_barlow_resnet18}), by examining the distribution of adversarial distances we see that the robustness of the most vulnerable examples is improved (\Cref{fig:barlow_kde}). \looseness=-1

\begin{figure}[t]
    \centering
    \begin{subfigure}{0.41\textwidth}
        \centering
        \caption{}
        \label{fig:barlow_kde}
        \includegraphics[width=\linewidth]{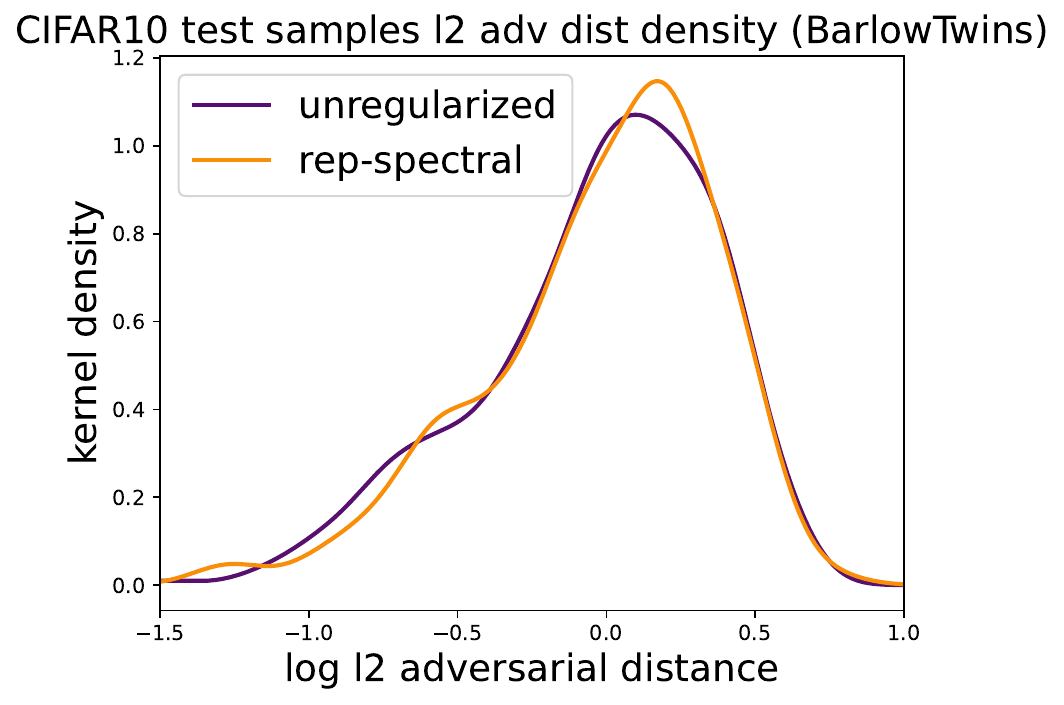}
    \end{subfigure} \begin{subfigure}{0.57\textwidth}
        \centering
        \caption{}
        \label{fig:cifar10_barlow_resnet18}
        \includegraphics[width=\linewidth]{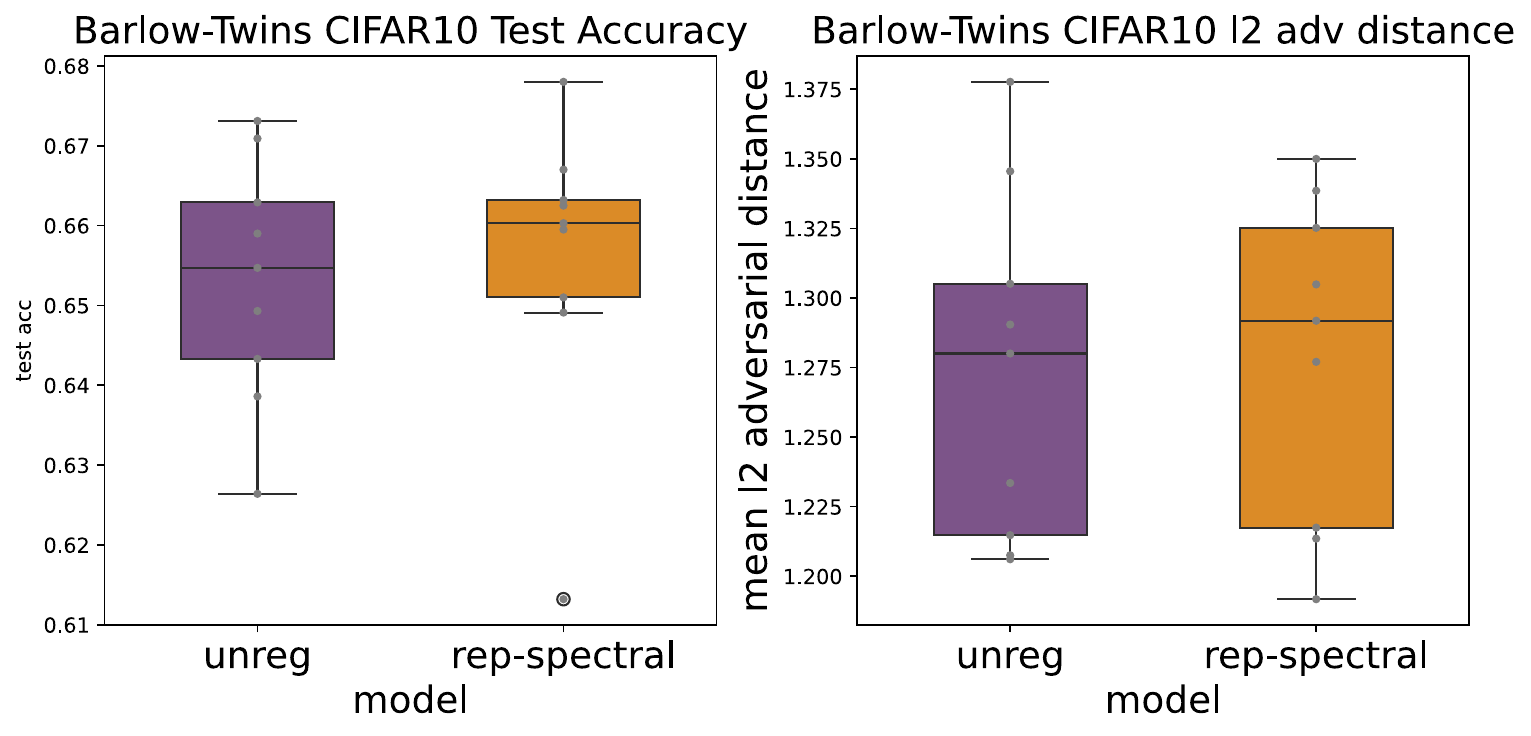}
    \end{subfigure} \\
    \caption{Spectral regularization during self-supervised BarlowTwins pretraining of a ResNet18 improves robustness of a readout in downstream classification. (a). Kernel density estimates of the distributions of adversarial distances across 1000 random samples for models pretrained without regularization and with \texttt{rep-spectral} regularization. (b). Test accuracy (\emph{left}) and mean adversarial distance (\emph{right}) for these models across 10 random seeds. Gray dots indicate the results for individual seeds, while boxplots show the mean and quartiles.}
    \label{fig:ssl}
\end{figure}

\subsection{Transfer learning}

Lastly, we apply our regularizer to transfer learning (TL) tasks. We pretrain ResNet50 on ImageNet-1K \citep{deng2009imagenet} and finetune on CIFAR-10 (see \Cref{sup:tl} for details). In the finetuning stage, the readout layer is trained from scratch, while only miniscule adjustments are made to the hidden layers. For comparison in this setting, we add an additional candidate regularizer used at the finetuning stage, batch spectral shrinkage (\texttt{BSS}) \citep{chen2019catastrophic}. We visualize the mean adversarial distances for 500 randomly test samples and report the mean over 10 random seeds in \Cref{fig:tl_supp} and corresponding test accuracy in \Cref{fig:tl_supp_acc}. Although all model reaches 96\% test accuracy consistently, they have dramatically different robustness level. We found adding \texttt{rep-spectral} regularization at the pretraining stage produces substantial gains in adversarial robustness, while adding adversarial regularization in the finetuning stage typically hurts adversarial robustness. The best robustness is obtained by adding our proposed regularizer during pretraining and then fine-tuning without regularization (compare the dark purple bar with the dark orange bar in \Cref{fig:tl_supp}). A similar pattern holds in finetuning on other dataset such as Stanford Dog \citep{KhoslaYaoJayadevaprakashFeiFei_FGVC2011}, Oxford Flowers \citep{Nilsback08}, and MIT indoor \citep{quattoni2009recognizing}; see  \Cref{sup:tl} and \Cref{fig:tl-others} for more details. Therefore, our proposed method yields a substantial gain in robustness in a transfer learning setting.

\begin{figure}[t]
    \centering
    \includegraphics[width=\textwidth]{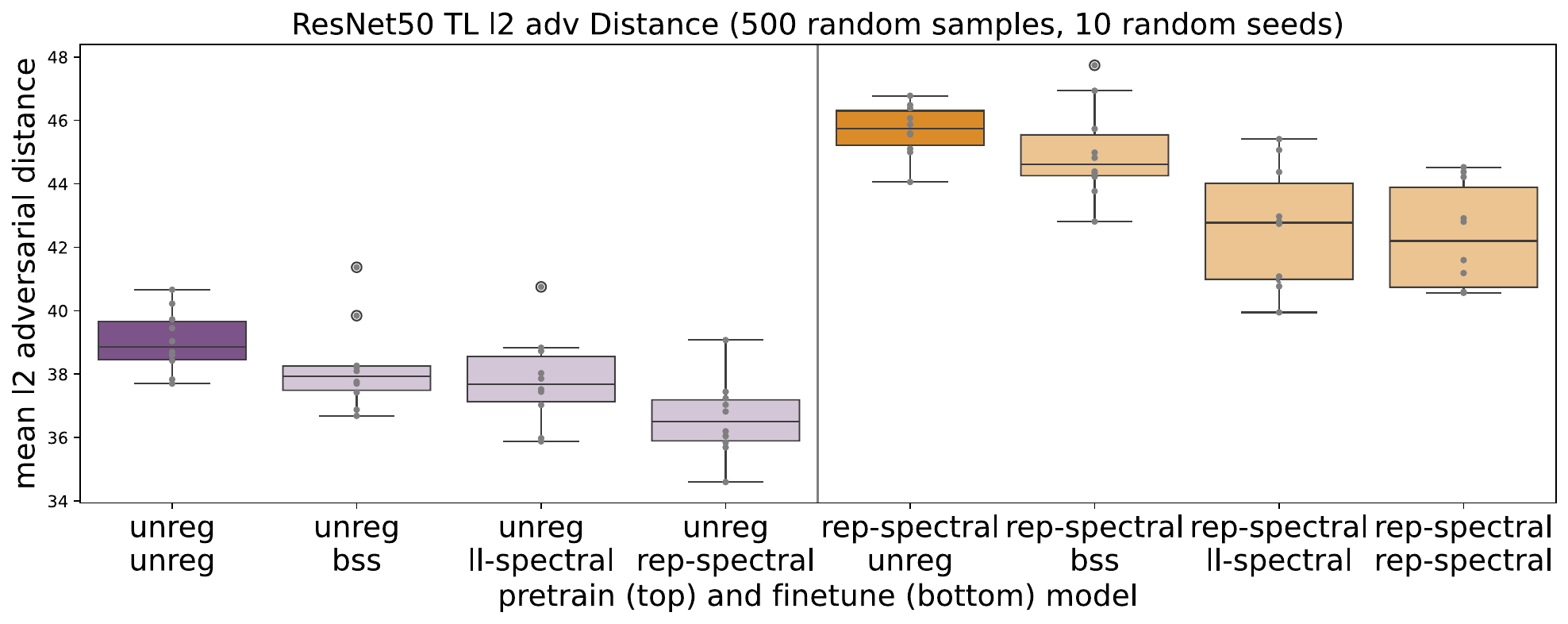}
    \caption{Mean $\Delta_x$ in transfer learning across different combinations of training schemes. The left half are finetuning from unregularized model, and the right half are finetuning from \texttt{rep-spectral} regularized model. In finetuning stage, adding additional regularization typically hurts adversarial robustness performance.}
    \label{fig:tl_supp}
\end{figure}

\section{Discussion}

In this paper, we have shown that a simple regularization method encourages adversarial robustness during representation learning. We proved that the robustness of a representation is linked to the spectral norm of the input-wise Jacobian of the feature map (\Cref{lemma:bound}). Inspired by this bound, we proposed a regularizer that penalizes the top singular value of neural network layers only up to the feature space. Through ample experiments, we have shown that our regularizer is more effective than a previously-proposed method incorporating the readout layer in encouraging $l_2$ adversarial robustness during end-to-end training. This gain in robustness mostly results from increasing the adversarial distances to samples that are previously close to the decision boundaries (\Cref{fig:resnet18_kde}). The pattern holds in supervised, semi-supervised and transfer learning, including as training on synthetic data and MNIST using a shallow architecture, on CIFAR10 using ResNet18 supervised training and BarlowTwins self-supervised training, and pretraining on ImageNet with subsequent finetuning on CIFAR10. In some experiments, we found that this new spectral regularizer even yields a gain in test accuracy.  

Our proposed method is not without limitations. We observed limited improvement in robustness in self-supervised learning (\Cref{fig:barlow_kde}), and saw that regularizing during the fintetuning stage of transfer learning can decrease robustness (\Cref{fig:tl_supp}). As our primary objective is to improve robustness of pretrained representations, the former limitation is more important. We note that adversarial robustness in SSL is in general not well-understood. It will be interesting to investigate whether there are ways to modify our training setup in which regularization yields substantial improvement in robustness. In particular, it will be interesting to investigate SSL pretraining using larger datasets, given the substantial improvements with representation quality observed with dataset scale in non-regularized settings \cite{chen2020simclr,zbontar2021barlow,radford2021clip}. 

This study opens the door for analyzing adversarial robustness on a per-layer basis. In deep networks, it is widely believed that early and later layers assume different roles in the learning tasks. Early layers may be responsible for low-level feature extraction, while later layers adapt to high-level and task-specific features \cite{cammarata2020thread,feather2023metamer,zavatone2023neural}. As analyzed in work by \citet{dyballa2024separability} on the generalization performance of neural networks from a per-layer perspective, an interesting extension of our would would be to conduct ablation studies by turning on and off spectral regularizations for certain layers, not necessarily contiguous ones. This could allow one to identify the crucial layers contributing to model adversarial robustness.

It is also worth comparing our analysis with another line of adversarial robustness research based on adversarial training in the black-box attack literature \cite{szegedy2013intriguing,goodfellow2014explaining,tramer2017ensemble,bai2021recent}. Adversarial training searches for adversarial samples after each parameter update and insert such samples into the training set, through which the adversarial robustness of the model is empirically injected. Although our method is data-independent and more parallel-computation-friendly, it is worth comprehensively comparing the runtime and adversarial robustness for adversarial attack methods and regularization methods that spontaneously provides adversarial robustness.

\section*{Acknowledgements}

We thank Alexander Atanasov and Benjamin S. Ruben for helpful comments. JAZV and CP were supported by NSF Award DMS-2134157 and NSF CAREER Award IIS-2239780. CP is further supported by a Sloan Research Fellowship. This work has been made possible in part by a gift from the Chan Zuckerberg Initiative Foundation to establish the Kempner Institute for the Study of Natural and Artificial Intelligence.

\clearpage

{
\footnotesize
\bibliography{ref}

\begin{thebibliography}{57}
\providecommand{\natexlab}[1]{#1}
\providecommand{\url}[1]{\texttt{#1}}
\expandafter\ifx\csname urlstyle\endcsname\relax
  \providecommand{\doi}[1]{doi: #1}\else
  \providecommand{\doi}{doi: \begingroup \urlstyle{rm}\Url}\fi

\bibitem[Biggio et~al.(2013)Biggio, Corona, Maiorca, Nelson, {\v{S}}rndi{\'c},
  Laskov, Giacinto, and Roli]{biggio2013evasion}
Battista Biggio, Igino Corona, Davide Maiorca, Blaine Nelson, Nedim
  {\v{S}}rndi{\'c}, Pavel Laskov, Giorgio Giacinto, and Fabio Roli.
\newblock Evasion attacks against machine learning at test time.
\newblock In \emph{Machine Learning and Knowledge Discovery in Databases:
  European Conference, ECML PKDD 2013, Prague, Czech Republic, September 23-27,
  2013, Proceedings, Part III 13}, pages 387--402. Springer, 2013.

\bibitem[Szegedy et~al.(2013)Szegedy, Zaremba, Sutskever, Bruna, Erhan,
  Goodfellow, and Fergus]{szegedy2013intriguing}
Christian Szegedy, Wojciech Zaremba, Ilya Sutskever, Joan Bruna, Dumitru Erhan,
  Ian Goodfellow, and Rob Fergus.
\newblock Intriguing properties of neural networks.
\newblock \emph{arXiv preprint arXiv:1312.6199}, 2013.

\bibitem[Goodfellow et~al.(2014)Goodfellow, Shlens, and
  Szegedy]{goodfellow2014explaining}
Ian~J Goodfellow, Jonathon Shlens, and Christian Szegedy.
\newblock Explaining and harnessing adversarial examples.
\newblock \emph{arXiv preprint arXiv:1412.6572}, 2014.

\bibitem[Liu et~al.(2018)Liu, Li, Wu, and Hsieh]{liu2018adv}
Xuanqing Liu, Yao Li, Chongruo Wu, and Cho-Jui Hsieh.
\newblock Adv-bnn: Improved adversarial defense through robust bayesian neural
  network.
\newblock In \emph{International Conference on Learning Representations}, 2018.

\bibitem[Li et~al.(2022)Li, Cheng, Hsieh, and Lee]{li2022review}
Yao Li, Minhao Cheng, Cho-Jui Hsieh, and Thomas~CM Lee.
\newblock A review of adversarial attack and defense for classification
  methods.
\newblock \emph{The American Statistician}, 76\penalty0 (4):\penalty0 329--345,
  2022.

\bibitem[Kurakin et~al.(2018)Kurakin, Goodfellow, and
  Bengio]{kurakin2018adversarial}
Alexey Kurakin, Ian~J Goodfellow, and Samy Bengio.
\newblock Adversarial examples in the physical world.
\newblock In \emph{Artificial intelligence safety and security}, pages 99--112.
  Chapman and Hall/CRC, 2018.

\bibitem[Athalye et~al.(2018)Athalye, Carlini, and
  Wagner]{athalye2018obfuscated}
Anish Athalye, Nicholas Carlini, and David Wagner.
\newblock Obfuscated gradients give a false sense of security: Circumventing
  defenses to adversarial examples.
\newblock In \emph{International conference on machine learning}, pages
  274--283. PMLR, 2018.

\bibitem[Madry et~al.(2018)Madry, Makelov, Schmidt, Tsipras, and
  Vladu]{madry2018towards}
Aleksander Madry, Aleksandar Makelov, Ludwig Schmidt, Dimitris Tsipras, and
  Adrian Vladu.
\newblock Towards deep learning models resistant to adversarial attacks.
\newblock In \emph{International Conference on Learning Representations}, 2018.

\bibitem[Gao et~al.(2019)Gao, Cai, Li, Hsieh, Wang, and
  Lee]{gao2019convergence}
Ruiqi Gao, Tianle Cai, Haochuan Li, Cho-Jui Hsieh, Liwei Wang, and Jason~D Lee.
\newblock Convergence of adversarial training in overparametrized neural
  networks.
\newblock \emph{Advances in Neural Information Processing Systems}, 32, 2019.

\bibitem[Xie et~al.(2017)Xie, Wang, Zhang, Ren, and Yuille]{xie2017mitigating}
Cihang Xie, Jianyu Wang, Zhishuai Zhang, Zhou Ren, and Alan Yuille.
\newblock Mitigating adversarial effects through randomization.
\newblock \emph{arXiv preprint arXiv:1711.01991}, 2017.

\bibitem[Cohen et~al.(2019)Cohen, Rosenfeld, and Kolter]{cohen2019certified}
Jeremy Cohen, Elan Rosenfeld, and Zico Kolter.
\newblock Certified adversarial robustness via randomized smoothing.
\newblock In \emph{international conference on machine learning}, pages
  1310--1320. PMLR, 2019.

\bibitem[Zbontar et~al.(2021)Zbontar, Jing, Misra, LeCun, and
  Deny]{zbontar2021barlow}
Jure Zbontar, Li~Jing, Ishan Misra, Yann LeCun, and St{\'e}phane Deny.
\newblock Barlow twins: Self-supervised learning via redundancy reduction.
\newblock In \emph{International conference on machine learning}, pages
  12310--12320. PMLR, 2021.

\bibitem[Chen et~al.(2020{\natexlab{a}})Chen, Kornblith, Norouzi, and
  Hinton]{chen2020simclr}
Ting Chen, Simon Kornblith, Mohammad Norouzi, and Geoffrey Hinton.
\newblock A simple framework for contrastive learning of visual
  representations.
\newblock In Hal~Daumé III and Aarti Singh, editors, \emph{Proceedings of the
  37th International Conference on Machine Learning}, volume 119 of
  \emph{Proceedings of Machine Learning Research}, pages 1597--1607. PMLR,
  13--18 Jul 2020{\natexlab{a}}.
\newblock URL \url{https://proceedings.mlr.press/v119/chen20j.html}.

\bibitem[Radford et~al.(2021)Radford, Kim, Hallacy, Ramesh, Goh, Agarwal,
  Sastry, Askell, Mishkin, Clark, Krueger, and Sutskever]{radford2021clip}
Alec Radford, Jong~Wook Kim, Chris Hallacy, Aditya Ramesh, Gabriel Goh,
  Sandhini Agarwal, Girish Sastry, Amanda Askell, Pamela Mishkin, Jack Clark,
  Gretchen Krueger, and Ilya Sutskever.
\newblock Learning transferable visual models from natural language
  supervision.
\newblock In Marina Meila and Tong Zhang, editors, \emph{Proceedings of the
  38th International Conference on Machine Learning}, volume 139 of
  \emph{Proceedings of Machine Learning Research}, pages 8748--8763. PMLR,
  18--24 Jul 2021.
\newblock URL \url{https://proceedings.mlr.press/v139/radford21a.html}.

\bibitem[Hernandez et~al.(2021)Hernandez, Kaplan, Henighan, and
  McCandlish]{hernandez2021scaling}
Danny Hernandez, Jared Kaplan, Tom Henighan, and Sam McCandlish.
\newblock Scaling laws for transfer.
\newblock \emph{arXiv}, 2021.

\bibitem[Hein and Andriushchenko(2017)]{hein2017formal}
Matthias Hein and Maksym Andriushchenko.
\newblock Formal guarantees on the robustness of a classifier against
  adversarial manipulation.
\newblock \emph{Advances in neural information processing systems}, 30, 2017.

\bibitem[Yoshida and Miyato(2017)]{yoshida2017spectral}
Yuichi Yoshida and Takeru Miyato.
\newblock Spectral norm regularization for improving the generalizability of
  deep learning.
\newblock \emph{arXiv preprint arXiv:1705.10941}, 2017.

\bibitem[Bai et~al.(2021)Bai, Luo, Zhao, Wen, and Wang]{bai2021recent}
Tao Bai, Jinqi Luo, Jun Zhao, Bihan Wen, and Qian Wang.
\newblock Recent advances in adversarial training for adversarial robustness.
\newblock \emph{arXiv preprint arXiv:2102.01356}, 2021.

\bibitem[Liu et~al.(2020)Liu, Papailiopoulos, and Achlioptas]{liu2020bad}
Shengchao Liu, Dimitris Papailiopoulos, and Dimitris Achlioptas.
\newblock Bad global minima exist and sgd can reach them.
\newblock \emph{Advances in Neural Information Processing Systems},
  33:\penalty0 8543--8552, 2020.

\bibitem[Bertsimas and Dunn(2019)]{bertsimas2019machine}
Dimitris Bertsimas and Jack Dunn.
\newblock \emph{Machine learning under a modern optimization lens}.
\newblock Dynamic Ideas LLC Charlestown, MA, 2019.

\bibitem[Bertsimas and Hertog(2022)]{bertsimas2022robust}
Dimitris Bertsimas and Dick~den Hertog.
\newblock Robust and adaptive optimization.
\newblock \emph{(No Title)}, 2022.

\bibitem[Bhagoji et~al.(2019)Bhagoji, Cullina, and Mittal]{bhagoji2019lower}
Arjun~Nitin Bhagoji, Daniel Cullina, and Prateek Mittal.
\newblock Lower bounds on adversarial robustness from optimal transport.
\newblock \emph{Advances in Neural Information Processing Systems}, 32, 2019.

\bibitem[Dohmatob(2020)]{dohmatob2020classifier}
Elvis Dohmatob.
\newblock Classifier-independent lower-bounds for adversarial robustness.
\newblock \emph{arXiv preprint arXiv:2006.09989}, 2020.

\bibitem[Brendel et~al.(2017)Brendel, Rauber, and Bethge]{brendel2017decision}
Wieland Brendel, Jonas Rauber, and Matthias Bethge.
\newblock Decision-based adversarial attacks: Reliable attacks against
  black-box machine learning models.
\newblock \emph{arXiv preprint arXiv:1712.04248}, 2017.

\bibitem[Chen et~al.(2020{\natexlab{b}})Chen, Jordan, and
  Wainwright]{chen2020hopskipjumpattack}
Jianbo Chen, Michael~I Jordan, and Martin~J Wainwright.
\newblock Hopskipjumpattack: A query-efficient decision-based attack.
\newblock In \emph{2020 ieee symposium on security and privacy (sp)}, pages
  1277--1294. IEEE, 2020{\natexlab{b}}.

\bibitem[Li et~al.(2020)Li, Xu, Zhang, Yang, and Li]{li2020qeba}
Huichen Li, Xiaojun Xu, Xiaolu Zhang, Shuang Yang, and Bo~Li.
\newblock Qeba: Query-efficient boundary-based blackbox attack.
\newblock In \emph{Proceedings of the IEEE/CVF conference on computer vision
  and pattern recognition}, pages 1221--1230, 2020.

\bibitem[Liu et~al.(2019)Liu, Moosavi-Dezfooli, and Frossard]{liu2019geometry}
Yujia Liu, Seyed-Mohsen Moosavi-Dezfooli, and Pascal Frossard.
\newblock A geometry-inspired decision-based attack.
\newblock In \emph{Proceedings of the IEEE/CVF International Conference on
  Computer Vision}, pages 4890--4898, 2019.

\bibitem[Yan et~al.(2020)Yan, Guo, Liang, and Zhang]{yan2020policy}
Ziang Yan, Yiwen Guo, Jian Liang, and Changshui Zhang.
\newblock Policy-driven attack: Learning to query for hard-label black-box
  adversarial examples.
\newblock In \emph{International Conference on Learning Representations}, 2020.

\bibitem[Ma et~al.(2021)Ma, Guo, Chen, Yong, and Wang]{ma2021finding}
Chen Ma, Xiangyu Guo, Li~Chen, Jun-Hai Yong, and Yisen Wang.
\newblock Finding optimal tangent points for reducing distortions of hard-label
  attacks.
\newblock \emph{Advances in Neural Information Processing Systems},
  34:\penalty0 19288--19300, 2021.

\bibitem[Reza et~al.(2023)Reza, Rahmati, Wu, and Dai]{reza2023cgba}
Md~Farhamdur Reza, Ali Rahmati, Tianfu Wu, and Huaiyu Dai.
\newblock Cgba: Curvature-aware geometric black-box attack.
\newblock In \emph{Proceedings of the IEEE/CVF International Conference on
  Computer Vision}, pages 124--133, 2023.

\bibitem[Zhang et~al.(2019)Zhang, Yu, Jiao, Xing, El~Ghaoui, and
  Jordan]{zhang2019theoretically}
Hongyang Zhang, Yaodong Yu, Jiantao Jiao, Eric Xing, Laurent El~Ghaoui, and
  Michael Jordan.
\newblock Theoretically principled trade-off between robustness and accuracy.
\newblock In \emph{International conference on machine learning}, pages
  7472--7482. PMLR, 2019.

\bibitem[Raghunathan et~al.(2020)Raghunathan, Xie, Yang, Duchi, and
  Liang]{raghunathan2020understanding}
Aditi Raghunathan, Sang~Michael Xie, Fanny Yang, John Duchi, and Percy Liang.
\newblock Understanding and mitigating the tradeoff between robustness and
  accuracy.
\newblock \emph{arXiv preprint arXiv:2002.10716}, 2020.

\bibitem[Hoffman et~al.(2019)Hoffman, Roberts, and Yaida]{hoffman2019robust}
Judy Hoffman, Daniel~A. Roberts, and Sho Yaida.
\newblock Robust learning with jacobian regularization.
\newblock \emph{arXiv}, 2019.

\bibitem[Zavatone-Veth et~al.(2023)Zavatone-Veth, Yang, Rubinfien, and
  Pehlevan]{zavatone2023neural}
Jacob~A Zavatone-Veth, Sheng Yang, Julian~A Rubinfien, and Cengiz Pehlevan.
\newblock Neural networks learn to magnify areas near decision boundaries.
\newblock \emph{arXiv preprint arXiv:2301.11375}, 2023.

\bibitem[Sedghi et~al.(2018)Sedghi, Gupta, and Long]{sedghi2018singular}
Hanie Sedghi, Vineet Gupta, and Philip~M Long.
\newblock The singular values of convolutional layers.
\newblock \emph{arXiv preprint arXiv:1805.10408}, 2018.

\bibitem[Senderovich et~al.(2022)Senderovich, Bulatova, Obukhov, and
  Rakhuba]{senderovich2022towards}
Alexandra Senderovich, Ekaterina Bulatova, Anton Obukhov, and Maxim Rakhuba.
\newblock Towards practical control of singular values of convolutional layers.
\newblock \emph{Advances in Neural Information Processing Systems},
  35:\penalty0 10918--10930, 2022.

\bibitem[LeCun et~al.(2010)LeCun, Cortes, and Burges]{lecun2010mnist}
Yann LeCun, Corinna Cortes, and CJ~Burges.
\newblock {MNIST} handwritten digit database.
\newblock \emph{ATT Labs [Online]}, 2, 2010.
\newblock URL \url{http://yann.lecun.com/exdb/mnist}.

\bibitem[He et~al.(2016)He, Zhang, Ren, and Sun]{he2016deep}
Kaiming He, Xiangyu Zhang, Shaoqing Ren, and Jian Sun.
\newblock Deep residual learning for image recognition.
\newblock In \emph{Proceedings of the IEEE conference on computer vision and
  pattern recognition}, pages 770--778, 2016.

\bibitem[Krizhevsky(2009)]{krizhevsky2009cifar}
Alex Krizhevsky.
\newblock Learning multiple layers of features from tiny images.
\newblock Technical report, University of Toronto, 2009.
\newblock URL \url{https://www.cs.toronto.edu/~kriz/cifar.html}.

\bibitem[Deng et~al.(2009)Deng, Dong, Socher, Li, Li, and
  Fei-Fei]{deng2009imagenet}
Jia Deng, Wei Dong, Richard Socher, Li-Jia Li, Kai Li, and Li~Fei-Fei.
\newblock Imagenet: A large-scale hierarchical image database.
\newblock In \emph{2009 IEEE conference on computer vision and pattern
  recognition}, pages 248--255. Ieee, 2009.

\bibitem[Chen et~al.(2019)Chen, Wang, Fu, Long, and Wang]{chen2019catastrophic}
Xinyang Chen, Sinan Wang, Bo~Fu, Mingsheng Long, and Jianmin Wang.
\newblock Catastrophic forgetting meets negative transfer: Batch spectral
  shrinkage for safe transfer learning.
\newblock \emph{Advances in Neural Information Processing Systems}, 32, 2019.

\bibitem[Khosla et~al.(2011)Khosla, Jayadevaprakash, Yao, and
  Fei-Fei]{KhoslaYaoJayadevaprakashFeiFei_FGVC2011}
Aditya Khosla, Nityananda Jayadevaprakash, Bangpeng Yao, and Li~Fei-Fei.
\newblock Novel dataset for fine-grained image categorization.
\newblock In \emph{First Workshop on Fine-Grained Visual Categorization, IEEE
  Conference on Computer Vision and Pattern Recognition}, Colorado Springs, CO,
  June 2011.

\bibitem[Nilsback and Zisserman(2008)]{Nilsback08}
Maria-Elena Nilsback and Andrew Zisserman.
\newblock Automated flower classification over a large number of classes.
\newblock In \emph{Indian Conference on Computer Vision, Graphics and Image
  Processing}, Dec 2008.

\bibitem[Quattoni and Torralba(2009)]{quattoni2009recognizing}
Ariadna Quattoni and Antonio Torralba.
\newblock Recognizing indoor scenes.
\newblock In \emph{2009 IEEE conference on computer vision and pattern
  recognition}, pages 413--420. IEEE, 2009.

\bibitem[Cammarata et~al.(2020)Cammarata, Carter, Goh, Olah, Petrov, Schubert,
  Voss, Egan, and Lim]{cammarata2020thread}
Nick Cammarata, Shan Carter, Gabriel Goh, Chris Olah, Michael Petrov, Ludwig
  Schubert, Chelsea Voss, Ben Egan, and Swee~Kiat Lim.
\newblock Thread: Circuits.
\newblock \emph{Distill}, 2020.
\newblock \doi{10.23915/distill.00024}.
\newblock https://distill.pub/2020/circuits.

\bibitem[Feather et~al.(2023)Feather, Leclerc, M{\k{a}}dry, and
  McDermott]{feather2023metamer}
Jenelle Feather, Guillaume Leclerc, Aleksander M{\k{a}}dry, and Josh~H.
  McDermott.
\newblock Model metamers reveal divergent invariances between biological and
  artificial neural networks.
\newblock \emph{Nature Neuroscience}, 26\penalty0 (11):\penalty0 2017--2034,
  Nov 2023.
\newblock ISSN 1546-1726.
\newblock \doi{10.1038/s41593-023-01442-0}.
\newblock URL \url{https://doi.org/10.1038/s41593-023-01442-0}.

\bibitem[Dyballa et~al.(2024)Dyballa, Gerritz, and
  Zucker]{dyballa2024separability}
Luciano Dyballa, Evan Gerritz, and Steven~W Zucker.
\newblock A separability-based approach to quantifying generalization: which
  layer is best?
\newblock \emph{arXiv preprint arXiv:2405.01524}, 2024.

\bibitem[Tram{\`e}r et~al.(2017)Tram{\`e}r, Kurakin, Papernot, Goodfellow,
  Boneh, and McDaniel]{tramer2017ensemble}
Florian Tram{\`e}r, Alexey Kurakin, Nicolas Papernot, Ian Goodfellow, Dan
  Boneh, and Patrick McDaniel.
\newblock Ensemble adversarial training: Attacks and defenses.
\newblock \emph{arXiv preprint arXiv:1705.07204}, 2017.

\bibitem[Kothapalli(2023)]{kothapalli2023neural}
Vignesh Kothapalli.
\newblock Neural collapse: A review on modelling principles and generalization.
\newblock \emph{Transactions on Machine Learning Research}, 2023.

\bibitem[Papyan et~al.(2020)Papyan, Han, and Donoho]{papyan2020collapse}
Vardan Papyan, X.~Y. Han, and David~L. Donoho.
\newblock Prevalence of neural collapse during the terminal phase of deep
  learning training.
\newblock \emph{Proceedings of the National Academy of Sciences}, 117\penalty0
  (40):\penalty0 24652--24663, 2020.
\newblock \doi{10.1073/pnas.2015509117}.
\newblock URL \url{https://www.pnas.org/doi/abs/10.1073/pnas.2015509117}.

\bibitem[Lu and Steinerberger(2022)]{lu2022collapse}
Jianfeng Lu and Stefan Steinerberger.
\newblock Neural collapse under cross-entropy loss.
\newblock \emph{Applied and Computational Harmonic Analysis}, 59:\penalty0
  224--241, 2022.
\newblock ISSN 1063-5203.
\newblock \doi{https://doi.org/10.1016/j.acha.2021.12.011}.
\newblock URL
  \url{https://www.sciencedirect.com/science/article/pii/S1063520321001123}.
\newblock Special Issue on Harmonic Analysis and Machine Learning.

\bibitem[Touvron et~al.(2019)Touvron, Vedaldi, Douze, and
  J{\'e}gou]{touvron2019FixRes}
Hugo Touvron, Andrea Vedaldi, Matthijs Douze, and Herv{\'e} J{\'e}gou.
\newblock Fixing the train-test resolution discrepancy.
\newblock In \emph{Advances in Neural Information Processing Systems
  (NeurIPS)}, 2019.

\bibitem[Harris et~al.(2020)Harris, Millman, van~der Walt, Gommers, Virtanen,
  Cournapeau, Wieser, Taylor, Berg, Smith, Kern, Picus, Hoyer, van Kerkwijk,
  Brett, Haldane, Fernández~del Río, Wiebe, Peterson, Gérard-Marchant,
  Sheppard, Reddy, Weckesser, Abbasi, Gohlke, and Oliphant]{2020NumPy-Array}
Charles~R. Harris, K.~Jarrod Millman, Stéfan~J van~der Walt, Ralf Gommers,
  Pauli Virtanen, David Cournapeau, Eric Wieser, Julian Taylor, Sebastian Berg,
  Nathaniel~J. Smith, Robert Kern, Matti Picus, Stephan Hoyer, Marten~H. van
  Kerkwijk, Matthew Brett, Allan Haldane, Jaime Fernández~del Río, Mark
  Wiebe, Pearu Peterson, Pierre Gérard-Marchant, Kevin Sheppard, Tyler Reddy,
  Warren Weckesser, Hameer Abbasi, Christoph Gohlke, and Travis~E. Oliphant.
\newblock Array programming with {NumPy}.
\newblock \emph{Nature}, 585:\penalty0 357–362, 2020.
\newblock \doi{10.1038/s41586-020-2649-2}.

\bibitem[Hunter(2007)]{Hunter:2007}
J.~D. Hunter.
\newblock Matplotlib: A 2d graphics environment.
\newblock \emph{Computing in Science \& Engineering}, 9\penalty0 (3):\penalty0
  90--95, 2007.
\newblock \doi{10.1109/MCSE.2007.55}.

\bibitem[McKinney et~al.(2010)]{mckinney2010data}
Wes McKinney et~al.
\newblock Data structures for statistical computing in python.
\newblock In \emph{Proceedings of the 9th Python in Science Conference}, volume
  445, pages 51--56. Austin, TX, 2010.

\bibitem[Pedregosa et~al.(2011)Pedregosa, Varoquaux, Gramfort, Michel, Thirion,
  Grisel, Blondel, Prettenhofer, Weiss, Dubourg, Vanderplas, Passos,
  Cournapeau, Brucher, Perrot, and Duchesnay]{scikit-learn}
F.~Pedregosa, G.~Varoquaux, A.~Gramfort, V.~Michel, B.~Thirion, O.~Grisel,
  M.~Blondel, P.~Prettenhofer, R.~Weiss, V.~Dubourg, J.~Vanderplas, A.~Passos,
  D.~Cournapeau, M.~Brucher, M.~Perrot, and E.~Duchesnay.
\newblock Scikit-learn: Machine learning in {P}ython.
\newblock \emph{Journal of Machine Learning Research}, 12:\penalty0 2825--2830,
  2011.

\bibitem[Paszke et~al.(2019)Paszke, Gross, Massa, Lerer, Bradbury, Chanan,
  Killeen, Lin, Gimelshein, Antiga, Desmaison, Kopf, Yang, DeVito, Raison,
  Tejani, Chilamkurthy, Steiner, Fang, Bai, and Chintala]{paszke2017automatic}
Adam Paszke, Sam Gross, Francisco Massa, Adam Lerer, James Bradbury, Gregory
  Chanan, Trevor Killeen, Zeming Lin, Natalia Gimelshein, Luca Antiga, Alban
  Desmaison, Andreas Kopf, Edward Yang, Zachary DeVito, Martin Raison, Alykhan
  Tejani, Sasank Chilamkurthy, Benoit Steiner, Lu~Fang, Junjie Bai, and Soumith
  Chintala.
\newblock Pytorch: An imperative style, high-performance deep learning library.
\newblock In H.~Wallach, H.~Larochelle, A.~Beygelzimer, F.~d\textquotesingle
  Alch\'{e}-Buc, E.~Fox, and R.~Garnett, editors, \emph{Advances in Neural
  Information Processing Systems}, volume~32. Curran Associates, Inc., 2019.
\newblock URL
  \url{https://proceedings.neurips.cc/paper_files/paper/2019/file/bdbca288fee7f92f2bfa9f7012727740-Paper.pdf}.

\end{thebibliography}
}

\clearpage
\appendix

\numberwithin{equation}{section}
\numberwithin{figure}{section}

\section{Proofs and Discussions}
\subsection[Proof of Lemma 1]{Proof of \Cref{lemma:bound}} \label{proof:lemma1}

We first restate the lemma using dual norm formulation. 

\paragraph[Restatement of Lemma 1]{Restatment of \Cref{lemma:bound}} \textit{Suppose a neural network classifier $F: \mathbb{R}^n \rightarrow \mathbb{R}^K$ is continuously differentiable and can be decomposed as in \Cref{eq:decomp}. Suppose $x \in \mathbb{R}^n$ is a sample input belonging to class $c$ and $\delta \in \mathbb{R}^n$ an adversarial perturbation to $x$ with error class predicted as $k$. Further, given $\frac{1}{p} + \frac{1}{q} = 1$, assume $\Vert \delta\Vert_p \leqslant R$ for some radius $R > 0$, we have}
\begin{equation}
    \Vert \delta_x\Vert_p \geqslant \frac{\left(W^{(L)}_c - W^{(L)}_k \right) \Phi(x) + b^{(L)}_c - b^{(L)}_k}{\Vert W^{(L)}_c - W^{(L)}_k\Vert_q} \cdot \frac{1}{\underset{y \in B_p(x, R)}{\max} \Vert \nabla \Phi(y)\Vert_q}\label{eq:bound}
\end{equation}

\textit{where $f(z) = W^{(L)}z + b^{(L)}$ is the last layer linear transformation, $W^{(L)}_c$ is the $c$-th row of $W^{(L)}$ treated as a row vector, and $\nabla\Phi(.)$ is the Jacobian of the feature map w.r.t. input.}

\begin{proof}
This lemma directly extends the main theorem presented in \cite{hein2017formal}, where a general classifier was considered instead. 

Define $h = f \circ \Phi$. First, $F(x)$ and $h(x)$ have the same ordering of its coordinates, since $\texttt{SoftMax}$ is a strictly monotonic transformation. More explicitly, 
\begin{equation}
    F_c(x + \delta) \leqslant F_k(x + \delta) \iff h_c(x + \delta) \leqslant h_k(x + \delta)
    \label{eq:softmax}
\end{equation}

second, by Taylor expansion to the first order in integral form, we have 

\begin{equation}
    h_k(x + \delta) = h_k(x) + \int_0^1 \langle \nabla h_k (x + t\delta), \delta \rangle \;dt, \;\; \forall k \in [K]
    \label{eq:taylor}
\end{equation}

where $\nabla h_k(.)$ is the gradient taken w.r.t. to the input, not the parameter of the neural network. Applying \Cref{eq:taylor} to \Cref{eq:softmax} on both sides, we have 

\begin{align}
    h_c(x) - h_k(x) &\leqslant \int_0^1 \langle \nabla h_k(x + t\delta) - \nabla h_c(x + t\delta), \delta  \rangle \; dt\\
    &\leqslant \Vert \delta\Vert_p \int_0^1 \Vert \nabla h_k(x + t\delta) - \nabla h_c(x + t\delta)\Vert_q \; dt \;\;\; (\text{Hölder with }\frac{1}{p} + \frac{1}{q} = 1) \\
    &\leqslant \Vert \delta\Vert_p \cdot \underset{y \in B_p(x, R)}{\max} \Vert \nabla h_k(y) - \nabla h_c(y)\Vert_q \;\;\;\;\;\qquad (R \text{ the norm of } \delta) \\
    &\leqslant \Vert \delta\Vert_p \cdot \underset{y \in B_p(x, R)}{\max} \Vert \nabla \Phi(y)\Vert_q\Vert \nabla f_k(\Phi(y)) - \nabla f_c(\Phi(y))\Vert_q \;\;\; (\text{Chain Rule}) \label{eq:prebound}
\end{align}

here we use $\Vert .\Vert_q$ to denote the vector-induced matrix norm when the input is a matrix. Using \Cref{eq:prebound}, rearranging terms, we get 

\begin{equation}
    \Vert \delta\Vert_p \geqslant \frac{h_c(x) - h_k(x)}{\underset{y \in B_p(x, R)}{\max} \Vert \nabla \Phi(y)\Vert_q\Vert \nabla f_k(...) - \nabla f_c(...)\Vert_q}
\end{equation}

note that we explicitly drop the dependence of $\nabla f$ on its input since $f$ is assumed linear. Parametrizing $f(.)$ by $W^{(L)}, b^{(L)}$ and use the fact that $h = f \circ \Phi$, we get \Cref{eq:bound}. Lastly, taking $p = q = 2$ and assuming centered data, we get \Cref{eq:bound_rewrite}.
\end{proof}

\subsection{Derivation of Feature Regularization}\label{sup:feat-reg}

In this section we derive the regularization presented in \Cref{eq:eig-ub}. Consider an explicit parameterization of $F(.; \Theta)$ as a neural network with only linear layers parametrized by $\Theta = \{W^{(1)}, b^{(1)},..., W^{(L)}, b^{(L)}\}$ and non-linear activation function $\phi(.)$ and denote $z^{l}$ as the $l$-th layer preactivation value and $D^{(l)}$ a diagonal matrix with diagonals given by $\texttt{diag}(D^{(l)}) = \phi'(z^{(l)})$, then 
\begin{equation}
    \nabla \Phi(x) = D^{(L - 1)} W^{(L - 1)}D^{(L - 2)} W^{(L - 2)} \hdots D^{(1)} W^{(1)}
\end{equation}

therefore 

\begin{align}
    \lambda_{\max}(g) &= \lambda_{\max}(D^{(L - 1)} W^{(L - 1)} \hdots D^{(1)} W^{(1)} (W^{(1)})^T (D^{(1)})^T \hdots (W^{(L - 1)})^T (D^{(L - 1)})^T) \\
    &\leqslant \prod_{l=1}^{L - 1} \lambda_{\max}(W^{(l)}(W^{(l)})^T) \cdot \lambda_{\max}(D^{(l)}(D^{(l)})^T)
\end{align}

where the last line uses the cyclic property in computing the eigenvalues recursively. Note that for common activation function the derivatives are upper-bounded by 1, and so $\lambda_{\max}(D^{(l)}(D^{(l)})^T)$ is upper bounded by 1. We have 

\begin{equation}
    \lambda_{\max}(g) \leqslant \prod_{l=1}^{L - 1} \lambda_{\max}(W^{(l)}(W^{(l)})^T) = \prod_{l=1}^{L - 1} \sigma^2_{\max}(W^{(l)})
\end{equation}

where $\sigma_{\max}$ is the largest singular value. This motivates \Cref{eq:eig-ub} by replacing the product by a sum for easier back propagation. 

\subsection[Tracking angle with fixed representations]{Tracking $\theta_x$ with fixed representations}\label{sup:last-layer}

In this section we comment on the dynamics of $\theta_x$ when we fix the neural representations. Under simplifying assumptions, we can analytically compute the last layer alignment and thus $\theta_x$ here. 

\citet{kothapalli2023neural} showed that in deep neural network the representation can demonstrate a neural collapse (NC) phenomenon where the representations form a simplex equiangular tight frame (ETF) (see also the original work by \citet{papyan2020collapse}, and theoretical work by \citet{lu2022collapse}). That is, data of the same class are sent to the same place with unit norm in the feature space and data from different class are maximally distant from each other. Assuming the ETF layout, we can analytically derive the last layer alignment under certain conditions. We present the following lemma. 

\begin{lemma}\label{lemma:alignment}
    Suppose that we have $K$ samples in the dataset and each with a different class. In this $K$-class classification with feature space $\mathbb{R}^d$ with $d > K$, assuming that representations $\{z_k\}_{k=1}^K$ reach an ETF layout, with small random Gaussian initialization for the last layer $f(z): \mathbb{R}^d \rightarrow \mathbb{R}^K: z \rightarrow W z$, gradient descent on the last layer only aligns weight vectors $W_k$, the $k-th$ row of $W$, with the neural representations of the sample from $k$-th class.
\end{lemma}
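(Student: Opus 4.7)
The plan is to decompose each row of $W$ into components inside and outside $\mathrm{span}\{z_k\}$, argue that only the in-span component evolves under gradient descent, and then exploit the symmetry of the ETF to conclude that $W_j$ aligns with $z_j$. Throughout, I assume the standard cross-entropy loss corresponding to the $\texttt{SoftMax}$ in \Cref{eq:decomp}.

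\emph{Step 1 (freeze the orthogonal complement).} Writing $p_{j|k}$ for the softmax probability that sample $k$ is assigned class $j$, the gradient of the loss with respect to the $j$-th row of $W$ is
\begin{equation*}
    \nabla_{W_j} L \;=\; \sum_{k=1}^K (p_{j|k} - \delta_{jk})\, z_k ,
\end{equation*}
which lies entirely in $\mathrm{span}\{z_1,\ldots,z_K\}$. Consequently the projection of $W_j$ onto the orthogonal complement of this span is a conserved quantity and stays at its $O(\epsilon)$ Gaussian initialization, so it suffices to analyze the dynamics inside the $K$-dimensional span.

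\emph{Step 2 (the diagonal ansatz is invariant).} Under the ansatz $W_j(t) = \alpha(t)\,z_j$ for all $j$, the logits become $M_{jk} = \alpha \langle z_j, z_k \rangle$, equal to $\alpha$ when $j=k$ and $-\alpha/(K-1)$ otherwise. Hence $p_{j|k}$ depends only on whether $j = k$; denote the two values $p_d$ and $p_o$ with $p_d + (K-1)p_o = 1$. Using the simplex ETF identity $\sum_k z_k = 0$ (which follows from $\|{\sum_k z_k}\|^2 = K + K(K-1)\cdot(-1/(K-1)) = 0$),
\begin{equation*}
    \nabla_{W_j} L \;=\; (p_d - 1)\, z_j + p_o \sum_{k \neq j} z_k \;=\; (p_d - 1 - p_o)\, z_j ,
\end{equation*}
so the ansatz is preserved by the gradient flow, and $\alpha(t)$ grows monotonically because $p_d < 1$.

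\emph{Step 3 (leave the origin in the right direction).} At initialization $W = O(\epsilon)$, hence every $M_{jk} = O(\epsilon)$ and $p_{j|k} = 1/K + O(\epsilon)$. The leading-order gradient is therefore
\begin{equation*}
    \nabla_{W_j} L \;\approx\; \frac{1}{K} \sum_k z_k - z_j \;=\; -z_j ,
\end{equation*}
again by the ETF sum-to-zero identity. Thus the very first stage of training drives $W_j$ in the direction $+z_j$, building up an $\Omega(1)$ aligned component while asymmetric perturbations inherited from the random initialization remain $O(\epsilon)$.

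The main obstacle is bridging Steps 2 and 3: the Gaussian initialization breaks the symmetry that makes the diagonal ansatz exactly invariant, so one must control the growth of the non-aligned in-span components of $W_j$ relative to its $z_j$-component. I would handle this with a Lyapunov-style argument, tracking $\cos\angle(W_j, z_j)$ or the ratio of the $z_j$-projection of $W_j$ to its remainder inside $\mathrm{span}\{z_k\}$. A linearization of the flow around $W=0$ should show that the $-z_j$ drift dominates the asymmetric modes until $\alpha = \Omega(1)$, after which the dynamics enter the symmetric regime of Step 2 and the alignment $W_j \parallel z_j$ is self-sustaining.
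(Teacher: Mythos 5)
Your proof follows the same core computation as the paper's --- write the gradient of the cross-entropy loss with respect to the row $W_j$, invoke small initialization to make the softmax outputs approximately uniform at $1/K$, and use the ETF sum-to-zero identity $\sum_k z_k = 0$ to collapse the gradient to a multiple of $z_j$ --- but you organize it more carefully and add two observations the paper omits. Your Step 1 (the gradient lies in $\mathrm{span}\{z_1,\dots,z_K\}$, so the orthogonal-complement component of $W_j$ is conserved at its $O(\epsilon)$ initialization) is a clean reduction to a $K$-dimensional problem that the paper never states. Your Step 2 (the diagonal ansatz $W_j = \alpha\, z_j$ is exactly invariant under the gradient flow, with $\alpha$ growing monotonically since $p_d - 1 - p_o < 0$) is a sharper version of the paper's one-line heuristic that ``$g_l(z_k)$ for $l \ne k$ would be roughly equal by symmetry and small, so the descent direction continues to pull $W_k$ towards $z_k$.'' You are also more candid about where the argument is incomplete: both your writeup and the paper's leave unproved the claim that the asymmetric modes seeded by the Gaussian initialization stay subdominant as $\alpha$ grows from $O(\epsilon)$ to $\Omega(1)$. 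The paper glides over this by repeating the first-step analysis ``for subsequent steps'' and appealing to symmetry; you name the gap explicitly and sketch a Lyapunov/linearization strategy to close it. In short, your route is the paper's argument made more structured and more rigorous, and the one genuine gap you flag is equally present (but unacknowledged) in the paper's own proof.
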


\begin{proof}
    Since we only have $K$ samples and each comes from a different class, a full-batch gradient descent has the following cross entropy loss: 

    \begin{equation}
        L(W) = \frac{1}{K}\sum_{k=1}^K \log\left(1 + \sum_{l \ne k} e^{(W_l - W_k)z_k}\right) 
    \end{equation}

    Define the composition of last layer linear map with the Softmax operation as $g = \texttt{Softmax} \circ f$. The gradient descent direction for each row of $W$ can be thus given by
    \begin{equation}
        -\frac{\partial L}{\partial W_k} = \frac{1}{K} \left[ \left(\sum_{l \ne k} g_l(z_k)\right) z_k - \sum_{l \ne k} g_k(z_l) z_l \right]
    \end{equation}

    where $g_k$ is the $k$-th output of $g$. Using small initialization we have $g(x_k) \approx [\frac{1}{K}, \frac{1}{K}, ..., \frac{1}{K}]$ close to a uniform distribution for all $k \in [K]$. This suggests that at the first step, the descent direction is given by 

    \begin{equation}
         -\frac{\partial L}{\partial W_k} \approx \frac{K - 1}{K^2} z_k - \frac{1}{K^2} \sum_{l \ne k}z_l = \frac{1}{K} z_k
    \end{equation}

    where in the last step we use $\sum_{l \ne k} z_l = - z_k$, since in an ETF layout we have $\sum_{k = 1}^K z_k = 0$.

    We may then repeat the analysis for subsequent steps. We no longer have uniform output values for each data sample, but $g_l(z_k), \forall l \ne k$ would be roughly equal by symmetry and small, so that the descent direction continues to pull $W_k$ towards $z_k$ albeit with a smaller strength.
\end{proof}

We demonstrate this prediction in a toy 2D task: suppose 3 samples in 2 dimensions are of different classes and they form a simplex ETF layout. This means they are on the unit circle and are exactly 120 degrees away from each other. Starting with Guassian initialization, the last layer vectors gradually align with the representation vectors under a full-batch gradient descent training with learning rate 0.01. We demonstrate the alignment process for small initialization (std=0.001) in \Cref{fig:etf-alignment-small} and large initialization (std=0.1) in \Cref{fig:etf-alignment-large}.

\begin{figure}[t]
    \centering
    \begin{subfigure}{0.32\textwidth}
        \includegraphics[width=\linewidth]{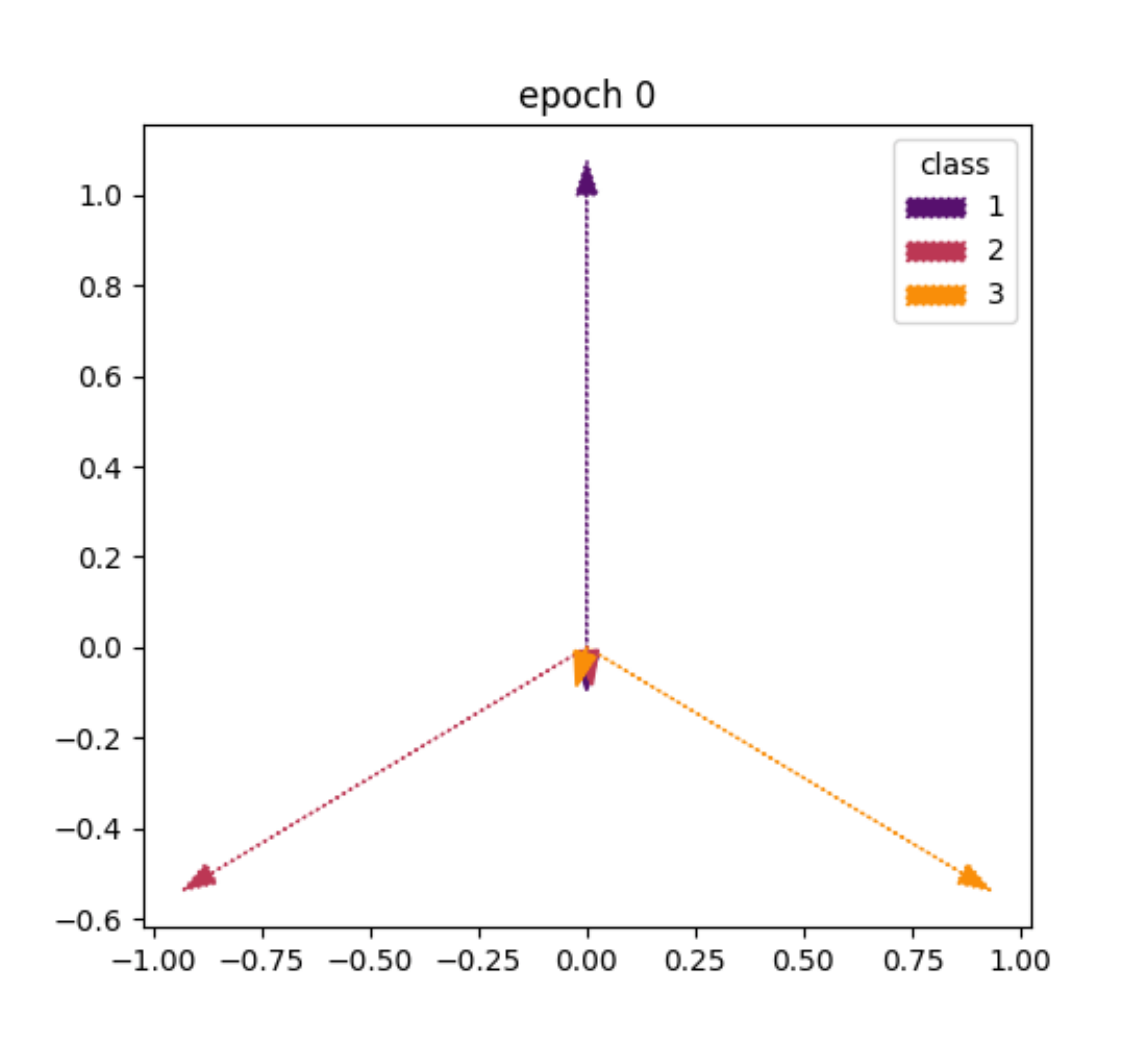}
    \end{subfigure} \hfill
    \begin{subfigure}{0.32\textwidth}
        \includegraphics[width=\linewidth]{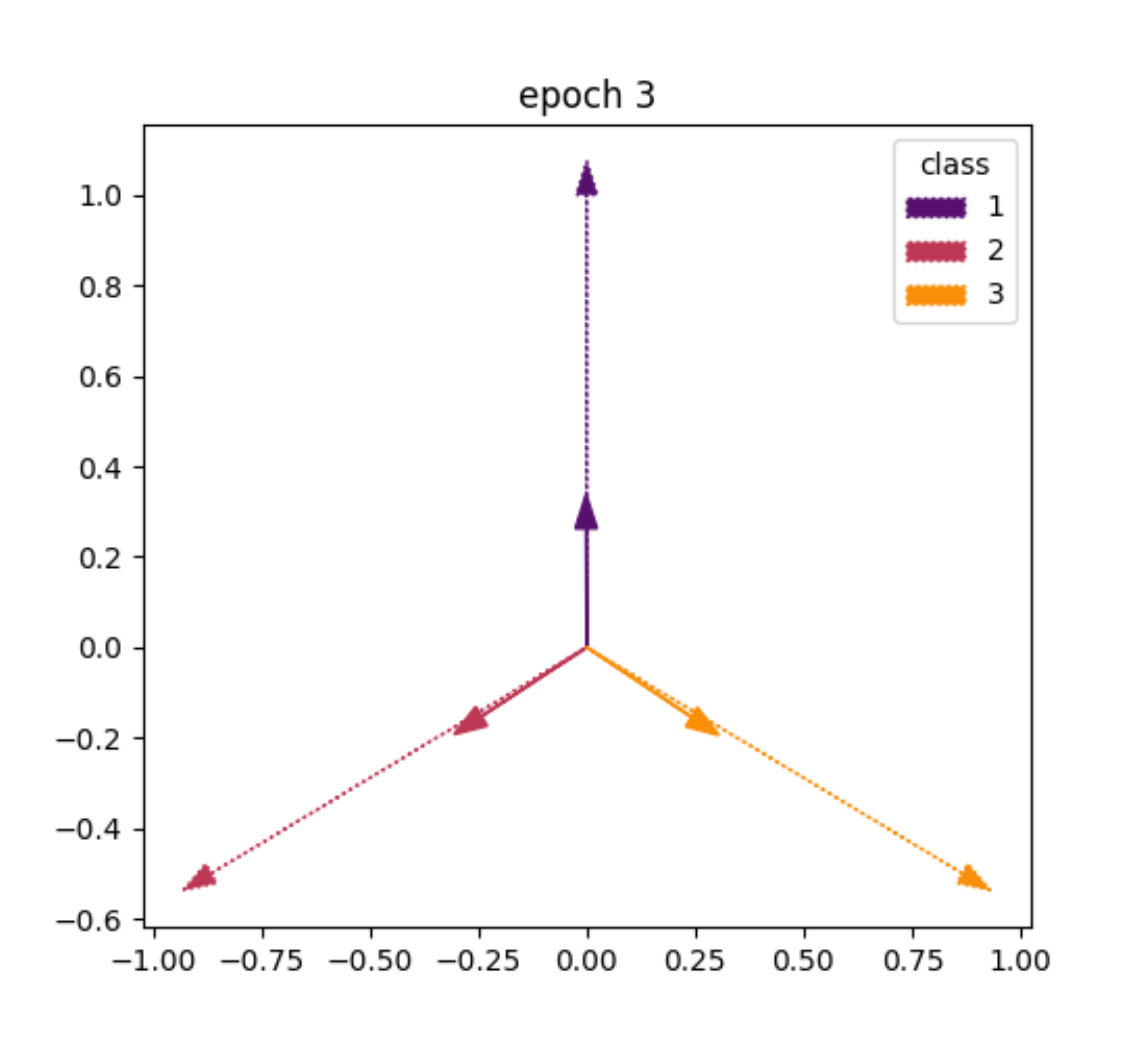}
    \end{subfigure} \hfill
    \begin{subfigure}{0.32\textwidth}
        \includegraphics[width=\linewidth]{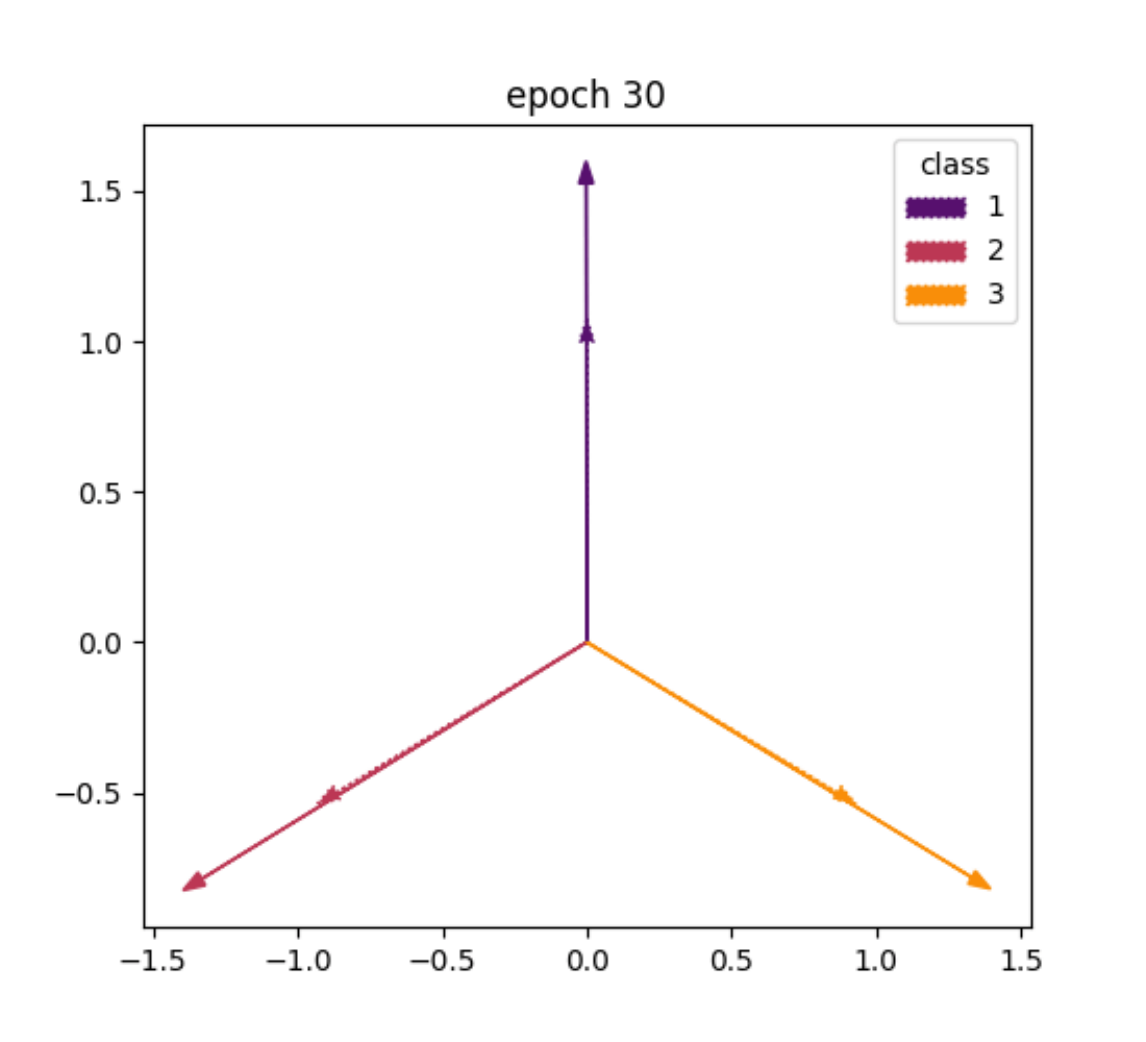}
    \end{subfigure}
    \caption{Last Layer Alignment with $\text{std}=0.001$ initialization}
    \label{fig:etf-alignment-small}
\end{figure}

\begin{figure}[t]
    \centering
    \begin{subfigure}{0.32\textwidth}
        \includegraphics[width=\linewidth]{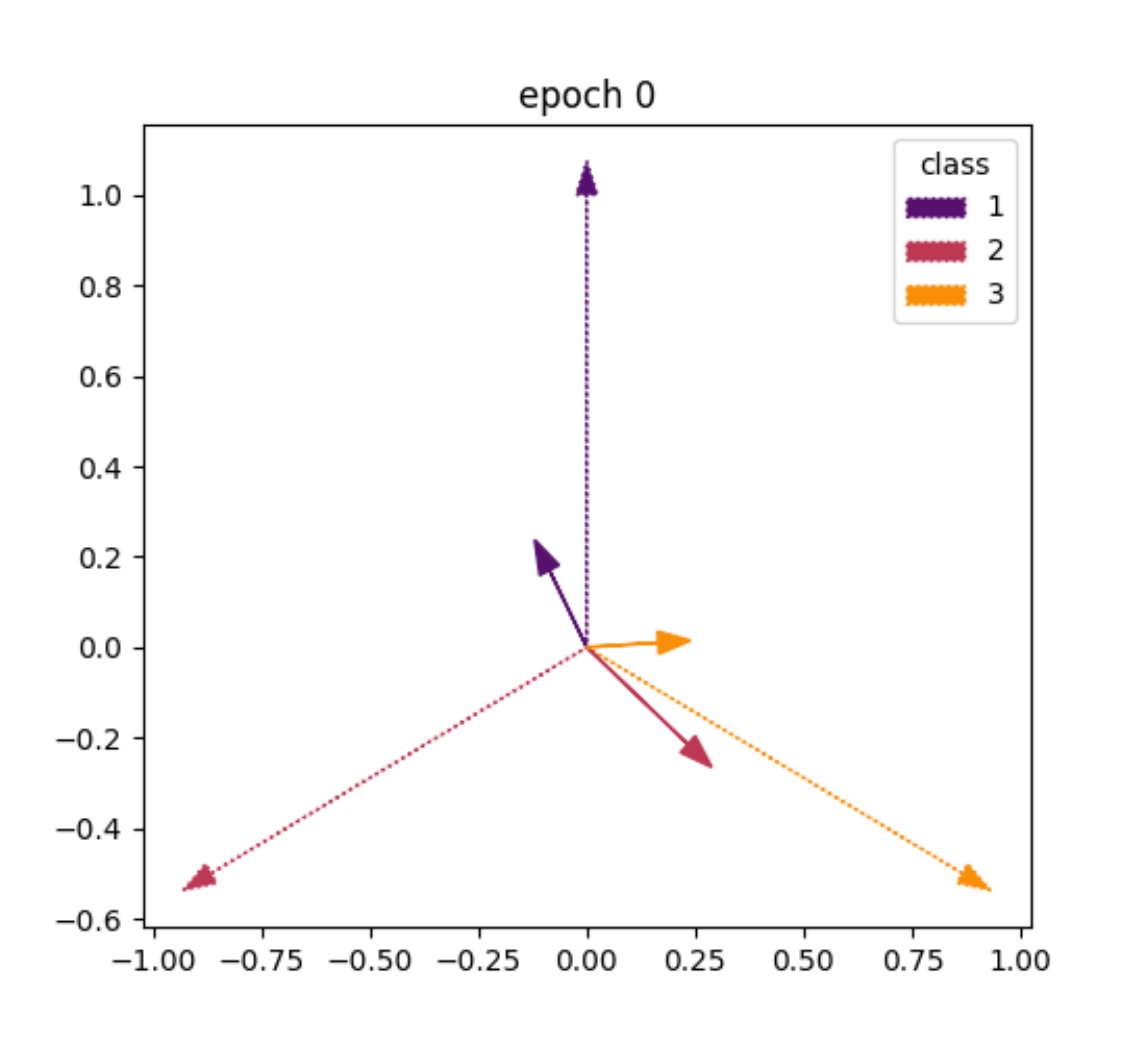}
    \end{subfigure} \hfill
    \begin{subfigure}{0.32\textwidth}
        \includegraphics[width=\linewidth]{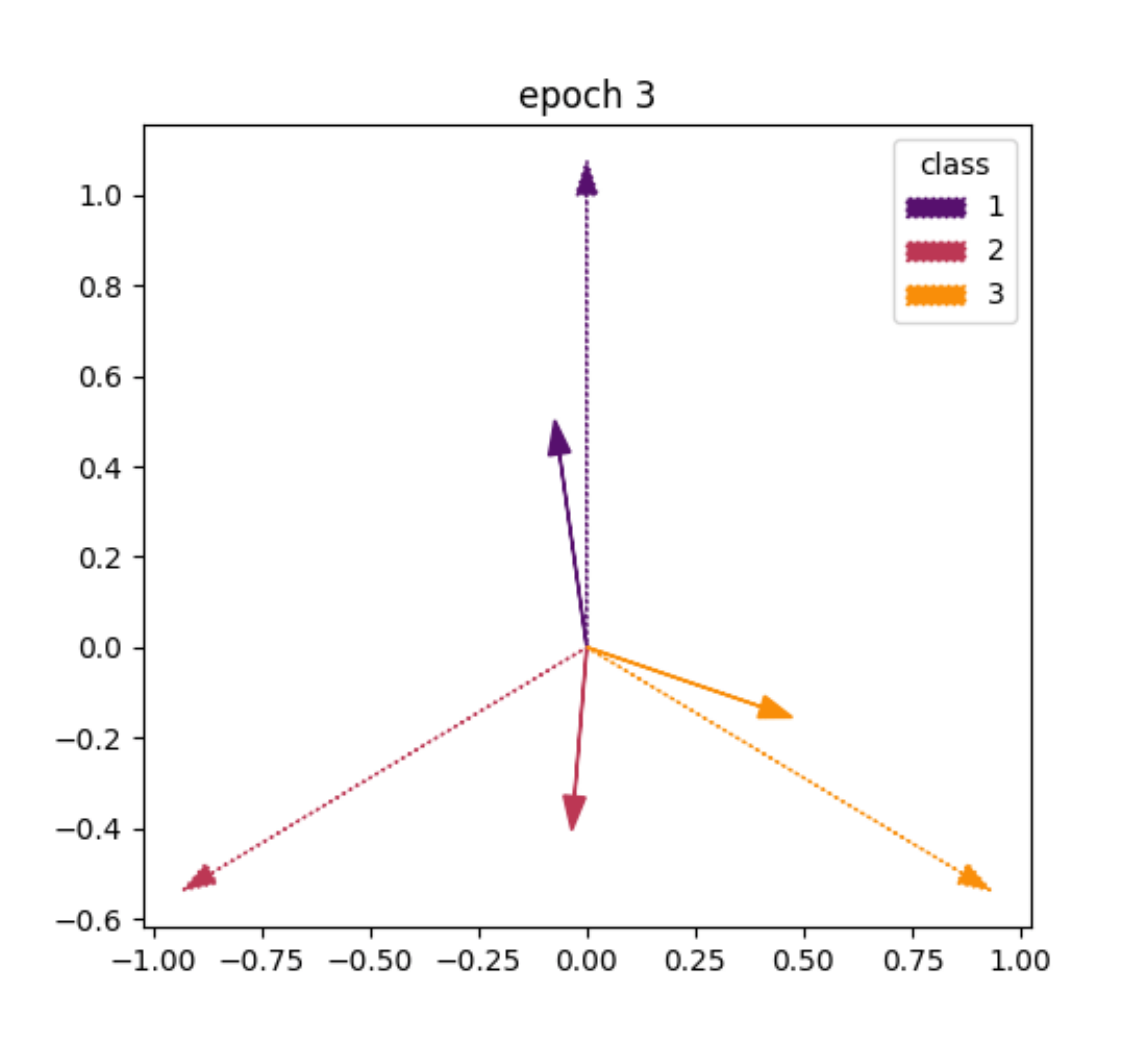}
    \end{subfigure} \hfill
    \begin{subfigure}{0.32\textwidth}
        \includegraphics[width=\linewidth]{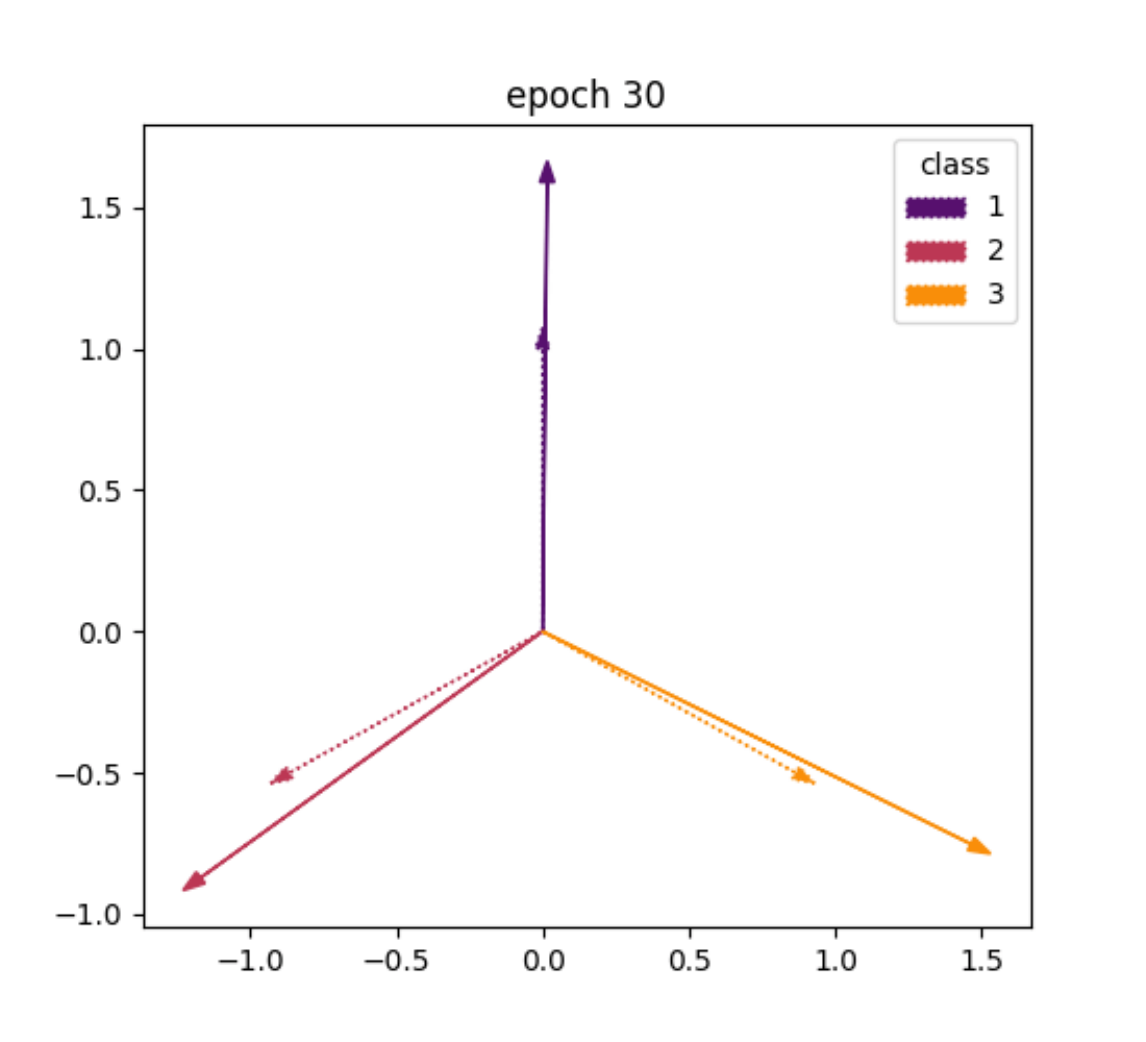}
    \end{subfigure}
    \caption{Last Layer Alignment with $\text{std}=0.1$ initialization}
    \label{fig:etf-alignment-large}
\end{figure}

We could extend the results to arbitrary number of data samples with balanced class label distributions as well. With the same assumptions, we can write down $\theta_x$ in simple cases. 

\begin{lemma}\label{lemma:thetax_analytic}
    With the same assumption in \Cref{lemma:alignment}, and suppose $x$ belongs to class $c$ with representation $z_c$ and $z_k$ is one feature representation from any other class, we have 

    \begin{equation}
        \theta_x \approx \frac{1 - z_k^T z_c}{\Vert z_c - z_k \Vert_2}
        \label{eq:thetax_analytic}
    \end{equation}

    whose value is the same for all $k \ne c$ due to the symmetry in the ETF layout.
\end{lemma}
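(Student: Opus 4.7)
The plan is to start from the definition of $\theta_x$, substitute in the consequences of \Cref{lemma:alignment}, and then exploit the geometric symmetry of the ETF layout.

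First, I would write $\Phi(x) = z_c$ (since $x$ is from class $c$ and the representations have collapsed), so by definition
\begin{equation*}
    \theta_x = \frac{(W^{(L)}_c - W^{(L)}_k)\, z_c}{\Vert W^{(L)}_c - W^{(L)}_k\Vert_2\, \Vert z_c\Vert_2}.
\end{equation*}
From \Cref{lemma:alignment}, under small symmetric initialization the full-batch gradient descent dynamics leave the rows of $W^{(L)}$ aligned with the corresponding class representations: $W^{(L)}_k \approx \alpha_k z_k$. By the symmetry of the ETF layout and the symmetry of the loss in the class indices, the scalars $\alpha_k$ share a common value $\alpha$ (at least up to the approximation inherent in the $\approx$ of \Cref{lemma:alignment}). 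This common-scaling claim is the one subtle point and I would defend it by noting that the update rule derived in the proof of \Cref{lemma:alignment} is equivariant under any permutation of the class indices that simultaneously permutes the rows of $W^{(L)}$.

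Next, I would substitute $W^{(L)}_c - W^{(L)}_k \approx \alpha(z_c - z_k)$ into the expression above and use the unit-norm property $\Vert z_c\Vert_2 = \Vert z_k\Vert_2 = 1$ of ETF vertices. The numerator becomes
\begin{equation*}
    \alpha(z_c - z_k)^\top z_c = \alpha\bigl(\Vert z_c\Vert_2^2 - z_k^\top z_c\bigr) = \alpha(1 - z_k^\top z_c),
\end{equation*}
and the denominator becomes $\alpha \Vert z_c - z_k\Vert_2$. The common factor $\alpha$ cancels, yielding the claimed \Cref{eq:thetax_analytic}.

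Finally, the $k$-independence claim follows from the standard ETF identity $z_k^\top z_c = -\tfrac{1}{K-1}$ for every $k \neq c$, which immediately implies $\Vert z_c - z_k\Vert_2 = \sqrt{2K/(K-1)}$ is also independent of $k$. The only real obstacle is justifying the shared scale $\alpha_k \equiv \alpha$; once permutation-symmetry of the gradient flow is invoked, everything else is a direct algebraic substitution, and the $\approx$ in the statement absorbs the small asymmetries that a non-symmetric initialization might introduce.
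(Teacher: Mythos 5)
Your proof is correct and follows essentially the same route as the paper's: invoke \Cref{lemma:alignment} to write $W^{(L)}_k \approx \alpha z_k$ with a class-independent scalar, substitute into the definition of $\theta_x$ using $\Phi(x) = z_c$ and $\Vert z_c \Vert_2 = 1$, and cancel $\alpha$. The paper simply asserts the shared scalar without argument, whereas you justify it via permutation equivariance of the gradient dynamics — a worthwhile addition — but the core argument is the same direct algebraic substitution.
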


\begin{proof}
    Using \Cref{lemma:alignment}, $W_k \approx c z_k, c > 0$, with the same scalar $c$ for all class $k$. Plugging in the definition of $\theta_x$ we get \Cref{eq:thetax_analytic}.
\end{proof}

If we relax the ETF layout assumption, one may still write down the gradient and analytically track the gradient flow and get expressions of $W_k$ at convergence, but in that case $W_k$ do no align with $z_k$ due to a lack of symmetry in the feature representations, and subsequently $\theta_x$ for different classes will be different. 

\subsection[Tracking angles in joint training]{Tracking $\theta_x$ and $\Vert \Phi(x) \Vert_2$ during joint training}

In general, if we jointly train feature representation and last layer, we do not have the nice analytic prediction as in \Cref{lemma:alignment} and \Cref{lemma:thetax_analytic}. We could still empirically track the two quantities. In simple cases we see that compared to initialization, both $\Vert \phi(x) \Vert_2$ and $\theta_x$ expand when training with cross entropy loss. See \Cref{fig:xor_demo} for cross entropy loss training on 2D XOR dataset and \Cref{fig:mnist_demo} for MNIST data. This solidifies our argument that from a feature representation perspective, to bump the lower bound on adversarial robustness, one would like to control the $\Vert \nabla \Phi(.) \Vert_2$ term. 

\begin{figure}[t]
    \centering
    \begin{subfigure}{0.99\linewidth}
        \centering
        \includegraphics[width=\linewidth]{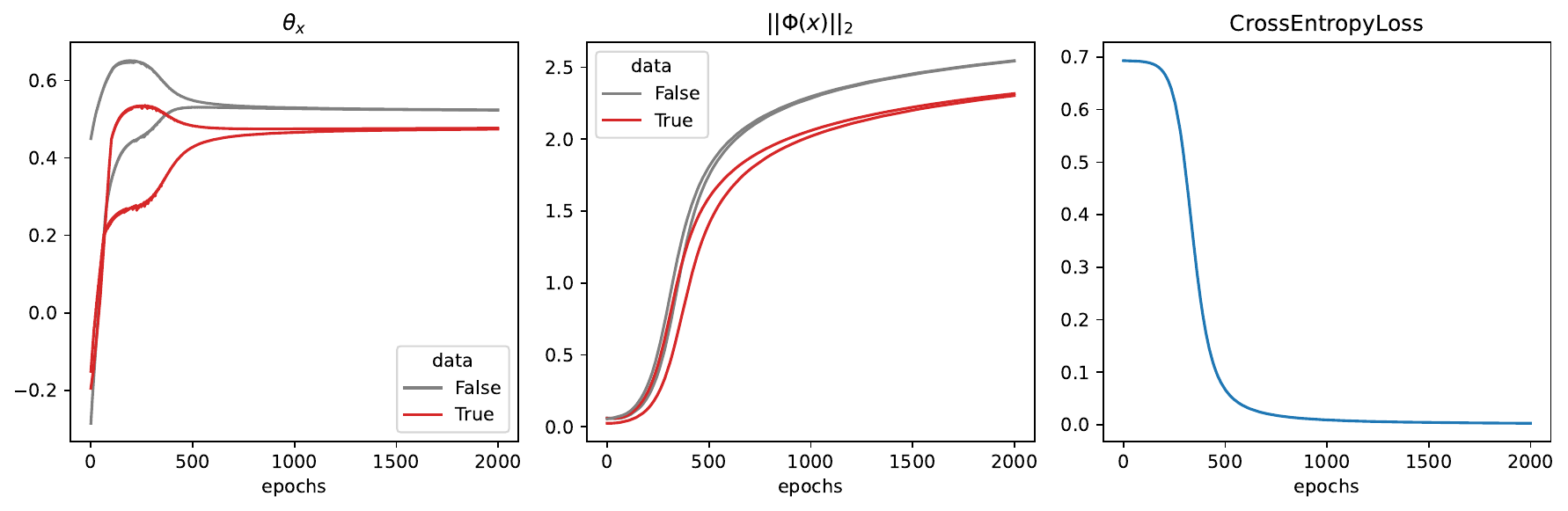}
        \caption{$\theta_x, \Vert \Phi(x)\Vert_2,$ and cross entropy loss over the course of training}
    \end{subfigure} \\
    \begin{subfigure}{0.99\linewidth}
        \centering
        \includegraphics[width=\linewidth]{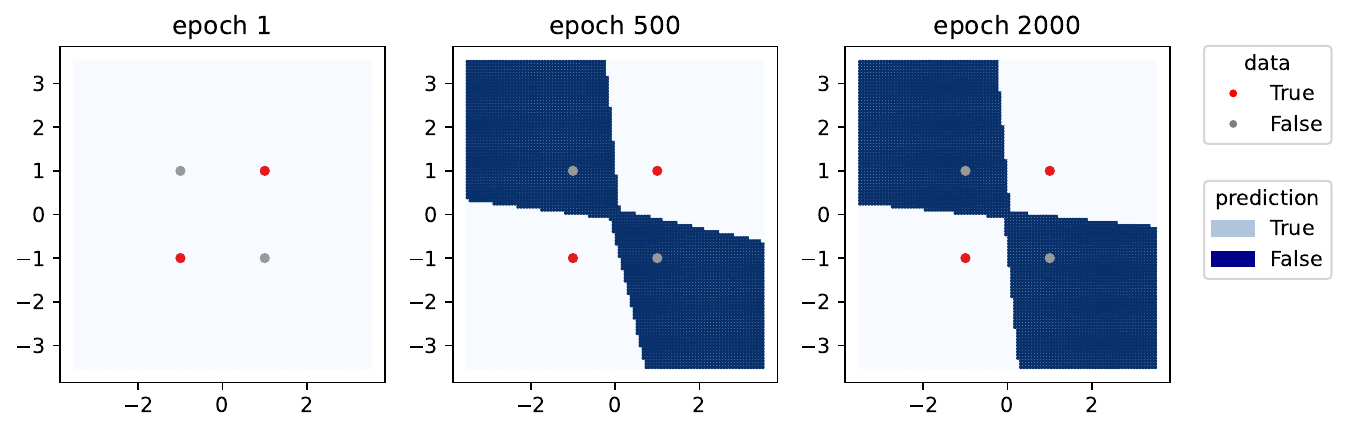}
        \caption{Snapshot of decision boundary at different training epochs}
    \end{subfigure}  
    \caption{In this simple example, we train XOR task on a single-hidden-layer neural network with 20 hidden units using full batch gradient descent and Gaussian initialization with zero mean and 0.01 std. Compared to initialization, $\theta_x$ increased and $\Vert \Phi(x)\Vert_2$ expanded over the course of training.}
    \label{fig:xor_demo}
\end{figure}

\begin{figure}[t]
    \centering
    \begin{subfigure}{0.99\linewidth}
        \centering
        \includegraphics[width=\linewidth]{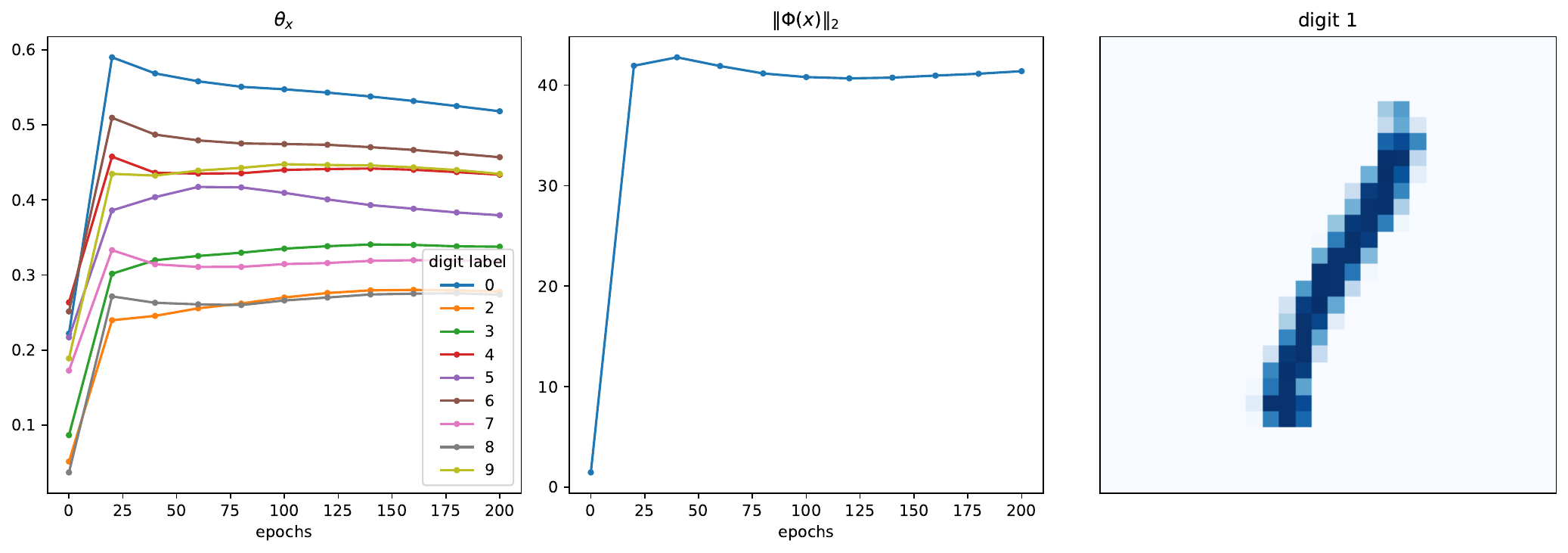}
    \end{subfigure} \\
    \begin{subfigure}{0.99\linewidth}
        \centering
        \includegraphics[width=\linewidth]{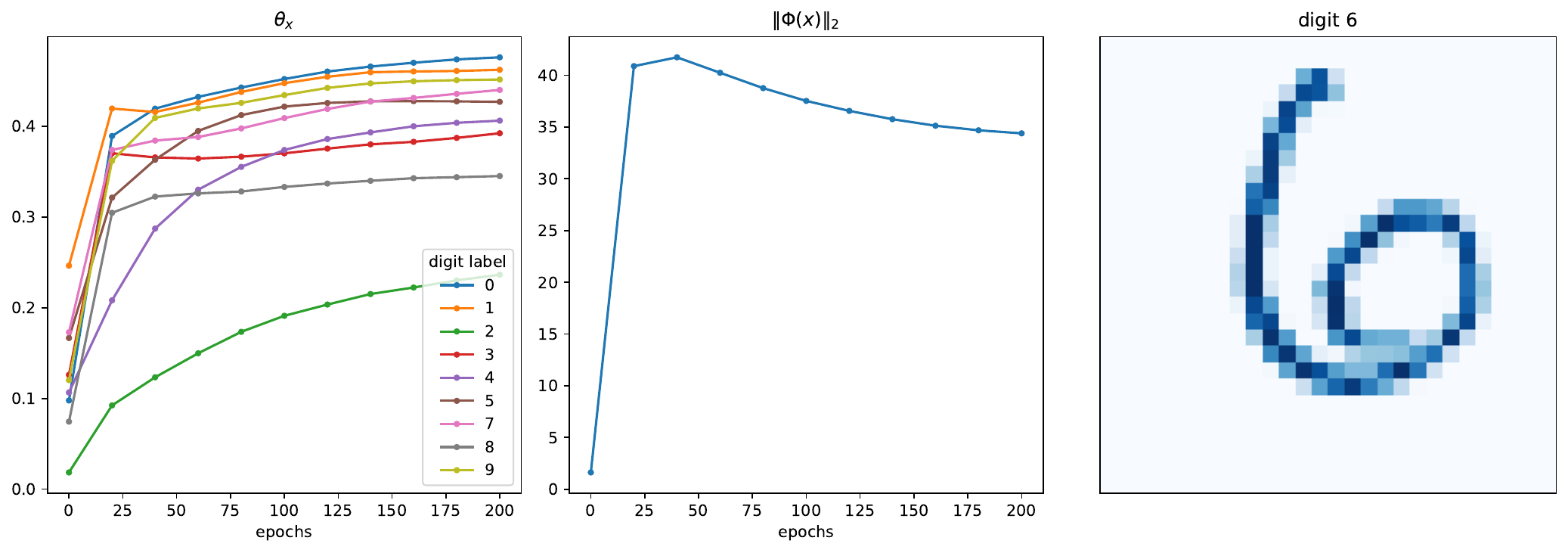}
    \end{subfigure}
    \caption{$\theta_x, \Vert \Phi(x)\Vert_2$ for a test sample of class 1 (top panel) and class 6 (bottom panel). In MNIST setting with small initialization (gaussian mean 0 std 0.01) and normal cross entropy loss training, we observe expansion of $\theta_x$ and $\Vert \Phi(x)\Vert_2$ over training.}
    \label{fig:mnist_demo}
\end{figure}

\section{Linearization of a Multi-channel Convolution Layer}\label{sup:linearize_cnn}

This section provides a brief description of the linearization of multi-channel convolution layer and subsequently getting the maximum eigenvalue of this linear operation. More details and proofs can be found in \citep{sedghi2018singular,senderovich2022towards}.
\subsection[Construction of the linear map K]{Construction of the linear map $\tilde{\mathcal{K}}$}

A periodic 2D convolution operation can be considered a linear transformation on the vectorized input, and the weights are constructed from the filters. Consider $X \in \mathbb{R}^{\text{c}_{\text{in}} \times n \times n}$ an input image to a convolution layer with $\text{c}_{\text{in}}$ numbers of input channels and height/width given by $n$. A multichannel filter $\mathcal{K} \in \mathbb{R}^{\text{c}_{\text{out}} \times \text{c}_{\text{in}} \times k \times k}$ with stride $s$ with  $\text{c}_{\text{out}}$ number of output channels and kernel size $k$ can be rewritten into a matrix $\mathcal{\tilde{K}} \in \mathbb{R}^{c_{\text{out}}n_{\text{out}}^2 \times c_\text{in}n^2}$ such that 
\begin{equation}
    \operatorname{Vec}(\texttt{Conv2D}(X)) = \mathcal{\tilde{K}} \; \operatorname{Vec}(X)
\end{equation}

where $\operatorname{Vec}(.)$ is a row-major reshaping of $X$ (i.e. the default behavior of calling \texttt{.flatten()} in \texttt{NumPy} and \texttt{PyTorch}), and $n_{\text{out}}$ the output height/width given by $n_{\text{out}} = \lfloor \frac{n - 1}{s} + 1 \rfloor$. 

The transformation $\tilde{\mathcal{K}}$ consists of $n^2 \times n^2$ blocks of doubly circulant matrix, and each of the doubly circulant matrix contains data that come from appropriately slicing the zero-padded filter $\mathcal{K}$ that matches with the same shape of the 2D input image. It could be validated that the singular values of $\tilde{\mathcal{K}}$ is the union of all singular values of 2D FFT-transformed blocks of the appropriate slicing, so that to compute the top singular value of $\tilde{\mathcal{K}}$, one does not need to construct $\tilde{\mathcal{K}}$ itself but instead should record the top singular values of FFT-transformed slices and take the max of all these maximum singular values, which saves substantial computational resources. 

In Code Block \ref{convolution_code} we present \texttt{PyTorch} code that is modified from theorem 2 of \cite{senderovich2022towards} for computing the square of top singular value of $\tilde{\mathcal{K}}$.

\begin{codeblk}[t]
\begin{python}
# import packages
import torch
from torch.nn.functional import pad

# function body
def get_multi_channel_top_eigval_with_stride(
    kernel: torch.Tensor, h: int, w: int, stride: int
) -> torch.Tensor:
    """
    compute top eigen value of a convolution layer
    * code tested only for even n and stride = 1 or 2.

    :param kernel: the conv2d kernel, with shape (c_out, c_in, k, k)
    :param h: the image height
    :param w: the image width
    :param stride: the stride of convolution
    :return the top singular value for conv layer
    """
    # pad zeros to the kernel to make the same shape as input
    c_out, c_in, k_h, k_w = kernel.shape
    pad_height = h - k_h
    pad_width = w - k_w
    kernel_pad = pad(kernel, (0, pad_height, 0, pad_width), mode="constant", value=0)
    str_shape_height, str_shape_width = h // stride, w // stride

    # downsample the kernel
    transforms = torch.zeros(
        (c_out, c_in, stride**2, str_shape_height, str_shape_width)
    ).to(kernel.device)
    for i in range(stride):
        for j in range(stride):
            transforms[:, :, i * stride + j, :, :] = kernel_pad[
                :, :, i::stride, j::stride
            ]

    # batch fft2
    transforms = torch.fft.fft2(transforms)
    transforms = transforms.reshape(c_out, -1, str_shape_height, str_shape_width)

    # reorg (h // stride, w // stride, c_out, c_in * stride^2)
    P = transforms.permute(2, 3, 0, 1)

    # compute singular value squared
    # by computing eigenvalues of KK^T or K^TK, whichever is faster
    eigvals = torch.linalg.eigvalsh(
        torch.einsum("...ij,...kj->...ik", torch.conj(P), P)
        if P.shape[3] > P.shape[2]
        else torch.einsum("...ji,...jk->...ik", torch.conj(P), P)
    )

    # keep top eigenvalue only
    top_eig = eigvals.max()
    return top_eig
\end{python}
\caption{Python code for computing the top singular value of the operator form of a convolutional layer}
\label{convolution_code}
\end{codeblk}

\subsection{Speeding Up Top Eigenvalue Computation: Power Iteration Across Parameter Updates}

Since here we are only interested in the maximal eigenvalue, we could use batched power iteration (algorithm described in \Cref{sup:power-iteration}) to jointly compute the top eigenvalues only for 2D FFT-transformed blocks. Both 2D FFT transformation and batch eigenvalue update are GPU friendly. Furthermore, since between consecutive parameter updates, the change in filter map is likely small, iterations in the power method can be amortized across parameter updates. We empirically find that the outcome of conducting a batch power iteration update to the top eigenvalues every 20 parameter updates have little difference in performance compared to computing the exact eigenvalue before each parameter update. See \Cref{sup:deep_conv} for the results.

The amortization across parameter update or across epochs trick has been noticed previously in \citet{yoshida2017spectral}, but was only applied to fully-connected layers. 

\subsection{Implication of the Dependence on Input Dimension}

Although a 2D convolution operator applies to images with arbitrary size and channels, the top singular value of the linearized map depends on the size and channels. To impose spectral regularization to convolution layers, it is thus recommended to use the size and channels of the test images for regularization. The impact of regularizing on one set of images and testing on another with different shapes and channels remains to be explored.

\section{Power Iteration to Compute the Top Singular Value of a Linear Map}\label{sup:power-iteration}

Power iteration is an iterative algorithm to compute the eigenspectrum of a diagonalizable matrix. We can extend the algorithm to compute the largest singular value of a complex-valued matrix. The algorithm is described in \Cref{algo:power-iter}. In practice, $N=1$ is enough, since the parameter moves slowly as we train the cross entropy classification loss. 

\begin{algorithm}[t]
    \caption{Power Iteration for Top Singular Value Squared}\label{algo:power-iter}
    \begin{algorithmic}
    \Require $M \in \mathbb{C}^{m \times n}, N \in \mathbb{N}$ \Comment{N number of iterations}
    \Ensure $\lambda = \sigma_1(M)^2$
    \State $v = v_0$ \Comment{Randomly Generated or from previous training iterates}
    \State $v \gets \frac{v}{\Vert v\Vert_2}$ \Comment{Normalize}
    \State $i \gets 0$
    \While{$i < N$}
        \State $u \gets M v$
        \State $u \gets \frac{u}{\Vert u\Vert_2}$
        \State $v \gets M^* u$ \Comment{Conjugate Transpose}
        \State $v \gets \frac{v}{\Vert v\Vert_2}$
        \State $i \gets i + 1$
    \EndWhile

    \State $p = M v$ \Comment{$Mv_1 = \sigma_1 u_1$}
    \State $\lambda = \Vert p\Vert_2^2$
    \end{algorithmic}
\end{algorithm}

\section[Finding Adversarial Distances by Tangent Attack ]{Finding Adversarial Distances by Tangent Attack \cite{ma2021finding}}\label{sup:ta}

The Tangent Attack algorithm proposed by Ma et al. \cite{ma2021finding} provides a good heuristic for finding adversarial distances in the black-box setting. Here, we briefly sketch how this algoritm operates; we refer the reader to \cite{ma2021finding} for details. 

Given a correctly classified sample $x$, we initialize the algorithm by adding a fixed number of Gaussian perturbations with predetermined standard deviations that adapts to the input dimension. Among all perturbed samples, we keep ones that were classified differently by the neural network classifier and select the one with the minimum $l_2$ distance to the original sample $x$. We then run a binary search along the line segment from $x$ to that sample to locate a point that is on the decision boundary. We call this point $x_0$ as our initial guess to the adversarial sample to $x$. 

Next we iteratively shrink the distance between $x_t, t \in \{0, 1, ..., T\}$ and $x$, where $T$ is a predefined maximum number of updates to the adversarial guesses so that at the end of the algorithm $x_T$ is considered the adversarial sample to $x$ and the adversarial distance $\delta_x = \Vert x_T - x\Vert_2$. The update is done by performing the following three key steps in sequence: at each $t \in \{0, 1, ..., T - 1\}$
\begin{enumerate}
    \item \textbf{estimate a normal direction} to the decision boundary, pointing to the adversarial region: we take local perturbation to $x_t$. Based on the prediction on these perturbations, averaging the vectors that give adversarial prediction gives us an estimate to the normal direction;
    \item \textbf{find the tangent point} in the 2D plane generated by $x$, $x_t$, and the normal direction. construct a hemisphere in the direction of the normal vector with a predefined small radius, find the tangent plane to the hemisphere that passes through $x$ and locate the tangent point $k$. This step is done by analytic geometry and a closed form update can be analytically derived. 
    \item \textbf{conduct a binary search} along the line segment from $x$ to $k$, get a sample on the decision boundary and assign that to $x_{t+1}$. In this way, $x_{t+1}$ is a valid adversarial sample with a different prediction from $x$ but is closer to $x$ than $x_t$.
\end{enumerate}

In our experiments, we use $T = 40$ throughout. Other hyperparameters such as the radius of hemisphere and the number of local perturbations for normal direction estimation follows identically from the hemisphere implementation in \cite{ma2021finding}.

\section{Experiment Details}\label{sup:exp_details}

Code to reproduce all experiments is freely available on GitHub.\footnote{\url{https://github.com/Pehlevan-Group/rep-spectral}} All experiments in this paper were run on the Harvard FASRC Cannon cluster supported, using NVIDIA A100 40GB GPUs. Experiments reported in the main text required less than 240 GPU-hours. 

Our code base is adapted from various publicly available ones, including \texttt{TangentAttack} \footnote{\url{https://github.com/machanic/TangentAttack}} with an Apache V2 license for evaluating the adversarial distances, \texttt{FixRes}\citep{touvron2019FixRes} \footnote{\url{https://github.com/facebookresearch/FixRes}} with a CC BY-NC 4.0 license for multi-GPU training Resnet50, \texttt{SimCLR} \footnote{\url{https://github.com/sthalles/SimCLR}} with an MIT license for SimCLR model data loading and evaluation, \texttt{BarlowTwins} \footnote{\url{https://github.com/facebookresearch/barlowtwins/tree/main}} with an MIT license for BarlowTwins data loading and evaluation, \texttt{practical\_svd\_conv} \footnote{\url{https://github.com/WhiteTeaDragon/practical_svd_conv}} with a BSD-3-Clause license for efficient computations of top singular value of 2D convolution layers, and lastly \texttt{nn\_curvature} \footnote{\url{https://github.com/Pehlevan-Group/nn_curvature}} with an MIT license for volume element computations.

Our \texttt{Python} code also uses some common publicly available packages, including \texttt{NumPy} \citep{2020NumPy-Array} with a BSD license, \texttt{Matplotlib} \citep{Hunter:2007} with a BSD license, \texttt{Pandas} \citep{mckinney2010data} under a BSD license, \texttt{scikit-learn} \citep{scikit-learn} with a BSD license, \texttt{PyTorch} \citep{paszke2017automatic} under a modified BSD license, \texttt{tqdm} with an MIT license, and \texttt{toml} with an MIT license. 

Data used in the project include MNIST \cite{lecun2010mnist}, CIFAR-10 \cite{krizhevsky2009cifar}, ImageNet-1K \cite{deng2009imagenet}, Stanford Dog \citep{KhoslaYaoJayadevaprakashFeiFei_FGVC2011}, Oxford Flowers \citep{Nilsback08}, and MIT indoor \citep{quattoni2009recognizing}.

\subsection{Shallow Network}\label{sup:shallow}

To train shallow network for clean and noisy XOR with 8 hidden units, we train for 15000 epochs using full batch GD with 1.0 learning rate, 0.9 momentum, zero or 1e-4 weight decay, the same regularization strength ($\gamma = 0.0001$), and a regularization burnin period of 10500 epochs (70\% unregularized training + 30\% regularized training), all models could classify all 4 points correctly but demonstrate vastly different decision landscapes.

For shallow network, there are only two connection layers. The difference between \citet{yoshida2017spectral}'s \texttt{ll-spectral} regularization and our \texttt{rep-spectral} regularization is heuristically most pronounced in this case, since here \texttt{rep-spectral} is regularizing only half of the layers that \texttt{ll-spectral} is regularizing. 

In clean XOR training, by plotting the matrix 2-norm of the connection weights in each layer through respective regularization, we found a striking contrast between the two. By imposing the same regularization strength for \texttt{rep-spectral} and \texttt{ll-spectral}, \texttt{rep-spectral} is able to control effectively the weight norm of the feature layer with an expansion of weight norm in the last layer; \texttt{ll-spectral} regularization is the exact opposite that it fails to control the weight norm in the feature layer. This is shown in the \Cref{fig:weight_norm}.

\begin{figure}[t]
    \centering
    \includegraphics[width=5in]{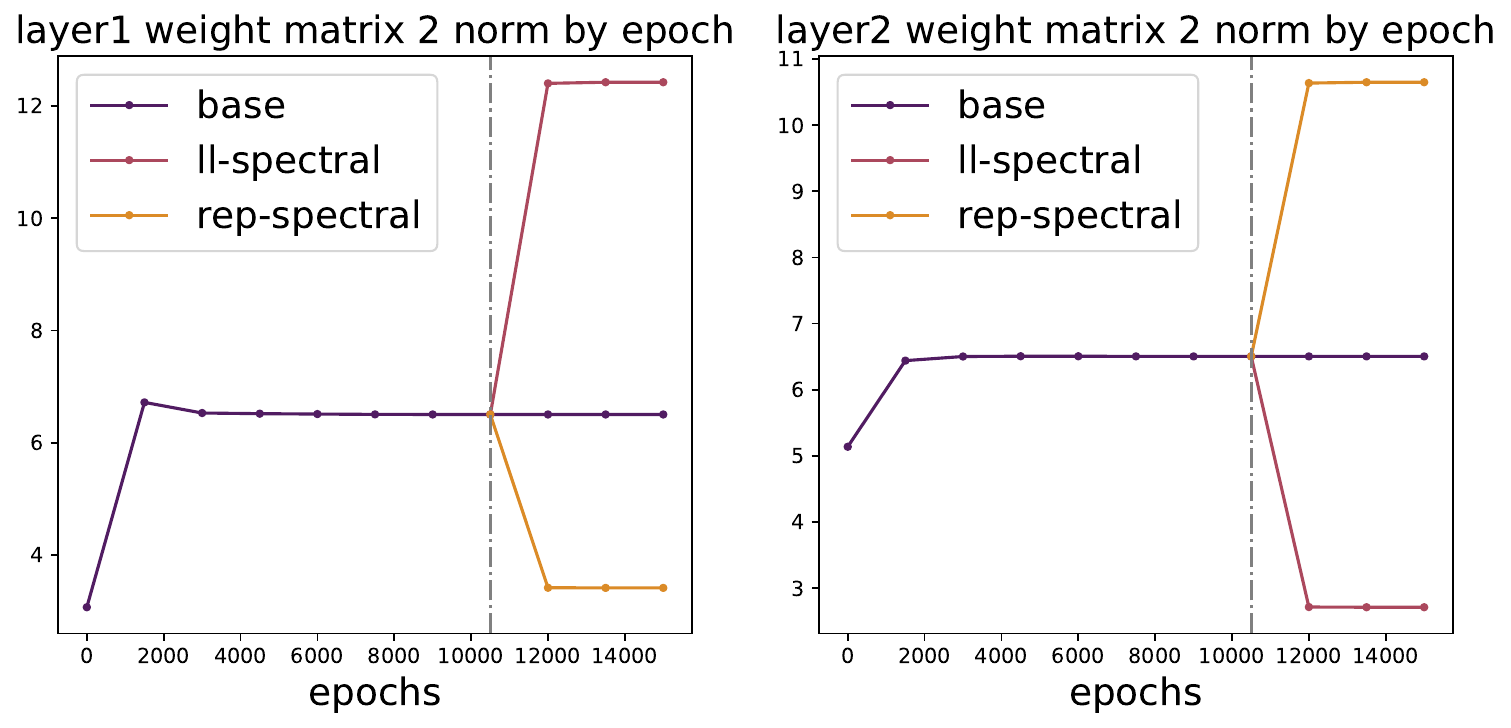}
    \caption{\texttt{ll-spectral} regularization and \texttt{rep-spectral} regularization weight norm change over training clean XOR data shown in \Cref{fig:xor_clean}. At epoch 10500 we turn on the adversarial regularization, before which there is only crossentropy loss.}
    \label{fig:weight_norm}
\end{figure}

\begin{figure}
        \centering
        \includegraphics[width=3in]{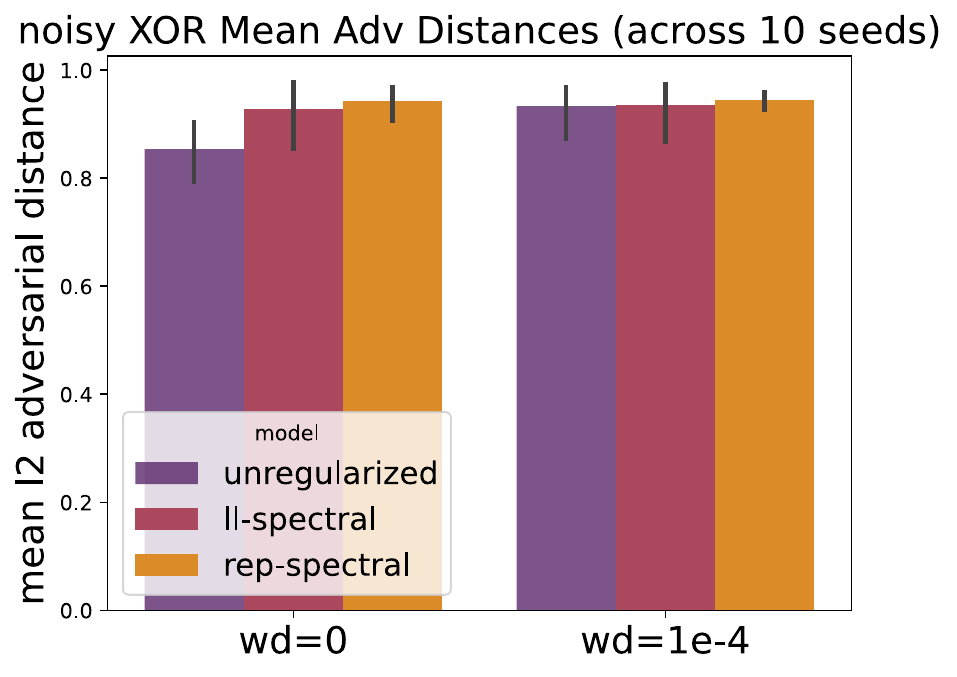}
        \label{figure:xor_noisy_seed}
    \caption{average $\Delta_x$ found by TA across 10 different seed for clean XOR (left) and noisy XOR (right) with and without weight decay of 1e-4. The error bar shows plus and minus one standard deviation across 10 seeds around the mean. This behavior is consistent across 10 different random seeds}
\end{figure}

In settings where the last layer is discarded and retrained such as transfer and self-supervised learning, \texttt{rep-spectral} method is preferred due to its attention to feature layer weight norm rather than just the last layer.

\begin{figure}[t]
    \centering
    \includegraphics[width=0.9\linewidth]{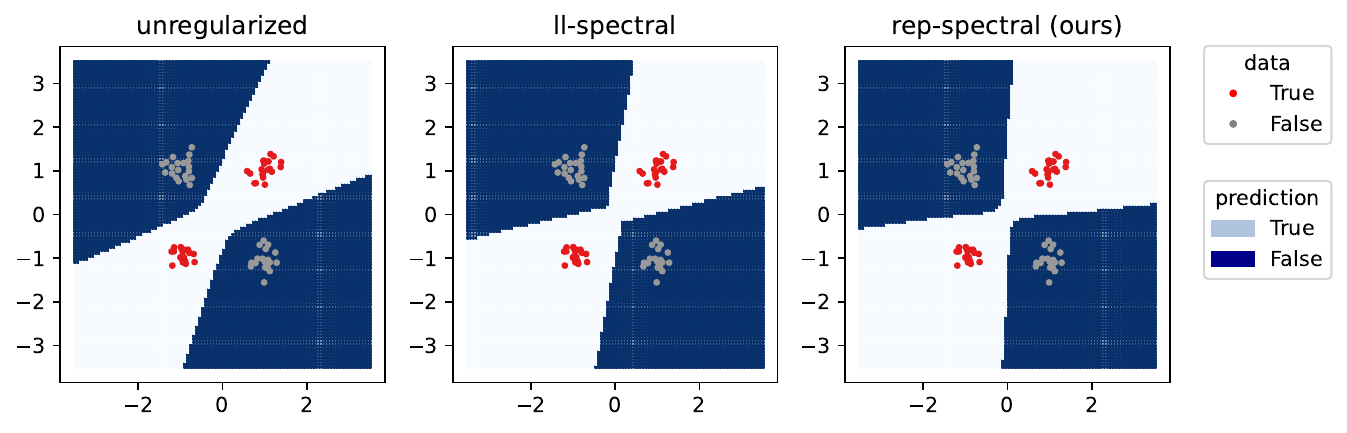}
    \caption{Noisy XOR decision boundaries}
    \label{fig:xor_noisy}
\end{figure}

To train shallow network with 784 hidden units on MNIST dataset, we train for 200 epochs using SGD with batch size 1024, learning rate 0.1, momentum 0.9, weight decay 1e-4, $\gamma = 0.001$, and a regularization burn-in period of 160 epochs (80\% unregularized training + 20\% regularized training). The update of the eigen directions are done through one power iteration every parameter update. 

To testify the robustness of representations, we retrain a new linear head using multilogisitc regression with a $l_2$ regularization parameter $1$ (i.e. the default setting when applying \texttt{sklearn.linear\_model.LogisticRegression}) trained on the feature representations of the same train samples. To make decision of a test sample, we get the feature representation through the feature map and then apply multilogistic regression trained to get decisions.

\begin{figure}[t]
    \centering
    \begin{subfigure}{\textwidth}
        \centering
        \includegraphics[width=\linewidth]{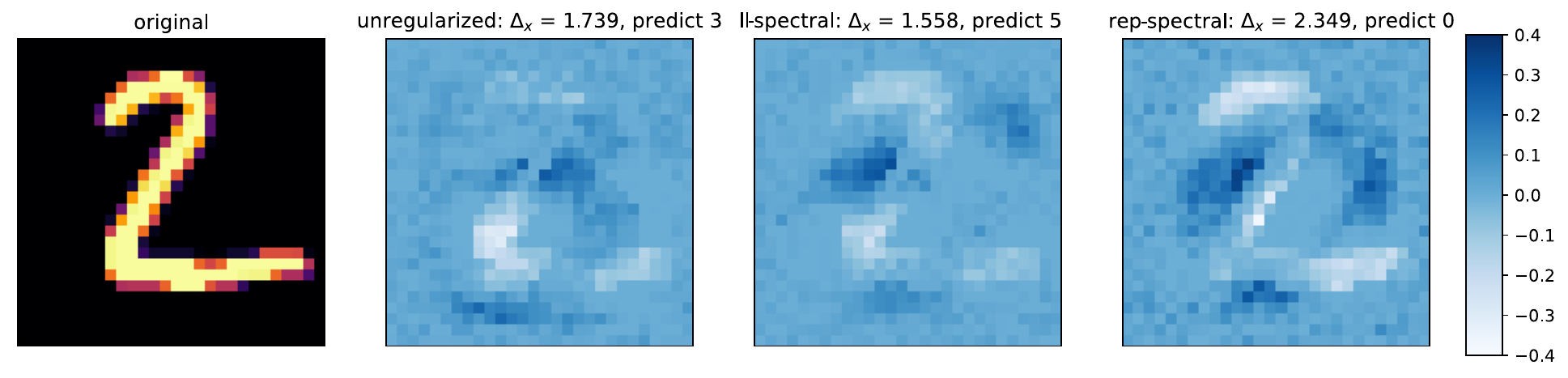}
    \end{subfigure} \\
    \begin{subfigure}{\textwidth}
        \centering
        \includegraphics[width=\linewidth]{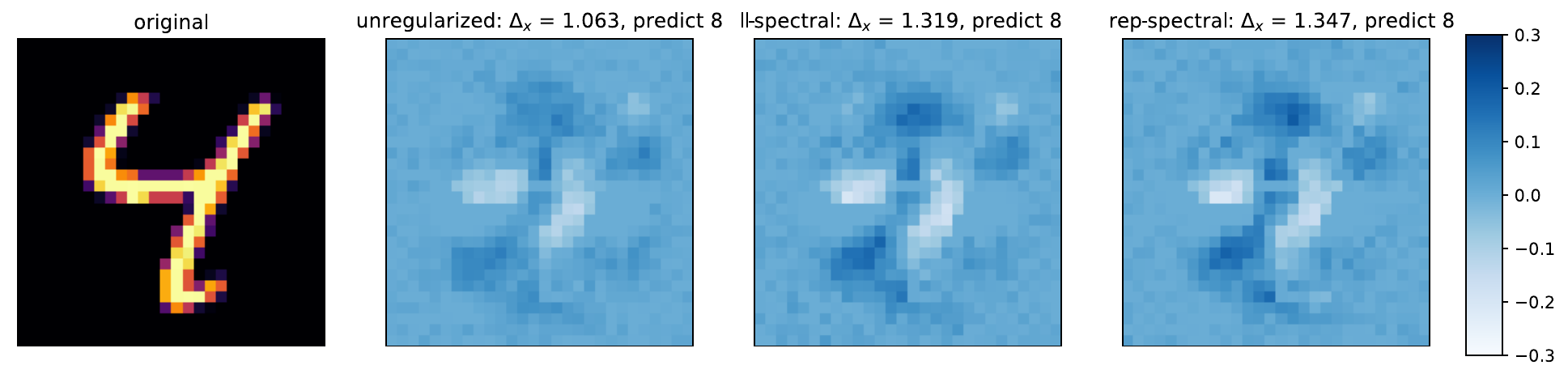}
    \end{subfigure} \\
    \begin{subfigure}{\textwidth}
        \centering
        \includegraphics[width=\linewidth]{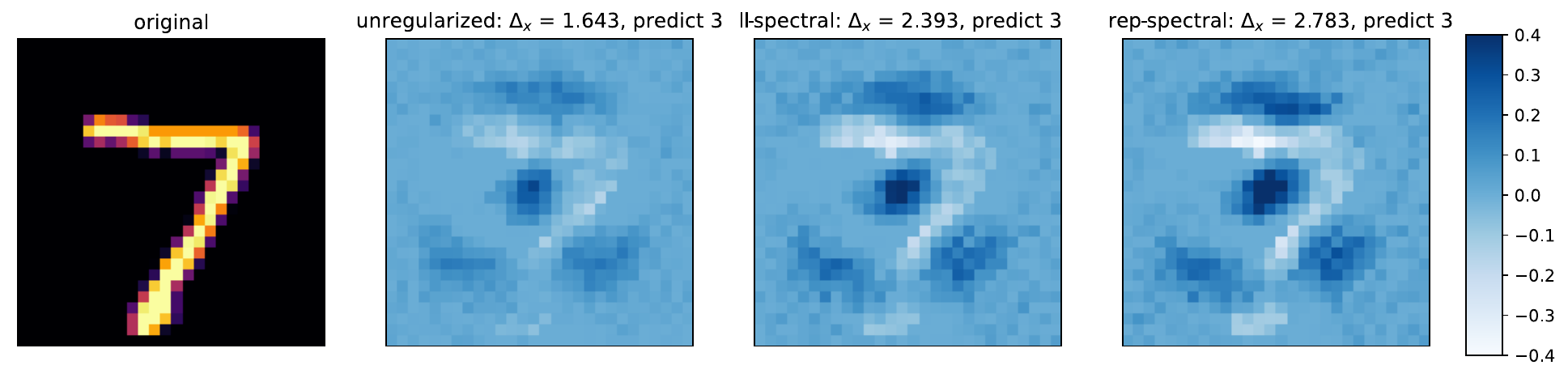}
    \end{subfigure} \\
    \caption{MNIST Test Images and the perturbation $\delta_x$ found by Tangent Attack \citep{ma2021finding} for three methods. The adversarial distances $\Delta_x$ and the adversarial predictions are reported on the titles. In general, our method encourages larger adversarial distances compared to other methods.}
    \label{fig:mnist-perturbation}
\end{figure}

\newpage

\subsection{Deep Convolutional Networks}\label{sup:deep_conv}

We train for 200 epochs using SGD with batch size 1024, learning rate 0.01, momentum 0.9, weight decay 1e-4, $\gamma = 0.01$, and an adversarial regularization burnin period of 160 epochs (80\% unregularized training + 20\% regularized training). To alleviate regularization computational cost, we only update the eigenvector direction using the power iteration once every 24 parameter updates. 

\begin{figure}
    \centering
    \begin{subfigure}{1.5in}
        \centering
        \includegraphics[width=\linewidth]{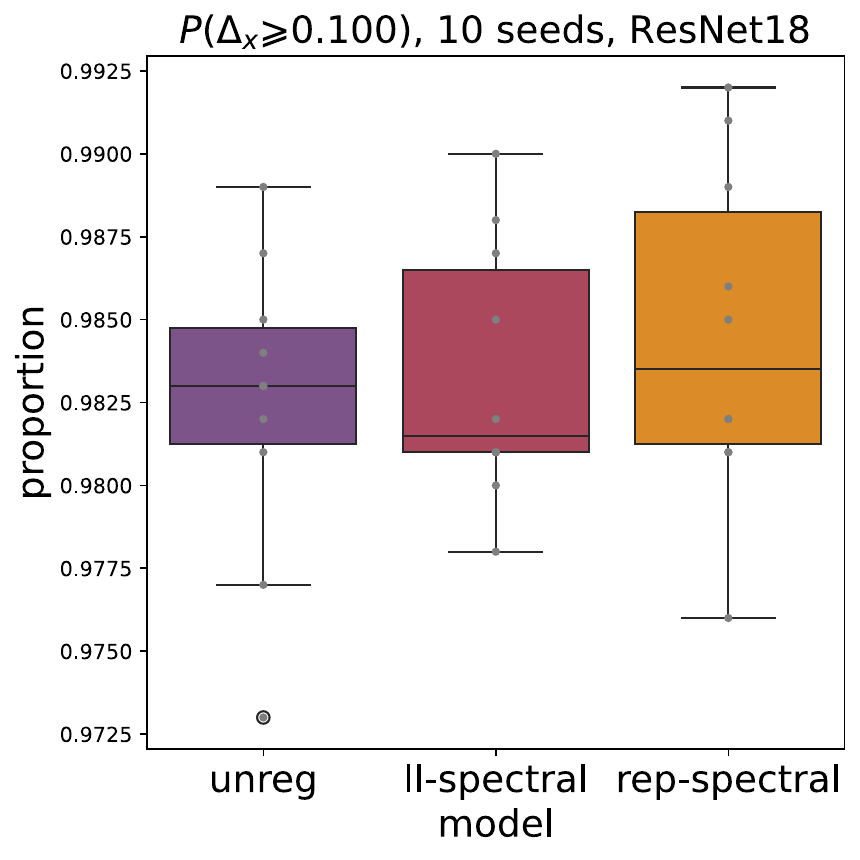}
    \end{subfigure} \hspace{0.5in}
    \begin{subfigure}{1.5in}
        \centering
        \includegraphics[width=\linewidth]{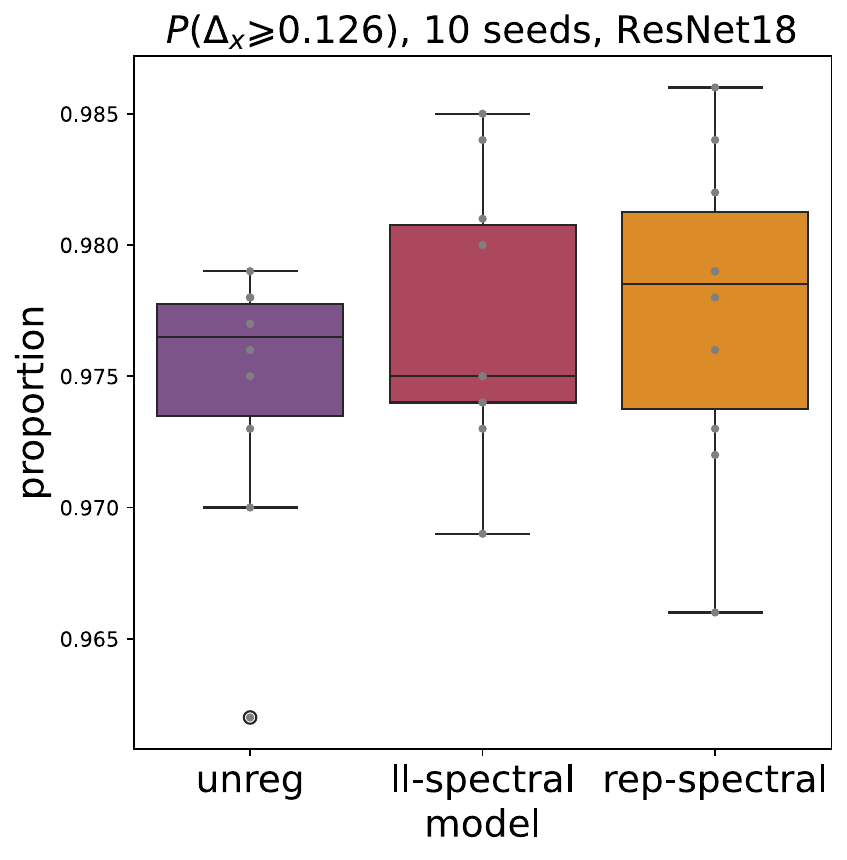}
    \end{subfigure} \hspace{0.5in}
    \begin{subfigure}{1.5in}
        \centering
        \includegraphics[width=\linewidth]{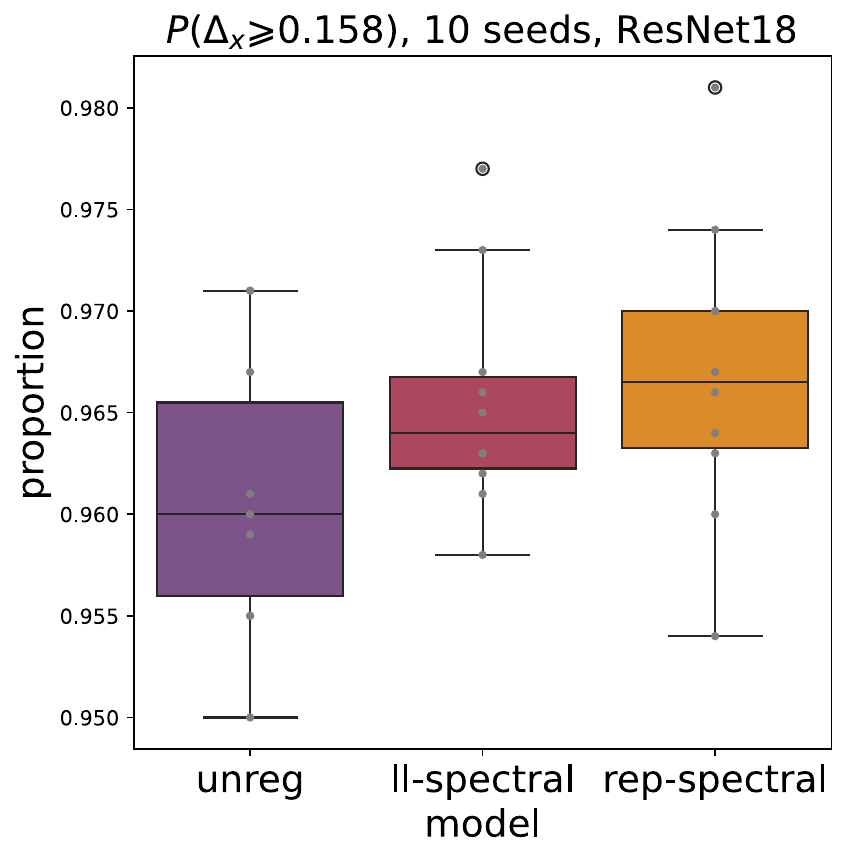}
    \end{subfigure} \\
    \begin{subfigure}{1.5in}
        \centering
        \includegraphics[width=\linewidth]{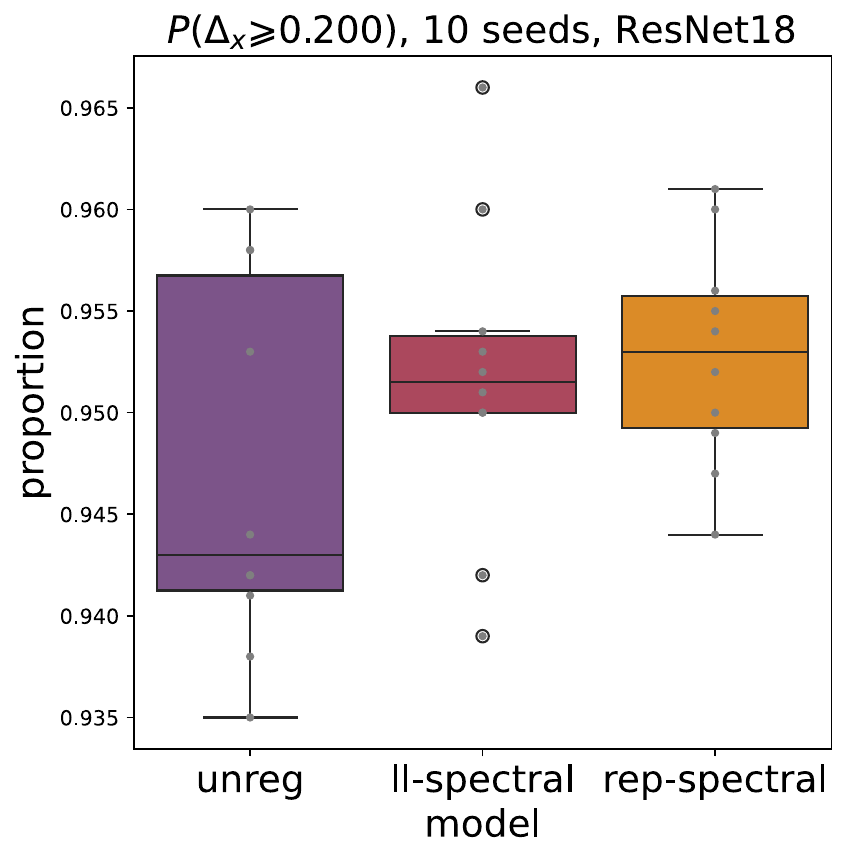}
    \end{subfigure} \hspace{0.5in}
    \begin{subfigure}{1.5in}
        \centering
        \includegraphics[width=\linewidth]{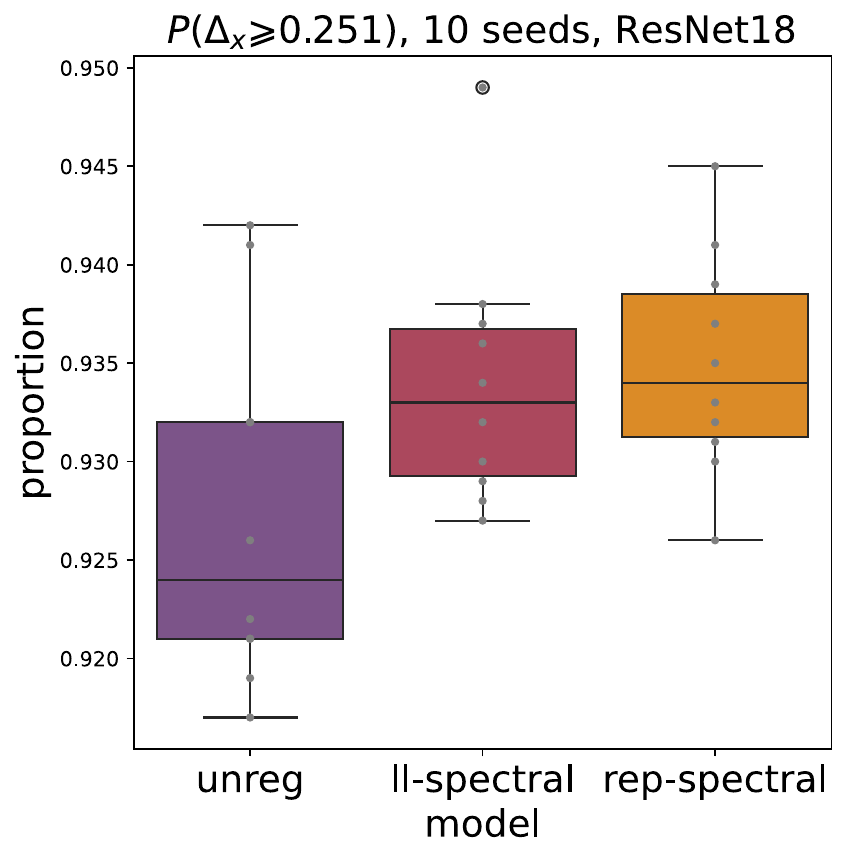}
    \end{subfigure} \hspace{0.5in}
    \begin{subfigure}{1.5in}
        \centering
        \includegraphics[width=\linewidth]{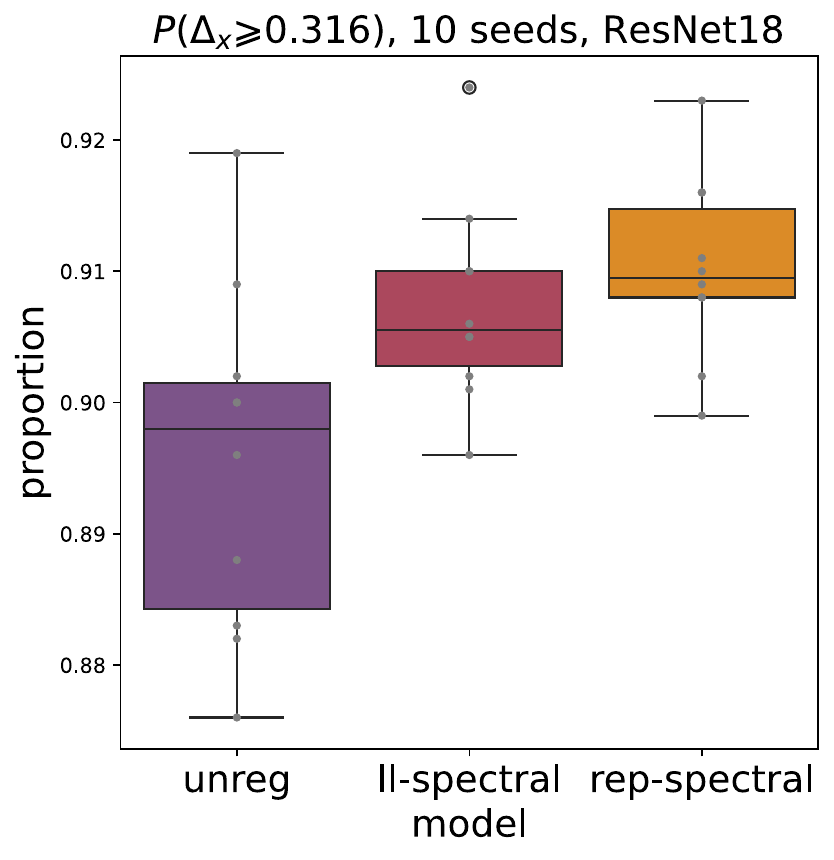}
    \end{subfigure} \\
    \begin{subfigure}{1.5in}
        \centering
        \includegraphics[width=\linewidth]{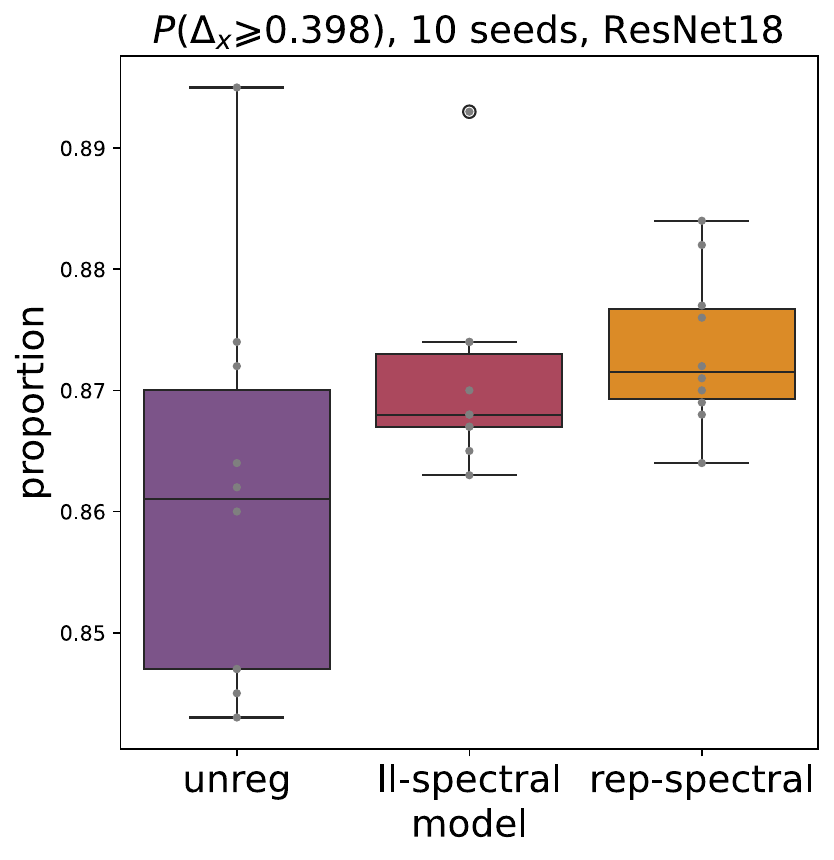}
    \end{subfigure} \hspace{0.5in}
    \begin{subfigure}{1.5in}
        \centering
        \includegraphics[width=\linewidth]{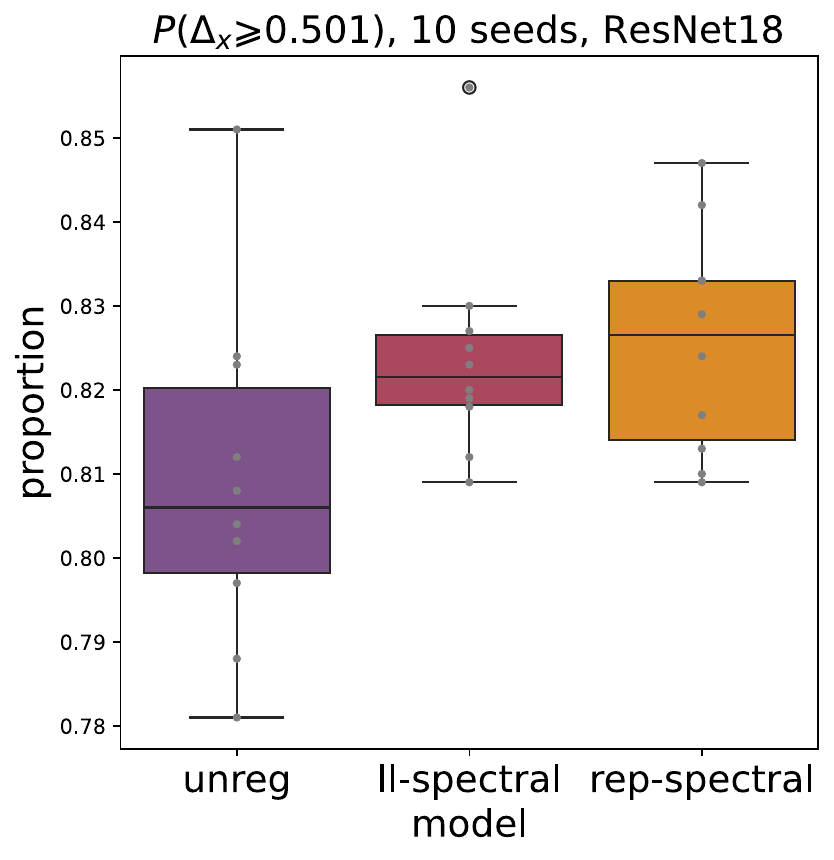}
    \end{subfigure} \hspace{0.5in}
    \begin{subfigure}{1.5in}
        \centering
        \includegraphics[width=\linewidth]{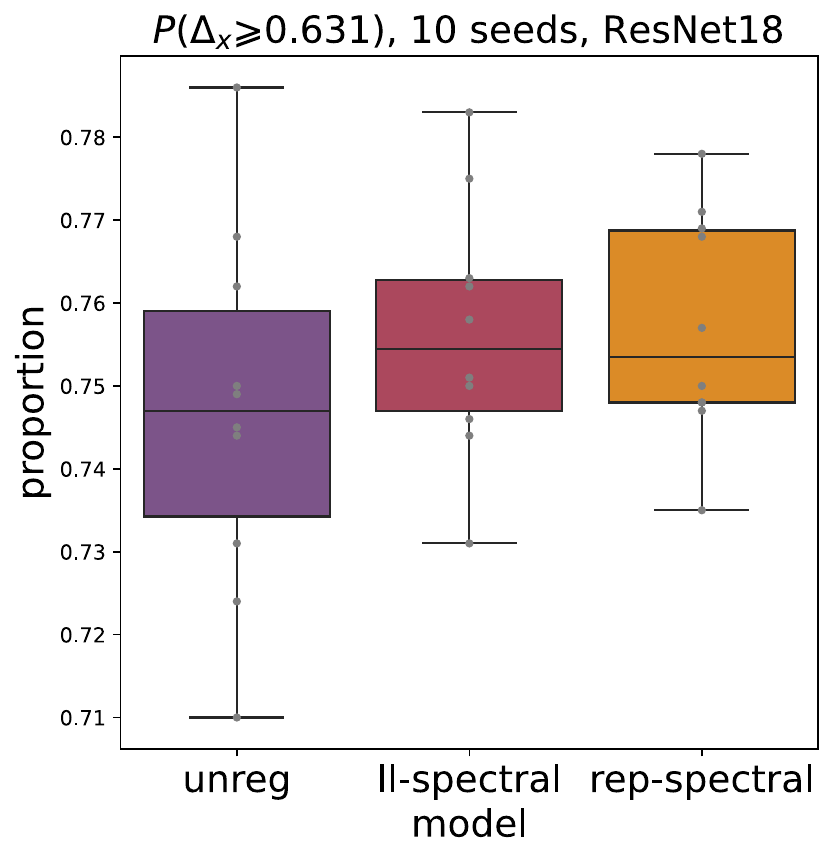}
    \end{subfigure} \\
    \begin{subfigure}{1.5in}
        \centering
        \includegraphics[width=\linewidth]{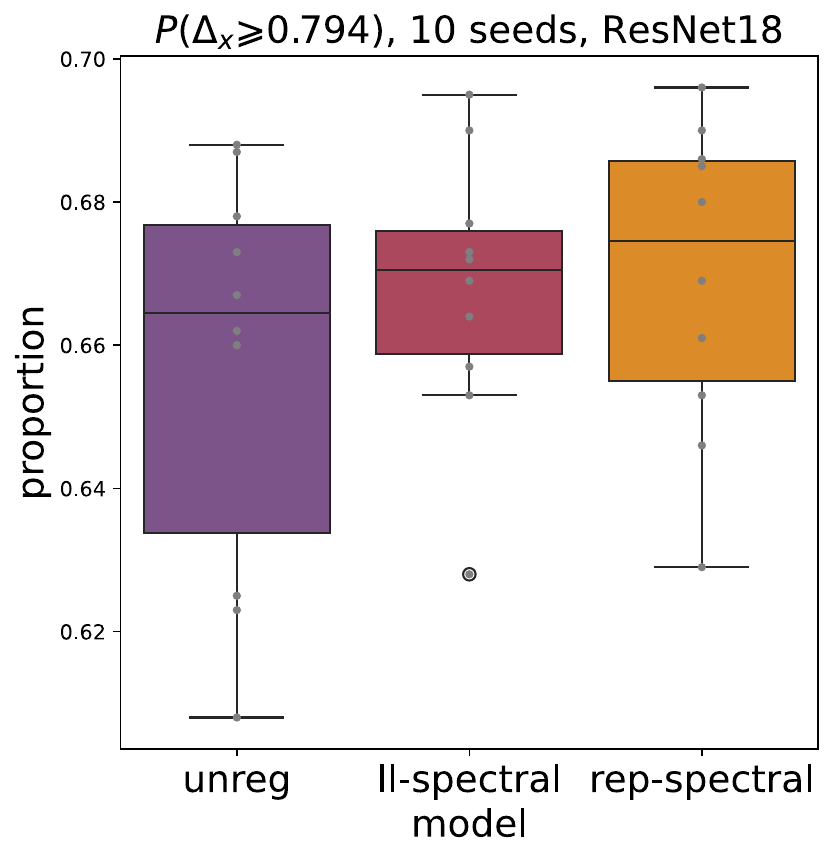}
    \end{subfigure} \hspace{0.5in}
    \begin{subfigure}{1.5in}
        \centering
        \includegraphics[width=\linewidth]{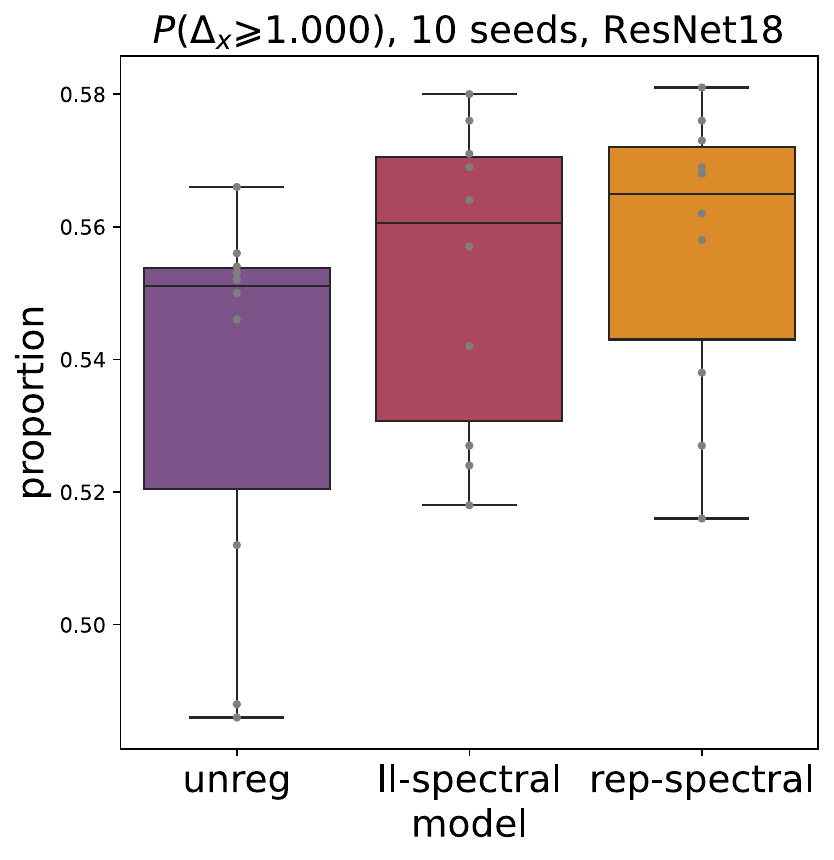}
    \end{subfigure} \hspace{0.5in}
    \begin{subfigure}{1.5in}
        \centering
        \includegraphics[width=\linewidth]{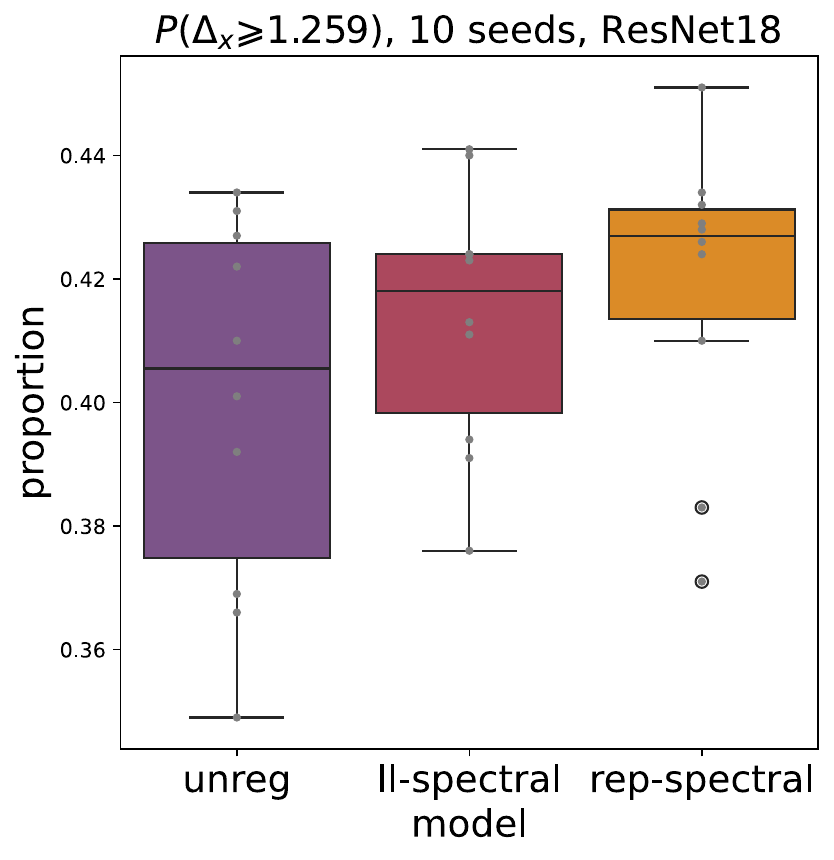}
    \end{subfigure} \\
    \begin{subfigure}{1.5in}
        \centering
        \includegraphics[width=\linewidth]{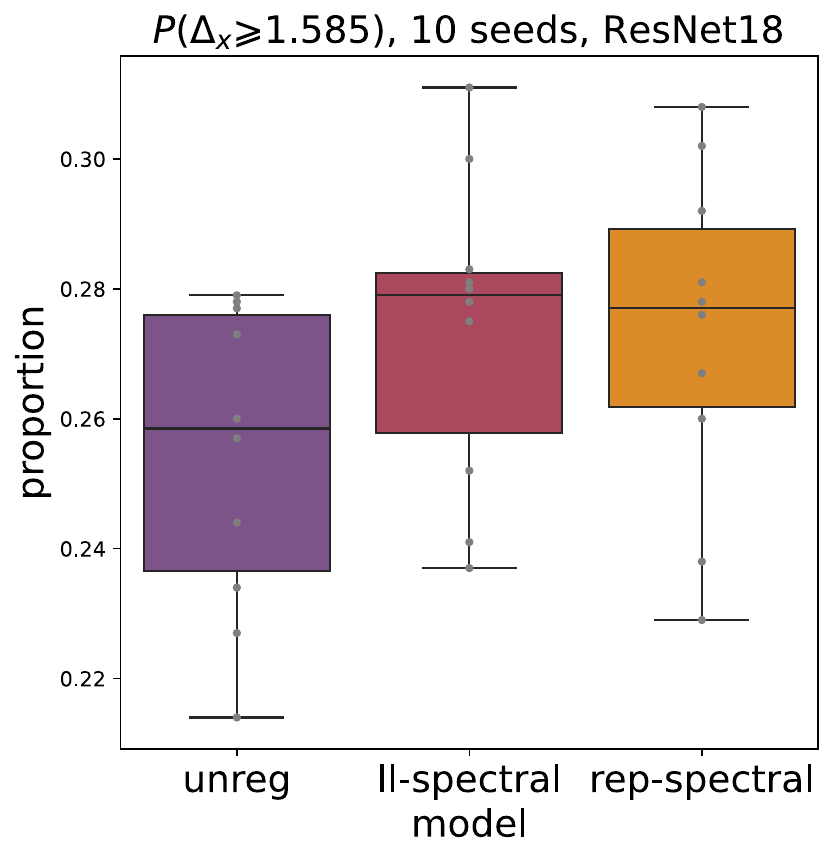}
    \end{subfigure} \hspace{0.5in}
    \begin{subfigure}{1.5in}
        \centering
        \includegraphics[width=\linewidth]{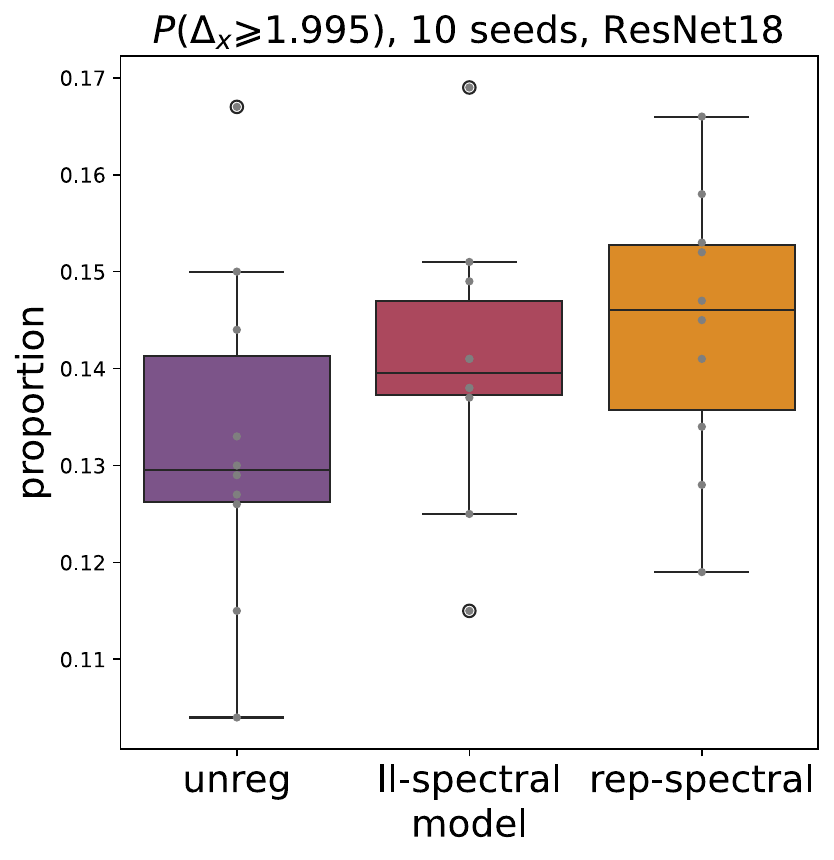}
    \end{subfigure} \hspace{0.5in}
    \begin{subfigure}{1.5in}
        \centering
        \includegraphics[width=\linewidth]{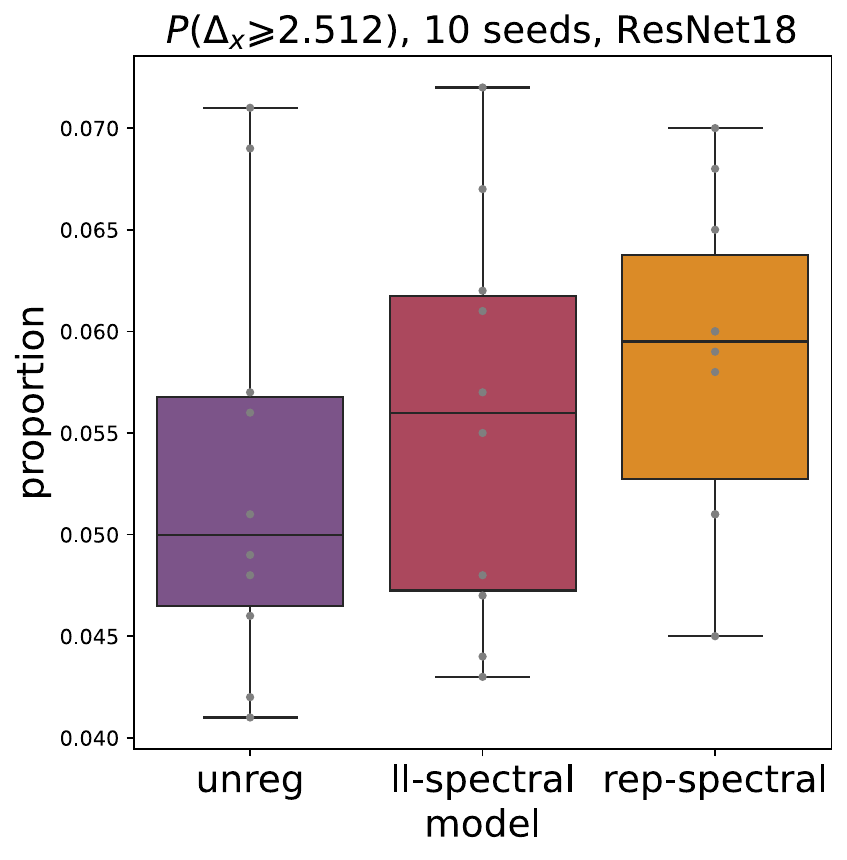}
    \end{subfigure} \\
    \caption{Proportion of 1000 randomly selected samples with adversarial distance larger than a threshold across 10 different seeds in training ResNet18s on CIFAR10, with the individual proportions colored in grey. 15 thresholds are displayed in increasing order. Higher proportion means better adversarial robustness. Across all thresholds, \texttt{rep-spectral} achieves the highest proportion.}
    \label{fig:resnet18_dist_prop}
\end{figure}

\subsection{Self-Supervised Learning}\label{sup:ssl}
In BarlowTwins training, using ResNet18, we train on CIFAR10 using SGD for 1000 epochs with learning rate 0.01, momentum 0.9, weight decay 1e-4, $\gamma = 0.01$, and a regularization burnin period of 900 epochs (90\% unregularized training + 10\% regularized training). Likewise, to alleviate training cost, we amortize power iteration to perform once every 24 parameter updates.

\subsection{Transfer Learning}\label{sup:tl}

In unregularized pretraining, we train ResNet50 for 120 epochs using SGD with learning rate 0.02, momentum 0.9, weight decay 1e-4, linear scheduling 30 epochs with decay 0.1. We distribute batchsize of 512 images across 4 GPUs for training. For regularized pretraining, starting at epoch 80 (67\% unregularized training + 33\% regularized training), we turn on \texttt{rep-spectral} regularization with $\gamma = 0.001$, with power update of top eigenvectors every 160 parameter updates to alleviate computation costs. 

We test these two models' performance at finetune time, in which we train on CIFAR10 scale to 224 by 224 images with the same dimensionality with ImagNet for 50 epochs using SGD with 0.01 learning rate on linear head and 0.002 learning rate for the backbone, each with a ConsineAnneling scheduling with max parameter 200 epochs for both backbone and linear head. We visualize the test accuracy for each model of each random seed in \Cref{fig:tl_supp_acc}. On average, finetuned model starting with regularized weights have 0.2\% drop in test accuracy than the ones staring with unregularized weights.

\begin{figure}[t]
    \centering
    \includegraphics[width=\textwidth]{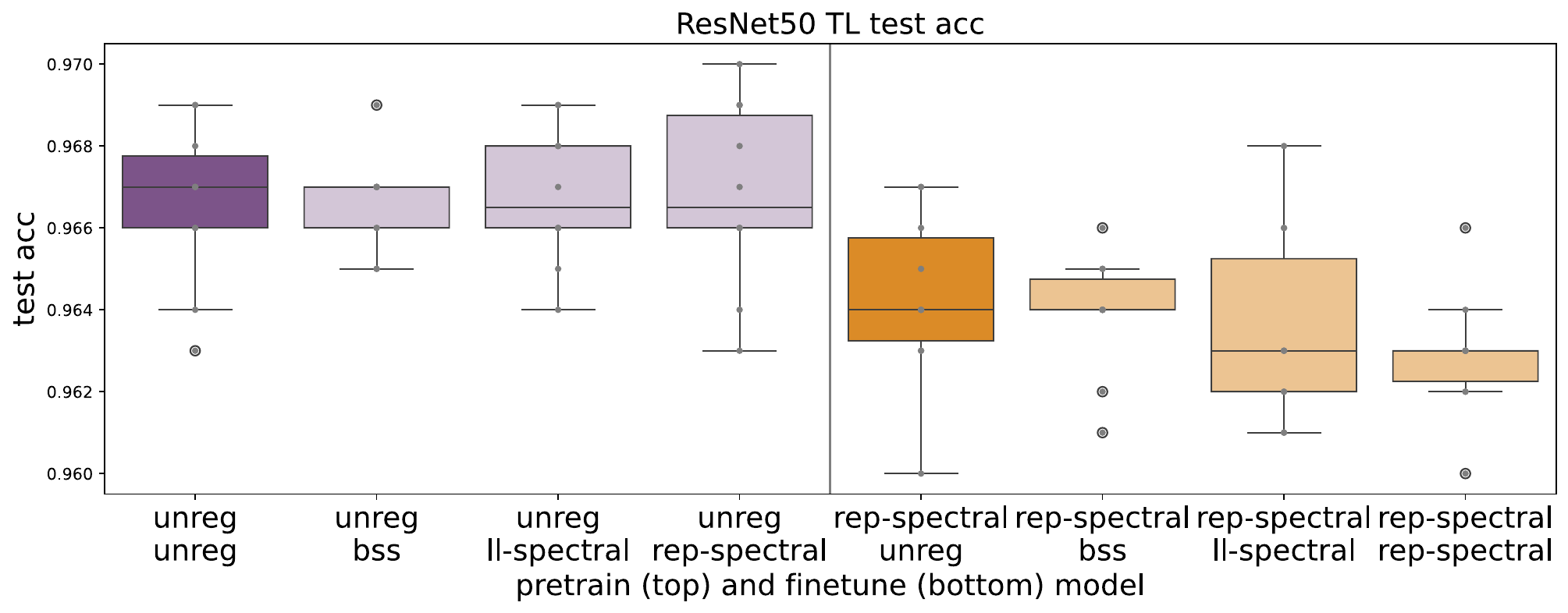}
    \caption{Test accuracy in transfer learning across multiple combinations of training schemes. The left half are finetuning from unregularized model, and the right half are finetuning from \texttt{rep-spectral} regularized model. All models reach an accuracy level of 96\%, but have different adversarial robustness level shown in \Cref{fig:tl_supp}.}
    \label{fig:tl_supp_acc}
\end{figure}

We also tested performance on more commonly used transfer learning target datasets: Stanford Dog \citep{KhoslaYaoJayadevaprakashFeiFei_FGVC2011} contains images of 120 different kinds of dogs; Oxford Flowers \citep{Nilsback08} contains images of 102 different types of flowers; MIT Indoor \citep{quattoni2009recognizing} contains indoor scenes of 67 different categories. Three dataset have the same input dimensionality with ImageNet1k. Similar as in finetuning on CIFAR10, we the model pretrained on ImageNet either with or without our regularizations, and in the finetunning stage we perform normal CrossEntropy optimization. As different target dataset have different inherent complexity, we finetune on the three models using different set of hyperparameters. To finetune on Stanford dog, we train for 50 epochs using SGD with a learning rate 0.005 for last layer and 0.001 for the feature layers; to finetune on Oxford Flowers we train for 1000 epochs using SGD with a learning rate 0.005 for last layer and 0.001 for the feature layer; and to finetune on MIT Indoor, we train for 100 epochs using SGD with a learning rate 0.01 and 0.002 for the feature layers. All dataset are trained with a btachsize of 64. With either pretrained weights with or without regularization, we repeat each training for 5 times and report the end test accuracy and adversarial distances in \Cref{fig:tl-others}. Although finetuned models starting from regularized weights have 2\% drop in test accuracy compared to the finetuned models starting with unregularized weights, we have roughly 10\% increase in mean adversarial distances across different dataset, suggesting that our feature map trained is not uniquely robust to finetuning on one particular dataset. 

\begin{figure}
    \centering
    \begin{subfigure}{2in}
        \centering
        \includegraphics[width=\linewidth]{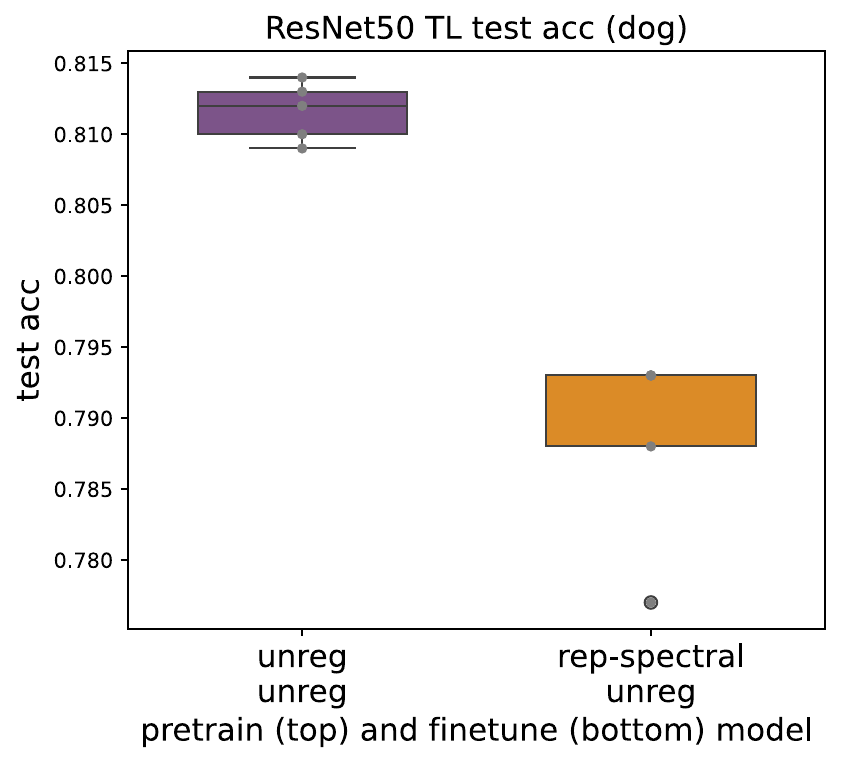}
    \end{subfigure} \hspace{0.5in}
    \begin{subfigure}{2in}
        \centering
        \includegraphics[width=\linewidth]{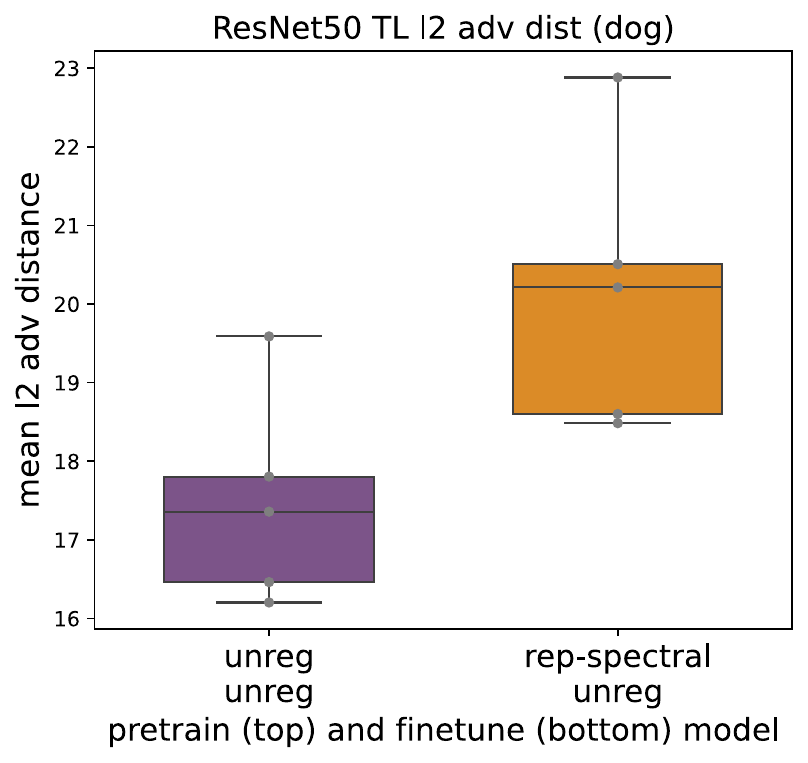}
    \end{subfigure} \\
    \begin{subfigure}{2in}
        \centering
        \includegraphics[width=\linewidth]{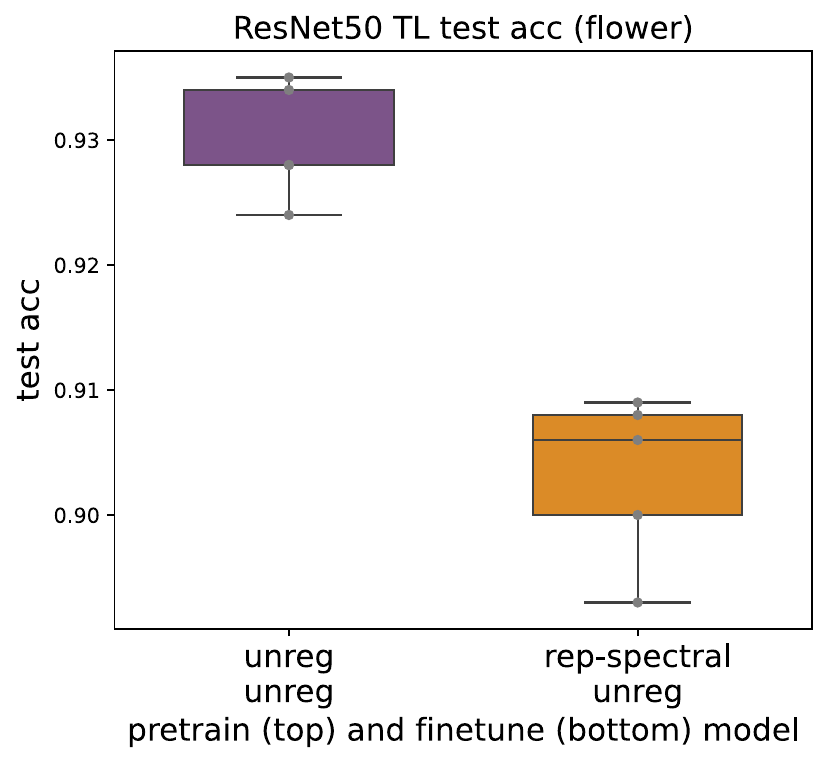}
    \end{subfigure} \hspace{0.5in}
    \begin{subfigure}{2in}
        \centering
        \includegraphics[width=\linewidth]{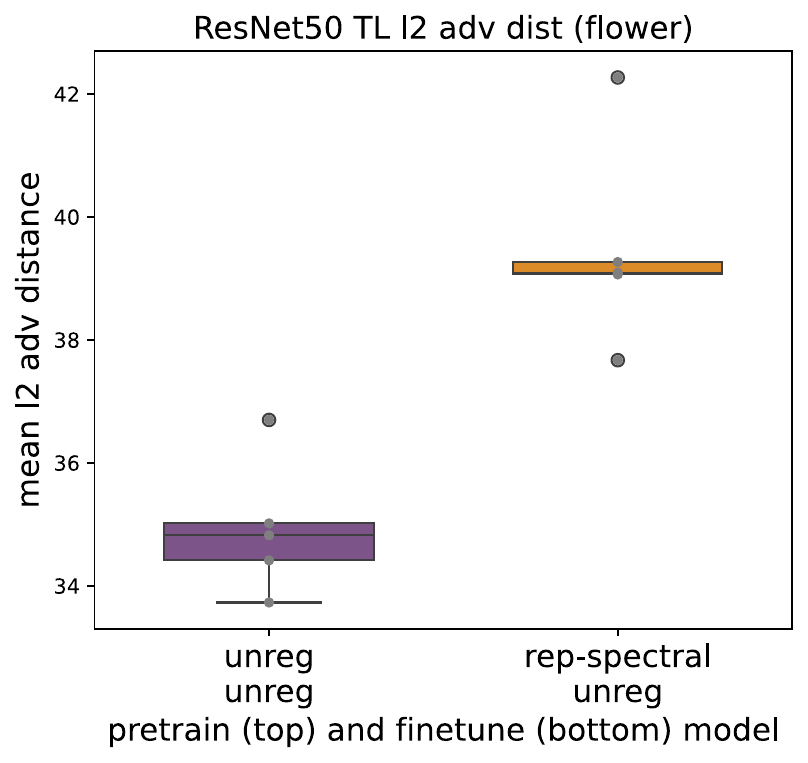}
    \end{subfigure} \\
    \begin{subfigure}{2in}
        \centering
        \includegraphics[width=\linewidth]{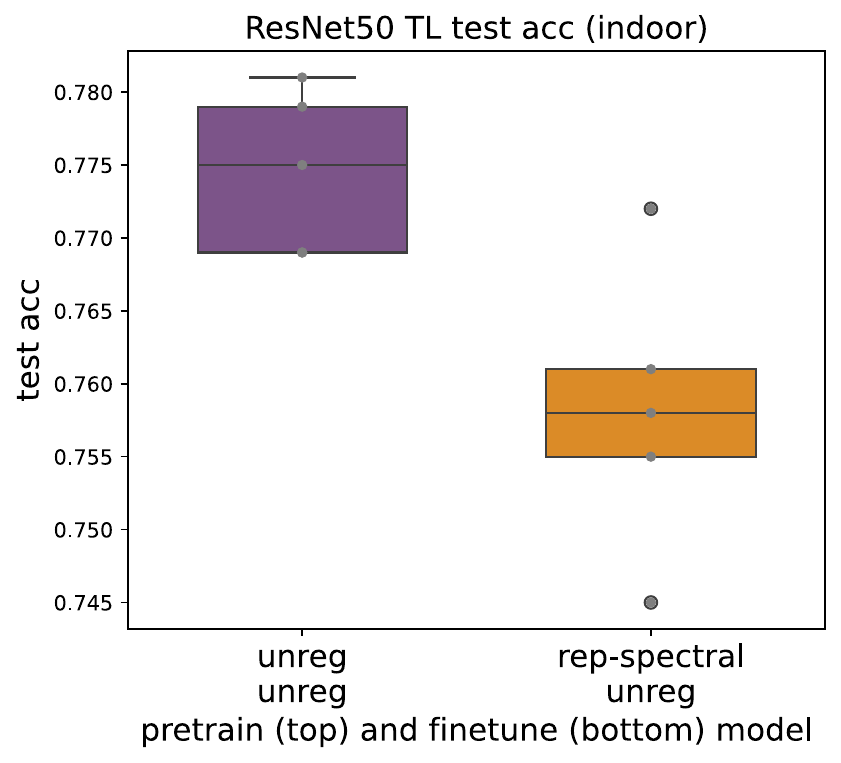}
    \end{subfigure} \hspace{0.5in}
    \begin{subfigure}{2in}
        \centering
        \includegraphics[width=\linewidth]{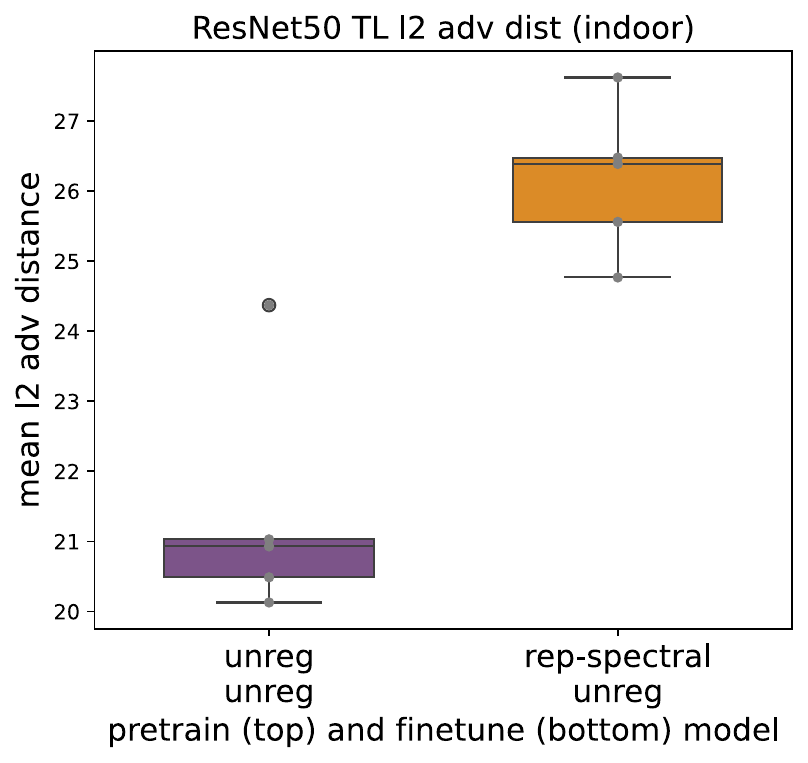}
    \end{subfigure} \\
    \caption{Test accuracy and mean adversarial distances from pretraining on ImageNet and finetuning on Stanford Dog (top row), Oxford Flowers (middle row), MIT Indoor (bottom row). The grey dots indicate the value from each of the 5 random seeds. Sacrificing at most 2\% of test accuracy, we obtain at least 10\% gain in the adversarial distances on average.}
    \label{fig:tl-others}
\end{figure}

\end{document}